\newtheorem{theorem}{Theorem}[section]
\newtheorem*{remark}{Remark}
\newtheorem{lemma}[theorem]{Lemma}
\newtheorem*{problem*}{Problem}
\newcommand\bighat[1]{%
\savestack{\tmpbox}{\stretchto{%
  \scaleto{%
    \scalerel*[\widthof{\ensuremath{#1}}]{\kern-.6pt\bigwedge\kern-.6pt}%
    {\rule[-\textheight/2]{1ex}{\textheight}}
  }{\textheight}%
}{0.5ex}}%
\stackon[1pt]{#1}{\tmpbox}%
}
\newcommand{\E}{\mathbb{E}}
\newcommand{\Var}{\mathrm{Var}}
\newcommand{\Cov}{\mathrm{Cov}}
\newcommand{\Prob}{\mathbb{P}}
\newcommand{\ksd}{\mathrm{KSD}}
\newcommand{\mmd}{\mathrm{MMD}}
\newcommand{\relmmd}{\textrm{RelMMD}}
\newcommand{\relksd}{\textrm{RelKSD}}
\newcommand{\relpsiksd}{\textrm{RelPSI-KSD}}
\newcommand{\relpsimmd}{\textrm{RelPSI-MMD}}
\newcommand{\relpsi}{\textrm{RelPSI}}
\newcommand{\relmul}{\textrm{RelMulti}}
\newcommand{\relmulksd}{\textrm{RelMulti-KSD}}
\newcommand{\relmulmmd}{\textrm{RelMulti-MMD}}
\DeclareMathOperator{\Tr}{tr}
\DeclarePairedDelimiter\floor{\lfloor}{\rfloor}
\newenvironment{proof-idea}{\noindent{\bf Proof Idea}\hspace*{1em}}{\qed\bigskip}
\newcommand{\wjsay}[1]{[\textcolor{red}{\textbf{WJ: }}\textcolor{red!60!black}{#1}]}
\newcommand{\jlsay}[1]{[\textcolor{blue}{\textbf{JL: }}\textcolor{blue!60!black}{#1}]}
\newcommand{\ourtitle}{Kernel Stein Tests for Multiple Model Comparison}
\title{\ourtitle}
\author{%
  Jen Ning Lim \\
  Max Planck Institute for Intelligent Systems\\
  {\small \url{jlim@tuebingen.mpg.de} }\\
  \And
  Makoto Yamada \\    
  Kyoto University, RIKEN AIP\\
  {\small \url{makoto.yamada@riken.jp} }\\
  \AND
  Bernhard Sch\"{o}lkopf \\
  Max Planck Institute for Intelligent Systems \\
  {\small \url{bs@tuebingen.mpg.de} }\\
  \And
  Wittawat Jitkrittum \\
  Max Planck Institute for Intelligent Systems \\
  {\small \url{wittawat@tuebingen.mpg.de} } \\
  %
}
\begin{document}

\maketitle

\begin{abstract}


We address the problem of non-parametric multiple model comparison: given $l$
candidate models, decide whether each candidate is as good as the best one(s) or worse than it.  We propose two statistical tests,
each controlling a different notion of decision errors. The first test,
building on the post selection inference framework, provably controls the
number of best models that are wrongly declared worse (false positive
rate). The second test is based on multiple correction, and controls the
proportion of the models declared worse but are in fact as good as the best
(false discovery rate). 
We prove that under appropriate conditions the first test can yield a higher true
positive rate than the second. Experimental results on toy and real (CelebA,
Chicago Crime data) problems show that the two tests have high true positive
rates with well-controlled error rates. By contrast, the naive approach of
choosing the model with the lowest score  without correction
leads to more false positives.
\end{abstract}
\section{Introduction}
Given a sample (a set of i.i.d.\ observations), and a set of $l$ candidate models $\mathcal{M}$, we address
the problem of non-parametric comparison of the relative fit of these candidate
models. The comparison is non-parametric in the sense that the class of allowed
candidate models is broad (mild assumptions on the models). All the given
candidate models may be wrong; that is, the true data
generating distribution may not be present in the candidate list. A widely used
approach is to pre-select a divergence measure which computes a distance
between a model and the sample (e.g., Fr\'{e}chet Inception Distance (FID,
\cite{heusel2017gans}), Kernel Inception Distance \cite{BinSutArbGre2018} or
others), and choose the model which gives the lowest estimate of the
divergence. An issue with this approach is that multiple equally good models
may give roughly the same estimate of the divergence, giving a wrong conclusion
of the best model due to noise from the sample (see Table~1 in
\cite{jitkrittum2018informative} for an example of a misleading conclusion
resulted from direct comparison of two FID estimates).

It was this issue that motivates the development of a non-parametric hypothesis test of relative
fit (\relmmd) between two candidate models \cite{bounliphone2015test}. The
test uses as its test statistic the difference of two estimates of Maximum Mean
Discrepancy (MMD, \cite{gretton2012kernel}), each measuring the distance between
the generated sample from each model and the observed sample. It is known that
if the kernel function used is characteristic
\cite{sriperumbudur2011universality,fukumizu2008kernel}, the population MMD
defines a metric on a large class of distributions. As a result, the magnitude
of the relative test statistic provides a measure of relative fit, allowing one
to decide a (significantly) better model when the statistic is sufficiently large. The 
key to avoiding the
previously mentioned issue of false detection is to appropriately choose the
threshold based on the null distribution, i.e., the distribution of the
statistic when the two models are equally good. 
An extension of \relmmd{} to a linear-time relative test was considered by Jitkrittum et al.\
\cite{jitkrittum2018informative}. 

A limitation of the relative tests of RelMMD and others \cite{bounliphone2015test,
jitkrittum2018informative} is that they are limited to the comparison of only $l=2$
candidate models. Indeed, taking the difference is inherently a function of two
quantities, and it is unclear how the previous relative tests can be applied when there are
$l>2$ candidate models.
We note that relative fit testing is different from goodness-of-fit
testing, which aims to decide whether a given model is the true distribution of a set of
observations. The latter task may be achieved with the Kernel Stein Discrepancy
(KSD) test \cite{chwialkowski2016kernel,liu2016kernelized,gorham2017measuring}
where, in the continuous case, the model is specified as a probability density function and needs only
be known up to the normalizer. A discrete analogue of the KSD test is studied in
\cite{YanLiuRaoNev2018}. When the model is represented by its sample,
goodness-of-fit testing reduces to two-sample testing, and may be carried out
with the MMD test \cite{gretton2012kernel}, its incomplete U-statistic
variants \cite{zaremba2013b,yamada2018post}, the ME and SCF tests
\cite{chwialkowski2015fast,JitSzaChwGre2016}, and related kernel-based tests
\cite{EriBacHar2008,FroLerReyoth2012}, among others. 
To reiterate, we stress that in general multiple model comparison differs from
multiple goodness-of-fit tests.
While the latter may be addressed with $l$
individual goodness-of-fit tests (one for each candidate), the former requires
comparing $l$ correlated estimates of the distances between each model and the
observed sample. The use of the observed sample in the $l$ estimates
is what creates the correlation which must be accounted for.

In the present work, we generalize the relative comparison
tests of RelMMD and others
\cite{bounliphone2015test, jitkrittum2018informative} to the case of $l>2$ models.
The key idea is to select the ``best'' model (reference model) that is the closest match to the
observed sample, and consider $l$ hypotheses. Each hypothesis
tests the relative fit of each candidate model with the reference
model, where the reference is chosen to be the model giving the lowest estimate
of the pre-chosen divergence measure (MMD or KSD). 
The total output thus consists of $l$ binary values where 1 (assign positive)
indicates that the corresponding model is significantly worse (higher
divergence to the sample) than the reference, and 0 indicates no evidence for
such claim (indecisive). We assume that the output is always 0 when the reference model is
compared to itself.
The need for a reference model greatly complicates the formulation of the null hypothesis 
(i.e., the null hypothesis is random due to
the noisy selection of the reference), an issue that is not present in the
multiple goodness-of-fit testing. 

We propose two non-parametric multiple model comparison tests (Section \ref{sec:psi}) following the
previously described scheme. Each test controls a different notion of decision
errors. The first test \relpsi{} builds on the post selection inference
framework and provably (Lemma \ref{lemma:fpr}) controls the number of best models that are wrongly
declared worse (FPR, false positive rate). The second test \relmul{} is based on multiple
correction, and controls the proportion of the models declared worse but are in
fact as good as the best (FDR, false discovery rate). 
In both tests, the underlying divergence measure can be
chosen to be either the Maximum Mean Discrepancy (MMD) allowing each model to
be represented by its sample, or the Kernel Stein Discrepancy (KSD) allowing
the comparison of any models taking the form of unnormalized, differentiable density functions. 

As theoretical contribution, the asymptotic null distribution of \relmulksd{}
(\relmul{} when the divergence measure is KSD)
is provided (Theorem \ref{theorem:ksd}), giving rise to a relative KSD test in
the case of $l=2$ models, as a special case. To our knowledge, this is the first time that a
KSD-based relative test for two models  is studied.  Further, we show (in
Theorem \ref{theorem:tpr}) that the \relpsi{} test can
yield a higher true positive rate (TPR) than the \relmul{} test, under appropriate conditions. 
Experiments (Section \ref{sec:experiments}) on toy
and real  (CelebA, Chicago Crime data) problems show that the two proposed
tests have high true positive rates with well-controlled respective error rates
-- FPR for \relpsi{} and FDR for \relmul{}. By contrast, the naive approach of
choosing the model with the lowest divergence without correction leads to more false positives.
\section{Background}
\label{sec:background}
Hypothesis testing of relative fit between $l=2$ candidate models, $P_1$ and
$P_2$, to the data generating distribution $R$ (unknown) can be performed by comparing the
relative magnitudes of a pre-chosen discrepancy measure which computes the distance from each of the two
models to the observed sample drawn from $R$. 
%
Our proposed
methods \relpsi{} and \relmul{} (described in Section \ref{sec:psi}) generalize this formulation 
based upon selective testing \cite{lee2016exact}, and multiple correction
\cite{benjamini2001control}, respectively. 
Underlying these new tests is a base
discrepancy measure $D$ for measuring the distance between each candidate model
to the observed sample. In this section, we review Maximum Mean Discrepancy
(MMD, \cite{gretton2012kernel}) and Kernel Stein Discrepancy (KSD,
        \cite{chwialkowski2016kernel,liu2016kernelized}), which will be used as
        a base discrepancy measure in our proposed tests in Section
    \ref{sec:psi}. 


\textbf{Reproducing kernel Hilbert space}
Given a positive definite kernel $k:\mathcal{X} \times \mathcal{X} \rightarrow
\mathbb{R}$,  it is known that there exists a feature map $\phi\colon
\mathcal{X} \to \mathcal{H}$ and a reproducing kernel Hilbert Space (RKHS)
$\mathcal{H}_k=\mathcal{H}$ 
associated with the kernel $k$ \cite{BerTho2004}. The kernel $k$ is symmetric and is
a reproducing kernel on $\mathcal{H}$ in the sense that $k(x,y)=\left< \phi(x),
\phi(y) \right>_\mathcal{H}$ for all $x, y \in \mathcal{X}$ where
$\left<\cdot,\cdot \right>_\mathcal{H} = \left<\cdot,\cdot \right> $ denotes
the inner product. It follows from this reproducing property that for any $f
\in \mathcal{H}$, $\left< f, \phi(x) \right> = f(x)$ for all $x \in
\mathcal{X}$. We interchangeably write $k(x, \cdot)$ and $\phi(x)$.

\textbf{Maximum Mean Discrepancy}
Given a distribution $P$ and a positive definite kernel $k$, the mean embedding
of $P$, denoted by $\mu_P$, is defined as $\mu_P = \mathbb{E}_{x \sim
P}[k(x,\cdot)]$ \cite{smola2007hilbert} (exists if $\mathbb{E}_{x \sim P}[\sqrt{k(x,x)}] < \infty$).
Given two distributions $P$ and $R$, the Maximum Mean Discrepancy (MMD,
\cite{gretton2012kernel}) is a pseudometric defined as $\mathrm{MMD}(P,R) := || \mu_P -
\mu_R ||_\mathcal{H}$ and $\| f \|^2_\mathcal{H} = \left< f, f
\right>_\mathcal{H}$ for any $f \in \mathcal{H}$.
If the kernel $k$ is characteristic 
\cite{sriperumbudur2011universality,fukumizu2008kernel}, then $\mathrm{MMD}$ defines a metric. An important implication is that $\mathrm{MMD}^2(P,R)= 0 \iff P=R$.
Examples of characteristic kernels include the Gaussian and Inverse
multiquadric (IMQ) kernels \cite{steinwart2001influence,gorham2017measuring}.
It was shown in \cite{gretton2012kernel} that $\mathrm{MMD}^2$ can be written as $\mmd^2(P,R) =
\mathbb{E}_{z,z'\sim P\times R}[h(z,z')]$ where $h(z,z') =
k(x,x')+k(y,y')-k(x,y')-k(x',y)$ and $z:=(x,y), z':=(x',y')$ are independent
copies. This form admits an unbiased estimator $\bighat{\mmd}^2_u=
\frac{1}{n(n-1)}\sum_{i\neq j}h(z_i,z_j)$ where $z_i := (x_i, y_i)$, $\{ x_i
\}_{i=1}^n \stackrel{i.i.d.}{\sim} P, \{ y_i \}_{i=1}^n \stackrel{i.i.d.}{\sim}
Q$ and is a second-order U-statistic \cite{gretton2012kernel}. Gretton
et al. \cite[Section 6]{gretton2012kernel} proposed a linear-time estimator 
$\widehat{\mathrm{MMD}}^2_{l}=\frac{2}{n}\sum_{i=1}^{n/2}h(z_{2i}, z_{2i-1})$ which
can be shown to be asymptotically normally distributed both when $P=R$ and $P\neq R$ 
\cite[Corollary 16]{gretton2012kernel}.
Notice that the MMD can be estimated solely on the basis of two independent
samples from the two distributions.


\textbf{Kernel Stein Discrepancy} 
The Kernel Stein Discrepancy (KSD,
\cite{liu2016kernelized,chwialkowski2016kernel}) is a discrepancy measure
between an unnormalized, differentiable density function $p$ and a sample,
originally proposed for goodness-of-fit testing. Let $P,R$ be two distributions
that have continuously differentiable density functions $p,r$ respectively. Let
$\boldsymbol{s}_p(x) := \nabla_x \log p(x)$ (a column vector) be the score function of $p$
defined on its support. Let $k$ be a positive definite kernel with continuous
second-order derivatives.  Following
\cite{liu2016kernelized,jitkrittum2017linear}, define 
${\boldsymbol{\xi}}_{p}(x,\cdot):= \boldsymbol{s}_p(x) k(x,\cdot)+ \nabla_x k(x,\cdot)$ 
which is an element in
$\mathcal{H}^d$ that has an inner product defined as
$\langle f,g \rangle_{\mathcal{H}^d}=\sum_{i=1}^d \langle f_i,g_i \rangle_{\mathcal{H}}$. 
The Kernel Stein Discrepancy is defined as 
$\mathrm{KSD}^2(P, R) := \| \mathbb{E}_{x\sim R} \boldsymbol{\xi}_p(x, \cdot)
\|^2_{\mathcal{H}^d}$. Under appropriate boundary conditions on $p$ and
conditions on the kernel $k$ \cite{chwialkowski2016kernel,liu2016kernelized},
it is known that $\mathrm{KSD}^2(P,R) = 0 \iff P=R$. Similarly to the case of
MMD, the squared KSD can be written as 
$\mathrm{KSD}^2(P,R) =
\mathbb{E}_{x,x'\sim R}[u_p(x,x')]$ where 
$u_p(x,x') 
= \left< \boldsymbol{\xi}_p(x, \cdot), \boldsymbol{\xi}_p(x', \cdot) \right>_{\mathcal{H}^d}
= \mathbf{s}_p(x)^\top \mathbf{s}_p(x')k(x,x') 
 +\mathbf{s}_p(x)^\top\nabla_{x'}k(x,x')
 + \nabla_xk(x,x')^\top \mathbf{s}_p(x')+\Tr[\nabla_{x,x'}k(x,x')]$.
The KSD has an unbiased estimator 
$\bighat{\ksd}^2_u(P,R) = \frac{1}{n(n-1)}\sum_{i\neq j}
u_p(x_i,x_j)$ where $\{ x_i \}_{i=1}^n \stackrel{i.i.d.}{\sim} R$, which is also
a second-order U-statistic. Like the MMD, a linear-time estimator of $\textrm{KSD}^2$
is  given by
$\widehat{\mathrm{KSD}}^2_{l}=\frac{2}{\floor{n}}\sum_{i=1}^{\floor{n}/2}u_p(x_{2i},
x_{2i-1})$. It is known that $\sqrt{n}\widehat{\mathrm{KSD}}^2_{l}$ is
asymptotically normally distributed \cite{liu2016kernelized}.
In contrast to the MMD estimator, the KSD
estimator requires only samples from $R$, and $P$ is represented by its
score function $\nabla_x \log p(x)$ which is independent of the normalizing
constant. 
As shown in the previous work, an explicit probability
density function is far more representative of the distribution than its
sample counterpart \cite{jitkrittum2017linear,jitkrittum2018informative}. KSD is suitable when the candidate models are given
explicitly (i.e., known density functions), whereas MMD is more suitable when
the candidate models are implicit and better represented by their samples.
 

 \vspace{-3mm}

\section{Proposal: non-parametric multiple model comparison}
\vspace{-2mm}
\label{section:proposal}
In this section, we propose two new tests: \relmul{} (Section \ref{sec:relmul})
and \relpsi{} (Section \ref{sec:relpsi}), each controlling a different notion
of decision errors.  
\begin{problem*}[Multiple Model Comparison]
    Suppose we have $l$ models denoted as $\mathcal{M} = \{P_i\}_{i=1}^l$, 
which we
can either: draw a  sample (a collection of $n$ i.i.d. realizations) from or
have access to their unnormalized log density $\log p(x)$. 
%
The goal is to decide whether each candidate $P_i$ is worse than the best
one(s) in the candidate list (assign positive), or indecisive (assign zero).
The best model is defined to be $P_J$  such that  $J \in \arg \min_{j
\in \{1,\ldots, l\}} D(P_j, R)$ where $D$ is a base discrepancy measure (see
Section \ref{sec:background}), and $R$ is the data generating distribution
(unknown). 
\end{problem*}
Throughout this work, we assume that all candidate models $P_1, \ldots, P_l$
and the unknown data generating distribution $R$ have a common support
$\mathcal{X} \subseteq \mathbb{R}^d$, and are all distinct.
The task can be seen as a multiple binary decision making task, where a model $P \in \mathcal{M}$
is considered negative if it is as good as the best one, i.e., $D(P, R) = D(P_J, R)$ where $J \in \arg \min_j
D(P_j, R)$. The index set of all models which are as good as the best one is
denoted by $\mathcal{I}_{-} := \{ i \mid D(P_i, R) = \min_{j=1,\ldots, l}
D(P_j, R)\}$. 
When $|\mathcal{I}_-| > 1$, $J$ is an arbitrary index
in $\mathcal{I}_-$.  Likewise, a model is considered positive if it is worse
than the best model. Formally, the index set of all positive models is denoted
by
$\mathcal{I}_+ := \{ i \mid D(P_i, R) > D(P_J, R) \}$. It follows that 
$\mathcal{I}_- \cap \mathcal{I}_+ = \emptyset$ and 
$\mathcal{I}_- \cup \mathcal{I}_+ = \mathcal{I} := \{ 1,\ldots, l\}$. The problem
can be equivalently stated as the task of deciding whether the index for each
model belongs to $\mathcal{I}_+$ (assign positive). The total output thus
consists of $l$ binary values where 1 (assign positive) indicates that the
corresponding model is significantly worse (higher divergence to the sample)
than the best, and 0 indicates no evidence for such claim (indecisive). 
In practice, there are two difficulties: firstly, $R$ can only be observed
through a sample $X_n := \{ x_i \}_{i=1}^n \stackrel{i.i.d.}{\sim} R$ so that
$D(P_i, R)$ has to be estimated by $\hat{D}(P_i, R)$ computed on the
sample; secondly, the index $J$ of the reference model (the best model)  is
unknown. 
In our work, we consider the complete, and linear-time U-statistic
estimators of \textrm{MMD} or \textrm{KSD} as the discrepancy $\hat{D}$ (see Section \ref{sec:background}). 

We note that the main assumption on the discrepancy $\hat{D}$ is that
$\sqrt{n}(\hat{D}(P_i, R)  - \hat{D}(P_j, R)) \xrightarrow{d}
\mathcal{N}(\mu,\sigma^2)$ for any $P_i, P_j \in \mathcal{M}$ and $i\neq j$. If this holds,
our proposal can be easily amended to accommodate a new discrepancy measure $D$ beyond MMD or KSD. 
Examples
include (but not limited to) the Unnormalized Mean Embedding
\cite{chwialkowski2015fast, jitkrittum2018informative}, Finite Set Stein
Discrepancy \cite{jitkrittum2017linear, jitkrittum2018informative}, or other
estimators such as the block \cite{zaremba2013b} and incomplete estimator
\cite{yamada2018post}. 
\subsection{Selecting a reference candidate model}
\label{sec:selection}
In both proposed tests, the algorithms start by first choosing a model
$P_{\hat{J}} \in \mathcal{M}$ as the reference model where $\hat{J} \in
\arg \min_{j \in \mathcal{I}} \hat{D}(P_j, R)$ is a random variable.
The algorithms then proceed to test the relative fit of each model $P_i$ for $i\neq \hat{J}$ and
determine if it is statistically worse than the selected reference $P_{\hat{J}}$. 
The null and the alternative hypotheses for the $i^{th}$ candidate model can be written as 
%
%
\begin{align*}
     H^{\hat{J}}_{0,i}\colon &D(P_i,R) -  D(P_{\hat{J}},R) \le  0
      \ |\ P_{\hat{J}}  \text{ is selected as the reference,} \\
    H^{\hat{J}}_{1,i}\colon & D(P_i,R) - D(P_{\hat{J}},R) > 0 
      \ |\ P_{\hat{J}}  \text{ is selected as the reference.}
\end{align*}
These hypotheses are conditional on the selection event (i.e., selecting
$\hat{J}$ as the reference index).
For each of the $l$ null hypotheses, the test uses as its statistics
$\bm{\eta}^\top\bm{z} :=
\sqrt{n}[\hat{D}(P_i,R)-\hat{D}(P_{\hat{J}},R)]$ where 
$\bm\eta=
[0,\cdots,\underbrace{-1}_{\hat{J}},\cdots,\underbrace{1}_i,\cdots]^\top$ and
$\bm{z}=\sqrt{n} [\hat{D}(P_1,R),  \cdots,\hat{D}(P_l,R)]^\top$.
%
The distribution of the test statistic
$\bm{\eta}^\top\bm{z}$ depends on the choice of estimator for the
discrepancy measure $\hat{D}$ which can be $\bighat{\mmd}^2_u$ or
$\bighat{\ksd}^2_u$. Define $\bm{\mu} := [D(P_1, R), \ldots, D(P_l, R)]^\top$,
then the hypotheses above can be equivalently expressed as 
$H^{\hat{J}}_{0,i}: \bm{\eta}^\top \bm{\mu} \le 0  \mid \bm{A}\bm{z} \le \bm{0}$ vs.
    $H^{\hat{J}}_{1,i}: \bm{\eta}^\top \bm{\mu} > 0   \mid \bm{A}\bm{z} \le \bm{0}$, 
%
where we note that $\bm{\eta}$ depends on $i$, $\bm{A} \in \{-1, 0, 1\}^{(l-1)
\times l}$, $\bm{A}_{s, :} = [0, \ldots, \underbrace{1}_{\hat{J}},
\cdots, \underbrace{-1}_{s} ,\cdots,0]$ for all $s \in \{1,\ldots, l\} \backslash \{ \hat{J} \}$ and
$\bm{A}_{s,:}$ denote the $s^{th}$ row of $\bm{A}$. This equivalence was
exploited in the multiple goodness-of-fit testing by Yamada et al.\ \cite{yamada2018post}.
The condition $\bm{A}\bm{z} \le \bm{0}$ represents the fact that $P_{\hat{J}}$
is selected as the reference model, and expresses $\hat{D}(P_{\hat{J}},
R) \le \hat{D}(P_{s}, R)$ for all $s \in \{ 1, \ldots, l \} \backslash \{ \hat{J} \} $.

\subsection{\relmul: for controlling false discovery rate (FDR)}
\label{sec:relmul}
%
Unlike traditional hypothesis testing, the null hypotheses here are conditional on the
selection event, making the null distribution non-trivial to derive
\cite{leeb2005model, leeb2006can}. Specifically, the sample used to form the
selection event (i.e., establishing the reference model) is the same sample
used for testing the hypothesis, creating a dependency.
%
Our first approach of \relmul{} is to divide the sample into two independent sets,
where the first is used to choose $P_{\hat{J}}$ and the latter for performing the test(s).
This approach 
simplifies the null distribution since the sample used to form the selection
event and the test sample are now independent. That is,  
    $H^{\hat{J}}_{0,i}: \bm{\eta}^\top \bm{\mu} \le 0  \mid \bm{A}\bm{z} \le \bm{0}$ simplifies to 
    $H^{\hat{J}}_{0,i}: \bm{\eta}^\top \bm{\mu} \le 0 $ due to independence.
In this case, the distribution of
the test statistic (for $\widehat{\mmd}^2_u$ and $\widehat{\ksd}^2_u$) after
selection is the same as its unconditional null distribution. Under our
assumption that all distributions are distinct, the test statistic is
asymptotically normally distributed
\cite{gretton2012kernel,liu2016kernelized,chwialkowski2016kernel}. 

For the complete U-statistic estimator of Maximum Mean
Discrepancy ($\widehat{\mathrm{MMD}}^2_u$), Bounliphone et al.
\cite{bounliphone2015test} showed that, under mild assumptions, $\bm{z}$
is jointly asymptotically normal, where the covariance matrix is known in
closed form. However, for $\bighat{\ksd}^2_u$, only the marginal variance is
known \cite{chwialkowski2016kernel,liu2016kernelized} and not its cross
covariances, which are required for characterizing the null distributions of
our test (see Algorithm \ref{algorithm:relmul} in the appendix for the full algorithm of \relmul). 
We present the asymptotic multivariate characterization of $\widehat{\ksd}^2_u$
in Theorem \ref{theorem:ksd}.
Given a desired significance level $\alpha \in (0,1)$, the rejection threshold is chosen to be the $(1-\alpha)$-quantile
of the distribution $\mathcal{N}(0,\hat{\sigma}^2)$ where $\hat{\sigma}^2$ is the plug-in
estimator of the asymptotic variance $\sigma^2$ of our test statistic (see \cite[Section
3]{bounliphone2015test} for 
MMD and Section \ref{sec:RelKSD} for KSD). 
With this choice, the false rejection rate for each of the $l-1$ hypotheses is
upper bounded by $\alpha$ (asymptotically).
However, to control the false discovery rate for the $l-1$ tests it is necessary to
further correct with multiple testing adjustments. We use the Benjamini–Yekutieli procedure
\cite{benjamini2001control} to adjust $\alpha$. We note that when testing
$H_{0,\hat{J}}^{\hat{J}}$, the result is always 0 (fail to reject) by default.
When $l>2$, following the result of \cite{benjamini2001control} the asymptotic false
discovery rate (FDR) of \relmul{} is provably no larger than $\alpha$. The FDR in our case is the
fraction of the models declared worse that are in fact as good as the (true) reference model.
 For $l=2$, no correction is required as only one test is performed.
 \subsection{\relpsi: for controlling false positive rate (FPR)}
\label{sec:relpsi}

A caveat of the data splitting used in \relmul{} is the loss of true positive rate since a
portion of sample for testing is used for forming the selection. 
When the selection is wrong, i.e., $\hat{J} \in \mathcal{I}_+$,
the test will yield a lower true positive rate. 
It is possible to alleviate this issue by using the full sample for selection
and testing, which is the approach taken by our second proposed test \relpsi.
This approach requires us to know the null distribution of the conditional null
hypotheses (see Section \ref{sec:selection}), which can be derived based on Theorem \ref{theorem:poly}.

\begin{theorem}[Polyhedral Lemma \cite{lee2016exact}]
    \label{theorem:poly}
    Suppose that $\bm{z} \sim \mathcal{N}(\bm\mu,\bm\Sigma)$ and the selection event is affine, i.e., $\bm{Az} \le \bm{b}$ for some matrix $\bm{A}$ and
    $\bm{b}$, then for any $\bm\eta$, we have
\begin{equation*}
    \bm\eta^\top\bm{z}\ |\ \bm{Az} \le \bm{b}\ \sim\ \mathcal{TN}(\bm\eta^\top\bm\mu,\ \bm\eta^\top\bm\Sigma\bm\eta,\  \mathcal{V}^-(\bm{z}),\  \mathcal{V}^+(\bm{z})),
\end{equation*}
where $\mathcal{TN}(\mu, \sigma^2, a, b)$ is a truncated normal distribution with
mean $\mu$ and variance $\sigma^2$ truncated at $[a,b]$. Let $\bm\alpha =\frac{\bm{A\Sigma\eta }}{\bm{\eta^\top\Sigma\eta}}$. The truncated points are given by:
$\mathcal{V}^-(\bm{z}) = \max_{j:\bm\alpha_j<0} \frac{\bm{b}_j -
\bm{Az}_j}{\bm\alpha_j} + \bm\eta^\top\bm{z}$, and $\mathcal{V}^+(\bm{z})
= \min_{j:\bm\alpha_j>0} \frac{\bm{b}_j - \bm{Az}_j}{\bm\alpha_j} +
\bm\eta^\top\bm{z}$.
\end{theorem}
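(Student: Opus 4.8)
The plan is to follow the orthogonal-decomposition argument of Lee et al.\ \cite{lee2016exact}. Assuming $\bm\eta^\top\bm\Sigma\bm\eta > 0$ (implicit in the statement, so that $\bm\eta^\top\bm{z}$ is non-degenerate), set $\bm{c} := \bm\Sigma\bm\eta/(\bm\eta^\top\bm\Sigma\bm\eta)$ and decompose $\bm{z} = \bm{c}\,(\bm\eta^\top\bm{z}) + \bm{r}$, where $\bm{r} := (\bm{I} - \bm{c}\bm\eta^\top)\bm{z}$ is the residual. Since $\bm{z}$ is Gaussian, $(\bm\eta^\top\bm{z},\bm{r})$ is jointly Gaussian, and the one-line computation $\Cov(\bm{r},\bm\eta^\top\bm{z}) = (\bm{I}-\bm{c}\bm\eta^\top)\bm\Sigma\bm\eta = \bm\Sigma\bm\eta - \bm{c}(\bm\eta^\top\bm\Sigma\bm\eta) = \bm{0}$ shows that $\bm{r}$ and $\bm\eta^\top\bm{z}$ are independent; marginally $\bm\eta^\top\bm{z} \sim \mathcal{N}(\bm\eta^\top\bm\mu,\ \bm\eta^\top\bm\Sigma\bm\eta)$.

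Next I would rewrite the selection event in these coordinates. With $\bm\alpha := \bm{A}\bm{c} = \bm{A}\bm\Sigma\bm\eta/(\bm\eta^\top\bm\Sigma\bm\eta)$, the $j$-th constraint $(\bm{A}\bm{z})_j \le \bm{b}_j$ becomes $\bm\alpha_j\,(\bm\eta^\top\bm{z}) \le \bm{b}_j - (\bm{A}\bm{r})_j$. Dividing by $\bm\alpha_j$ according to its sign, this is an upper bound on $\bm\eta^\top\bm{z}$ when $\bm\alpha_j > 0$, a lower bound when $\bm\alpha_j < 0$, and a constraint not involving $\bm\eta^\top\bm{z}$ at all when $\bm\alpha_j = 0$. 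Using $\bm{A}\bm{r} = \bm{A}\bm{z} - \bm\alpha\,(\bm\eta^\top\bm{z})$, each such bound can be written in terms of $\bm{z}$ as $\frac{\bm{b}_j - (\bm{A}\bm{z})_j}{\bm\alpha_j} + \bm\eta^\top\bm{z}$, which is exactly the expression appearing in the statement; but this quantity also equals $\frac{\bm{b}_j - (\bm{A}\bm{r})_j}{\bm\alpha_j}$, so it is in fact a function of $\bm{r}$ alone. Taking the tightest bounds over $j$ yields precisely $\mathcal{V}^-(\bm{z})$ and $\mathcal{V}^+(\bm{z})$, and collecting the $\bm\alpha_j = 0$ constraints yields an event $\mathcal{V}^0$ depending only on $\bm{r}$. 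Hence $\{\bm{A}\bm{z} \le \bm{b}\} = \{\mathcal{V}^-(\bm{z}) \le \bm\eta^\top\bm{z} \le \mathcal{V}^+(\bm{z})\} \cap \mathcal{V}^0$, with $\mathcal{V}^-$, $\mathcal{V}^+$, $\mathcal{V}^0$ all measurable functions of $\bm{r}$.

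Finally I would condition. Because $\bm\eta^\top\bm{z}$ is independent of $\bm{r}$, conditioning on $\bm{r}$ (in particular on the realized values of $\mathcal{V}^\pm$ and on $\mathcal{V}^0$) does not change the law of $\bm\eta^\top\bm{z}$, which remains $\mathcal{N}(\bm\eta^\top\bm\mu,\ \bm\eta^\top\bm\Sigma\bm\eta)$; additionally intersecting with $\{\mathcal{V}^- \le \bm\eta^\top\bm{z} \le \mathcal{V}^+\}$ restricts this Gaussian to $[\mathcal{V}^-,\mathcal{V}^+]$, giving $\mathcal{TN}(\bm\eta^\top\bm\mu,\ \bm\eta^\top\bm\Sigma\bm\eta,\ \mathcal{V}^-,\ \mathcal{V}^+)$. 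Since the resulting conditional law depends on $\bm{r}$ only through the pair $(\mathcal{V}^-(\bm{z}),\mathcal{V}^+(\bm{z}))$, it equally describes the distribution of $\bm\eta^\top\bm{z}$ conditioned on $\{\bm{A}\bm{z} \le \bm{b}\}$ together with those truncation endpoints, which is the claim. The main point requiring care is the observation in the second paragraph that the endpoints $\mathcal{V}^\pm(\bm{z})$, although written as functions of $\bm{z}$, genuinely do not involve the component $\bm\eta^\top\bm{z}$ — it is exactly this fact that licenses the independence argument in the last step; everything else is routine Gaussian-conditioning bookkeeping.
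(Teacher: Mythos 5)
The paper does not prove this statement: it is imported verbatim from Lee et al.\ \cite{lee2016exact} and used as a black box. Your argument is the standard orthogonal-decomposition proof from that reference, and it is correct — the covariance computation showing $\bm{r}\perp\bm\eta^\top\bm{z}$, the rewriting of each constraint as a one-sided bound on $\bm\eta^\top\bm{z}$ with the $\bm\alpha_j=0$ constraints absorbed into an event depending on $\bm{r}$ alone, and (the one point that genuinely requires care, which you handle correctly) the observation that $\mathcal{V}^\pm(\bm{z})$, though written in terms of $\bm{z}$, equal $(\bm{b}_j-(\bm{A}\bm{r})_j)/\bm\alpha_j$ and hence are functions of the residual only, so that the conditional law is the stated truncated normal once one conditions on the endpoints as well as on the selection event. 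Nothing to correct.
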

This lemma assumes two parameters are known: $\bm\mu$ and $\bm\Sigma$.
Fortunately, we do not need to estimate $\bm\mu$ and can set
$\bm\eta^\top\bm\mu = 0$. To see this note that threshold is given by
($1-\alpha$)-quantile of a truncated normal which is
$t_\alpha := \bm{\eta}^\top\bm{\mu}+\sigma\Phi^{-1}\big((1-\alpha)\Phi\big(\frac{{\mathcal{V^+}-\bm{\eta}^\top\bm{\mu}}}{\sigma}\big)+\alpha\Phi\big(\frac{{\mathcal{V^-}-\bm{\eta}^\top\bm{\mu}}}{\sigma}\big)\big)$
where $\sigma^2=\bm\eta^\top\bm\Sigma\bm\eta$.
If our test statistic $\bm\eta^\top\bm{z}$ exceeds the threshold, we reject the null hypothesis $H^{\hat{J}}_{0,i}$. This choice of the rejection threshold  will control the \textit{selective type-I error} $\Prob(\bm\eta^\top\bm{z} > t_\alpha\ \mid H^{\hat{J}}_{0,i} \text{ is true}, P_{\hat{J}} \text{ is selected})$ to be no larger than $\alpha$.
However $\bm\mu$ is not known, the
threshold can be adjusted by setting $\bm{\eta}^\top\bm{\mu}=0$ and can be seen
as a more conservative threshold.
A similar adjustment procedure is used in Bounliphone et al.\ \cite{bounliphone2015test} and
Jitkrittum et al.\ \cite{jitkrittum2018informative} for Gaussian distributed
test statistics.
And since $\bm\Sigma$ is also unknown, we replace $\bm\Sigma$ with a consistent
plug-in estimator $\hat{\bm\Sigma}$ given by Bounliphone et al.\ \cite[Theorem 2]{bounliphone2015test} for
$\widehat{\mmd}^2_u$ and Theorem \ref{theorem:ksd} for $\widehat{\ksd}^2_u$.
Specifically, we have as the threshold $\hat{t}_\alpha :=
\hat{\sigma}\Phi^{-1}\big((1-\alpha)\Phi\big(\frac{\mathcal{V^+}}{\hat{\sigma}}\big)+\alpha\Phi\big(\frac{\mathcal{V^-}}{\hat{\sigma}}\big)\big)$
where $\hat{\sigma}^2 = \bm\eta^\top\hat{\bm\Sigma}\bm\eta$
(see Algorithm \ref{algorithm:relpsi} in the appendix for the full algorithm of RelPSI).

Our choice of $\bm\eta$ depends on the realization of $\hat{J}$, but
$\bm\eta$ can be fixed such that the test we perform is independent of our
observation of $\hat{J}$ (see Experiment 1). For a fixed $\bm\eta$, the concept
of power, i.e., $\Prob(\bm{\eta}^\top\bm{z}> \hat{t}_\alpha)$ when
$\bm{\eta}^\top\bm\mu>0$, is meaningful; and we show in Theorem \ref{theorem:consistency_mmd} that our test is
consistent using MMD. However, when
$\bm\eta$ is random (i.e., dependent on $\hat{J}$) the notion of test power is less appropriate, and we use true
positive rate and false positive rate to measure the performance (see Section \ref{sec:performance}).

\begin{restatable}[Consistency of \textrm{RelPSI-MMD}]{theorem}{mmdcons}
        Given two models $P_1$, $P_2$ and a data distribution $R$ (which are
        all distinct). Let $\hat{\bm\Sigma}$ be a consistent estimate of the covariance matrix
        defined in Theorem \ref{theorem:mmd}.
and $\bm\eta$ be defined such that $\bm{\eta}^\top\bm{z} = \sqrt{n}[\bighat{\mmd}^2_u(P_2,R) - \bighat{\mmd}^2_u(P_1,R)]$.
    Suppose that the threshold $\hat{t}_\alpha$ is the $(1-\alpha)$-quantile of 
$\mathcal{TN}(\bm{0}, \bm{\eta}^\top\hat{\bm\Sigma}\bm\eta, \mathcal{V}^-, \mathcal{V}^+)$ where
    $\mathcal{V}^+$ and $\mathcal{V}^-$ are defined in Theorem \ref{theorem:poly}. Under
    $H_{0}: \bm{\eta}^\top\bm{\mu} \le 0 \,|\,P_{\hat{J}} \text{ is selected}$, the asymptotic type-I error is bounded above by $\alpha$.
    Under $H_{1}:\bm{\eta}^\top\bm{\mu} > 0\,|\,P_{\hat{J}} \text{ is selected}$, we have
    $\Prob(\bm{\eta}^\top\bm{z}> \hat{t}_\alpha) \rightarrow 1$ as $n \rightarrow \infty$.
    \label{theorem:consistency_mmd}
  \end{restatable}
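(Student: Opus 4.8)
The plan is to handle the two claims separately, both via the asymptotics of the relative MMD statistic together with the Polyhedral Lemma (Theorem~\ref{theorem:poly}).

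\textbf{Setup.} Write $\bm z = \sqrt{n}[\bighat{\mmd}^2_u(P_1,R),\bighat{\mmd}^2_u(P_2,R)]^\top$ and let $\bm\eta$ be the fixed contrast vector with $\bm\eta^\top\bm z = \sqrt{n}[\bighat{\mmd}^2_u(P_2,R)-\bighat{\mmd}^2_u(P_1,R)]$. By the joint CLT for U-statistic MMD estimates of Bounliphone et al.\ (cited as Theorem~\ref{theorem:mmd} here), since $P_1,P_2,R$ are all distinct we have $\bm z - \sqrt{n}\bm\mu \xrightarrow{d}\mathcal N(\bm 0,\bm\Sigma)$ with $\bm\Sigma$ nondegenerate in the direction $\bm\eta$, so $\sigma^2:=\bm\eta^\top\bm\Sigma\bm\eta>0$, and $\hat{\bm\Sigma}\xrightarrow{p}\bm\Sigma$ gives $\hat\sigma^2 = \bm\eta^\top\hat{\bm\Sigma}\bm\eta\xrightarrow{p}\sigma^2$. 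The selection event $\{P_{\hat J}\text{ selected}\}$ is affine, $\{\bm A\bm z\le\bm 0\}$, with $\bm A$ a single row as in Section~\ref{sec:selection}.

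\textbf{Type-I error under $H_0$.} First I would treat the boundary case $\bm\eta^\top\bm\mu=0$ (least favourable in the composite null $\bm\eta^\top\bm\mu\le 0$). Here $\bm z$ is exactly asymptotically $\mathcal N(\bm\mu,\bm\Sigma/n)$-like after centering, so conditionally on $\{\bm A\bm z\le\bm 0\}$ the statistic $\bm\eta^\top\bm z$ is asymptotically $\mathcal{TN}(0,\sigma^2,\mathcal V^-,\mathcal V^+)$ by Theorem~\ref{theorem:poly}; the truncation points $\mathcal V^\pm$ are continuous functions of $\bm z$ and hence jointly converge with it. The threshold $\hat t_\alpha$ is the $(1-\alpha)$-quantile of $\mathcal{TN}(0,\hat\sigma^2,\mathcal V^-,\mathcal V^+)$, and because the pivotal quantity $F_{\mathcal{TN}(0,\sigma^2,\mathcal V^-,\mathcal V^+)}(\bm\eta^\top\bm z)$ is $\mathrm{Uniform}(0,1)$ in the limit (the standard selective-inference pivot, continuous in $(\hat\sigma^2,\mathcal V^\pm)$), Slutsky plus the continuous mapping theorem give $\Prob(\bm\eta^\top\bm z>\hat t_\alpha\mid P_{\hat J}\text{ selected})\to\alpha$. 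For the strictly negative case $\bm\eta^\top\bm\mu<0$, the true statistic drifts to $-\infty$ at rate $\sqrt n$ while the threshold computed with $\bm\eta^\top\bm\mu$ set to $0$ is stochastically no larger than the correct (more negative-centered) threshold, so the rejection probability $\to 0\le\alpha$; this is where I would invoke the ``conservative threshold'' remark made in the text. One subtlety to note: the claimed bound is on the \emph{unconditional-looking} type-I error, but since the threshold is itself selection-dependent the argument is really the selective type-I error statement already recorded after Theorem~\ref{theorem:poly}, so I would phrase it conditionally on the selection event and remark that it holds for whichever $\hat J$ is realized.

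\textbf{Consistency under $H_1$.} Now $\bm\eta^\top\bm\mu>0$, i.e.\ $\mmd^2(P_2,R)>\mmd^2(P_1,R)$, so $\bm\eta^\top\bm z = \sqrt n\,\bm\eta^\top\bm\mu + O_p(1)\to+\infty$. It remains to show the threshold $\hat t_\alpha$ does not also escape to $+\infty$ as fast. I would bound $\hat t_\alpha$ above by $\mathcal V^+$ (a truncated-normal quantile never exceeds its upper truncation point) and then argue $\mathcal V^+ = O_p(\sqrt n)$ with a strictly smaller constant, or more cleanly bound $\hat t_\alpha$ by the corresponding \emph{untruncated} Gaussian quantile shifted appropriately; the key point is that $\hat t_\alpha - \bm\eta^\top\bm z\to-\infty$ because the deterministic drift term $\sqrt n\,\bm\eta^\top\bm\mu$ appears in the statistic but not (after the conservative $\bm\eta^\top\bm\mu=0$ substitution) in the threshold. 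Hence $\Prob(\bm\eta^\top\bm z>\hat t_\alpha)\to 1$.

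\textbf{Main obstacle.} The delicate step is the $H_0$ boundary analysis: one must show the selective pivot converges uniformly enough that plugging in $\hat\sigma^2$ and the random $\mathcal V^\pm$ still yields asymptotic $\mathrm{Uniform}(0,1)$ calibration, and simultaneously that conditioning on the shrinking-probability-free event $\{\bm A\bm z\le\bm 0\}$ is legitimate in the limit (the selection event has non-vanishing probability here since $\bm\eta^\top\bm\mu=0$ puts the two MMDs on equal footing, so $\Prob(\bm A\bm z\le\bm 0)$ stays bounded away from $0$ — this should be checked). I would lean on the continuity of the truncated-normal CDF in all its arguments on the relevant region and on the joint weak convergence of $(\bm\eta^\top\bm z,\mathcal V^-,\mathcal V^+,\hat\sigma^2)$, citing the analogous arguments in \cite{lee2016exact} and \cite{tibshirani2016exact} to keep the proof short.
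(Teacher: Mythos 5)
Your overall strategy is sound but takes a genuinely different route from the paper, and one step fails as written. The paper's proof (Section \ref{proof:consistency}) is fully explicit: it first computes the truncation points in the two-model case (Lemma \ref{lem:truncation}: $(\mathcal{V}^-,\mathcal{V}^+)=(0,\infty)$ when $P_1$ is selected and $(-\infty,0)$ when $P_2$ is selected), which collapses the threshold to the closed forms $\hat\sigma\Phi^{-1}(1-\tfrac{\alpha}{2})$ and $\hat\sigma\Phi^{-1}(\tfrac12-\tfrac{\alpha}{2})$; it then decomposes the rejection probability over the two selection events via Lemma \ref{lemma:selection_event} and the explicit rejection formulas \eqref{eq:relpsi_rej_correct}--\eqref{eq:relpsi_rej_incorrect}, and takes limits of closed-form Gaussian expressions for both $H_0$ and $H_1$. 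Your $H_0$ argument instead invokes asymptotic uniformity of the selective pivot in the style of \cite{lee2016exact}; that is legitimate and arguably more general (it would extend beyond the two-model case), at the cost of the uniform-convergence technicalities you flag yourself, all of which the paper's explicit computation sidesteps.

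The genuine gap is in your $H_1$ step. You propose to bound $\hat t_\alpha$ either by $\mathcal{V}^+$ or by ``the corresponding untruncated Gaussian quantile.'' Neither works here: on the high-probability selection event under $H_1$ ($P_1$, the better model, is selected) we have $\mathcal{V}^+=+\infty$, so the first bound is vacuous; and with lower truncation $\mathcal{V}^-=0$ the $(1-\alpha)$-quantile of the truncated normal is $\hat\sigma\Phi^{-1}(1-\tfrac{\alpha}{2})$, which strictly exceeds the untruncated quantile $\hat\sigma\Phi^{-1}(1-\alpha)$, so the second bound is false. Your ``key point'' that $\hat t_\alpha-\bm\eta^\top\bm z\to-\infty$ is correct, but establishing it requires exactly the explicit control of the truncation points that the paper obtains in Lemma \ref{lem:truncation} (so that $\hat t_\alpha=O_p(1)$ on each selection event), and you must also decompose over the two selection events --- $\hat t_\alpha$ is a different random quantity on each, and the event $\{P_{\hat J}=P_2\}$ is handled not by a drift argument for the threshold but by noting its probability vanishes (Lemma \ref{lemma:selection_event}).
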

A proof for Theorem \ref{theorem:consistency_mmd} can be found in Section 
\ref{proof:consistency} in the appendix.
A similar result holds for \textrm{RelPSI-KSD} (see Appendix
\ref{theorem:consksd}) whose proof follows closely the proof of Theorem
\ref{theorem:consistency_mmd} and is omitted.

%

\label{sec:psi}

\section{Performance analysis}
\vspace{-1mm}

\label{sec:performance}
Post selection inference (PSI) incurs its loss of power from conditioning on
the selection event \cite[Section 2.5]{fithian2014optimal}.
Therefore, in the fixed hypothesis (not conditional) setting of $l=2$ models, it is unsurprising
that the empirical power of \relmmd{} and \textrm{RelKSD} is higher than its PSI
counterparts (see Experiment 1). However, when $l=2$, and conditional hypotheses are
considered, it is unclear which approach is desirable.
Since both PSI (as in \relpsi) and
data-splitting (as in \relmul) approaches for model comparison have tractable null
distributions, we study the performance of our proposals for the case when the hypothesis 
is dependent on the data.


We measure the performance of \relpsi{} and \relmul{} by
\textit{true positive rate} (\textrm{TPR}) and \textit{false
positive rate} (\textrm{FPR}) in the setting of $l=2$ candidate models. These
are popular metrics when reporting the performance of
selective inference approaches \cite{suzumura2017selective, yamada2018post,
fithian2014optimal}. \textrm{TPR} is the expected proportion of models worse
than the best that are correctly reported as such. \textrm{FPR} is the expected proportion
of models as good as the best that are wrongly reported as worse.
It is desirable for \textrm{TPR} to be
high and \textrm{FPR} to be low. We defer the formal definitions to 
Section \ref{sec:prdefinition} (appendix); when we estimate TPR and FPR, we denote it as $\widehat{\mathrm{TPR}}$ and $\widehat{\mathrm{FPR}}$ respectively. In the following
theorem, we show that the \textrm{TPR} of \relpsi{} is higher than
the TPR of \relmul{}.
\begin{restatable}[\textrm{TPR} of \relpsi{} and \relmul{}]{theorem}{tpr}
    Let $P_1, P_2$ be two candidate models, and $R$ be a data generating distribution.
    Assume that $P_1, P_2$ and $R$ are distinct.
    Given $\alpha \in [0, \frac{1}{2}]$ 
    and split proportion $\rho\in(0,1)$ for \relmul{} so that $(1-\rho)n$ samples are used for
    selecting $P_{\hat{J}}$ and $\rho n$ samples for testing,  
    for all $n \gg N =
    \big(\frac{\sigma\Phi^{-1}(1-\frac{\alpha}{2})}{\mu(1-\sqrt{\rho})}\big)^2$,
    we have $\mathrm{TPR}_{\relpsi} \gtrapprox
    \mathrm{TPR}_{\relmul}$. 
\label{theorem:tpr}
\end{restatable}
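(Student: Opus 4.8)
The plan is to compare the two true positive rates by reducing each to a single tail probability of an (asymptotically) normal or truncated-normal statistic, and then to compare those tail probabilities directly. For \relmul{} the test statistic is built from $\rho n$ i.i.d.\ samples, so $\bm{\eta}^\top\bm{z}_{\rho n} = \sqrt{\rho n}\,[\widehat{\mmd}^2_u(P_2,R)-\widehat{\mmd}^2_u(P_1,R)] \xrightarrow{d} \mathcal{N}(\sqrt{\rho}\,\mu,\sigma^2)$ where $\mu := \sqrt{n}(D(P_2,R)-D(P_1,R))>0$ (WLOG $P_1$ better), and the rejection threshold is the normal $(1-\alpha)$-quantile $\sigma\Phi^{-1}(1-\alpha)$ after the Benjamini–Yekutieli adjustment is vacuous for $l=2$. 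Conditioning on the event that selection is correct (probability $\to 1$ as $n\to\infty$), the TPR of \relmul{} is asymptotically $\Phi\!\big(\sqrt{\rho}\,\mu/\sigma - \Phi^{-1}(1-\alpha)\big)$, up to a vanishing correction for the wrong-selection event. For \relpsi{} the full sample is used, so the relevant statistic is $\bm{\eta}^\top\bm{z}_n \xrightarrow{d} \mathcal{N}(\mu,\sigma^2)$, but it must be compared against the truncated-normal threshold $\hat t_\alpha$ from Theorem~\ref{theorem:poly} with $\bm{\eta}^\top\bm{\mu}$ set to $0$.

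The key technical step is therefore to control the truncated-normal threshold $\hat t_\alpha$ for \relpsi{} and show that the resulting rejection probability dominates $\Phi(\sqrt{\rho}\,\mu/\sigma - \Phi^{-1}(1-\alpha))$ once $n\gg N$. Here I would use the $l=2$ structure: there is only one other model, so the polyhedral constraint $\bm{A}\bm{z}\le\bm{0}$ is a single half-line, one of $\mathcal{V}^-,\mathcal{V}^+$ is $\pm\infty$, and the other equals $\bm{\eta}^\top\bm{z}$ shifted by the selection slack — concretely, when $P_1$ is (correctly) selected, $\mathcal{V}^- = -\infty$ and $\mathcal{V}^+ = +\infty$ (the observed statistic already satisfies the constraint with the active index, so the truncation interval is the whole line), which makes the truncated normal collapse to an ordinary normal and $\hat t_\alpha \to \sigma\Phi^{-1}(1-\alpha)$. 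Thus on the good-selection event the TPR of \relpsi{} is asymptotically $\Phi(\mu/\sigma - \Phi^{-1}(1-\alpha))$. Then I would argue: since $\rho\in(0,1)$ implies $\sqrt\rho<1$, and $\mu>0$, we have $\mu/\sigma > \sqrt\rho\,\mu/\sigma$, and $\Phi$ is increasing, so $\Phi(\mu/\sigma-\Phi^{-1}(1-\alpha)) > \Phi(\sqrt\rho\,\mu/\sigma-\Phi^{-1}(1-\alpha))$ — giving $\mathrm{TPR}_{\relpsi}\gtrapprox\mathrm{TPR}_{\relmul}$. The quantitative bound $n\gg N=\big(\tfrac{\sigma\Phi^{-1}(1-\alpha/2)}{\mu(1-\sqrt\rho)}\big)^2$ should emerge from requiring the gap $\mu(1-\sqrt\rho)/\sigma$ between the two standardized means to exceed the scale on which the normal CDF / the vanishing correction terms (wrong-selection probability, $\hat\sigma\to\sigma$ and $\hat t_\alpha\to t_\alpha$ plug-in errors) are negligible; the $\alpha/2$ rather than $\alpha$ presumably buys a safety factor so the comparison survives those lower-order terms.

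The main obstacle I anticipate is handling the wrong-selection event and the randomness of $\bm{\eta}$ rigorously: when $\hat J\in\mathcal{I}_+$ the sign of $\bm{\eta}$ flips, the conditional null/alternative labels swap, and the truncation interval of Theorem~\ref{theorem:poly} is no longer the whole line, so one cannot simply say the truncated normal equals a normal. I would bound the contribution of this event by $\Prob(\hat J\ \text{wrong})$, which decays (since $D(P_1,R)\ne D(P_2,R)$ and the estimators are consistent) at a rate that is $o(1)$ and in fact exponentially small relative to $n/N$, so it is absorbed into the $\gtrapprox$ (approximate inequality up to asymptotically negligible terms). A secondary subtlety is that for \relmul{} the "$(1-\rho)n$ samples for selection" event is also only correct with probability $\to1$; but since this probability is even closer to $1$ than in the pooled case, it only helps. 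Finally I would note the $\gtrapprox$ in the statement is deliberately an asymptotic/approximate inequality, which lets me discard the $O(1/\sqrt n)$ Berry–Esseen-type errors in both normal approximations without further fuss; making it a strict inequality for finite $n$ would require quantifying all of these, which the threshold $N$ and the $\alpha/2$ slack are designed to absorb.
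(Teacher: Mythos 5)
There is a genuine gap, and it sits exactly at the step you flagged as "the key technical step." Your claim that when $P_1$ is correctly selected the truncation interval is the whole real line (so that the truncated normal collapses to an ordinary normal and $\hat t_\alpha \to \sigma\Phi^{-1}(1-\alpha)$) is false. With $l=2$ the selection event $\bm{A}\bm{z}\le \bm{0}$ is the single constraint $\sqrt{n}[\hat D(P_1,R)-\hat D(P_2,R)]\le 0$, which is precisely $\bm\eta^\top\bm{z}\ge 0$: the selection constraint is the test statistic itself, not a slack direction orthogonal to it. Hence the conditional law is a normal truncated to $[0,\infty)$ (i.e., $\mathcal{V}^-=0$, $\mathcal{V}^+=\infty$, as in the paper's Lemma \ref{lem:truncation}), and the $(1-\alpha)$-quantile with mean set to $0$ is $\sigma\Phi^{-1}\big((1-\alpha)\cdot 1+\alpha\cdot\tfrac12\big)=\sigma\Phi^{-1}(1-\tfrac{\alpha}{2})$, which is strictly \emph{larger} than the \relmul{} threshold $\sigma\Phi^{-1}(1-\alpha)$. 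This is the whole point of the theorem: PSI pays a price in a more conservative threshold but keeps all $n$ samples, while data splitting keeps the smaller threshold but only $\rho n$ samples, and the comparison is genuinely a trade-off rather than the free win your version yields.

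The tell is that your argument never uses $N$ or $\alpha\le\tfrac12$: under your (incorrect) thresholds the inequality $\Phi(\sqrt{n}\mu/\sigma-\Phi^{-1}(1-\alpha))>\Phi(\sqrt{n\rho}\mu/\sigma-\Phi^{-1}(1-\alpha))$ holds for every $n$, so the hypothesis $n\gg N$ would be vacuous. In the paper's proof, $\mathrm{TPR}_{\relpsi}$ simplifies (after the conditional rejection probability is multiplied by the selection probability, with an exact cancellation of the truncated-normal denominator) to $\Phi\big(\tfrac{\sqrt{n}\mu}{\sigma}-\Phi^{-1}(1-\tfrac{\alpha}{2})\big)$, and one needs $\tfrac{\sqrt{n}\mu}{\sigma}(1-\sqrt{\rho})\ge\Phi^{-1}(1-\tfrac{\alpha}{2})$ together with $\Phi^{-1}(1-\alpha)\ge 0$ (this is where $\alpha\le\tfrac12$ enters) to deduce $\tfrac{\sqrt{n}\mu}{\sigma}-\Phi^{-1}(1-\tfrac{\alpha}{2})\ge\tfrac{\sqrt{n\rho}\mu}{\sigma}-\Phi^{-1}(1-\alpha)$; the remaining factor $\Phi(\sqrt{n(1-\rho)}\mu/\sigma)\in(0,1)$ multiplying the \relmul{} rejection probability then closes the argument. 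The rest of your outline (decomposing TPR over the selection event, discarding the $P_2$-selected branch because the reference is never tested against itself, absorbing plug-in and Berry--Esseen errors into $\gtrapprox$) matches the paper, but the proof cannot be completed until the truncation interval and hence the \relpsi{} threshold are corrected.
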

The proof is provided in the Section \ref{proof:tpr}. This result holds for
both \textrm{MMD} and \textrm{KSD}. Additionally, in the following result we
show that both approaches bound \textrm{FPR} by $\alpha$. Thus,
\textrm{RelPSI} controls FPR
regardless of the choice of discrepancy measure and number of candidate models. 

\begin{restatable}[FPR Control]{lemma}{fpr}
\label{lemma:fpr}
Define the selective type-I error for the $i^{th}$ model to be $s(i, \hat{J}) := \mathbb{P}(\text{reject } H^{\hat{J}}_{0,i} \mid H^{\hat{J}}_{0,i} \text{ is true}, P_{\hat{J}} \text{ is selected})$.
If $s(i, \hat{J}) \le \alpha$ for any $i,\hat{J} \in \{1,\ldots, l\}$, then $\mathrm{FPR} \le \alpha$.

\end{restatable}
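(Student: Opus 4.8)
<br>

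The plan is to reduce the false positive rate directly to the selective type-I error $s(i,\hat J)$, with no reference to the specific (truncated-normal) null distribution. Recall from Section~\ref{sec:prdefinition} that $\mathrm{FPR}$ is the expected proportion of models in $\mathcal{I}_-$ that are declared positive, i.e.\ $\mathrm{FPR} = \frac{1}{|\mathcal{I}_-|}\sum_{i\in\mathcal{I}_-}\mathbb{P}(i\text{ is declared positive})$; here $|\mathcal{I}_-|\ge 1$ is a deterministic constant because the population discrepancies $D(P_j,R)$ are deterministic. So it suffices to show $\mathbb{P}(i\text{ is declared positive})\le\alpha$ for every fixed $i\in\mathcal{I}_-$.

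Fix $i\in\mathcal{I}_-$. By the convention that the selected reference model is always assigned $0$, model $i$ can only be declared positive when the realized reference index $\hat J$ satisfies $\hat J\neq i$ and the associated conditional null $H^{\hat J}_{0,i}$ is rejected. The key observation is that for \emph{every} $\hat J\neq i$ this null is in fact true: since $i\in\mathcal{I}_-$ we have $D(P_i,R)=\min_j D(P_j,R)\le D(P_{\hat J},R)$, i.e.\ $\bm\eta^\top\bm\mu\le 0$, which is exactly $H^{\hat J}_{0,i}$. Hence the event $\{i\text{ declared positive}\}$ is the disjoint union over $\hat J\in\{1,\dots,l\}\setminus\{i\}$ of the events $\{P_{\hat J}\text{ selected}\}\cap\{\text{reject }H^{\hat J}_{0,i}\}$, each occurring under a true null. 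By the law of total probability, and using that conditioning on the (deterministically true) event ``$H^{\hat J}_{0,i}$ is true'' is vacuous,
\[
\mathbb{P}(i\text{ declared positive})=\sum_{\hat J\neq i}\mathbb{P}\big(\text{reject }H^{\hat J}_{0,i},\ P_{\hat J}\text{ selected}\big)=\sum_{\hat J\neq i}s(i,\hat J)\,\mathbb{P}\big(P_{\hat J}\text{ selected}\big).
\]
Applying the hypothesis $s(i,\hat J)\le\alpha$ together with $\sum_{\hat J\neq i}\mathbb{P}(P_{\hat J}\text{ selected})\le 1$ gives $\mathbb{P}(i\text{ declared positive})\le\alpha$. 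Summing over $i\in\mathcal{I}_-$ and dividing by $|\mathcal{I}_-|$ yields $\mathrm{FPR}\le\alpha$.

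There is no substantive obstacle here: the argument is distribution-free and uses only the decision convention and the definition of the selective type-I error, so it applies verbatim to both \textrm{RelPSI-MMD} and \textrm{RelPSI-KSD} and for any number of candidates $l$. The one point that requires care is the bookkeeping step identifying a false positive on $i\in\mathcal{I}_-$ with rejection of a null $H^{\hat J}_{0,i}$ that is guaranteed to be true --- this is what licenses invoking $s(i,\hat J)\le\alpha$ --- and, relatedly, making sure the case $\hat J=i$ (the reference, assigned $0$ by default) is correctly excluded from the sum. For the \relpsi{} test specifically, the premise $s(i,\hat J)\le\alpha$ is supplied by the choice of the truncated-normal threshold $\hat t_\alpha$ discussed in Section~\ref{sec:relpsi} (and for RelMulti by its unconditional normal threshold), so Lemma~\ref{lemma:fpr} immediately gives the claimed FPR control for both.
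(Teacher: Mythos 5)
Your proof is correct and follows essentially the same route as the paper's: both reduce $\mathrm{FPR}$ to a per-model rejection probability, decompose over the selection event via the law of total probability/expectation, observe that $H^{\hat J}_{0,i}$ is true for every $i\in\mathcal{I}_-$ so the selective type-I error bound $s(i,\hat J)\le\alpha$ applies, and average over the selection to get $\alpha$. Your write-up is somewhat more explicit than the paper's (spelling out that the null is true for every realization of $\hat J$ and excluding the $\hat J=i$ case), but the argument is the same.
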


The proof can be found in Section \ref{proof:fpr}. For both \relpsi{} and \relmul{}, the test threshold is chosen to control the selective type-I error.  
Therefore, both control \textrm{FPR} to be no larger than $\alpha$. In \textrm{RelPSI}, we explicitly control this quantity by characterizing the distribution of statistic under the conditional null.
%
\begin{remark}
The selection of the best model is a noisy process, and we can pick a model that is worse
than the actual best, i.e., $\hat{J} \notin \arg \min_j D(P_j, R)$. An incorrect selection results
in a higher portion of true conditional null hypotheses. So,
the true positive rate of the test will be lowered. However, the false rejection is still controlled
at level $\alpha$.
\end{remark}

\section{Experiments}
\label{sec:experiments}

\begin{figure}[]
    \centering
        \includegraphics[width=.75\textwidth]{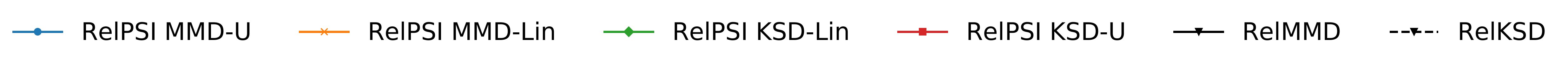}
        \includegraphics[width=.24\textwidth]{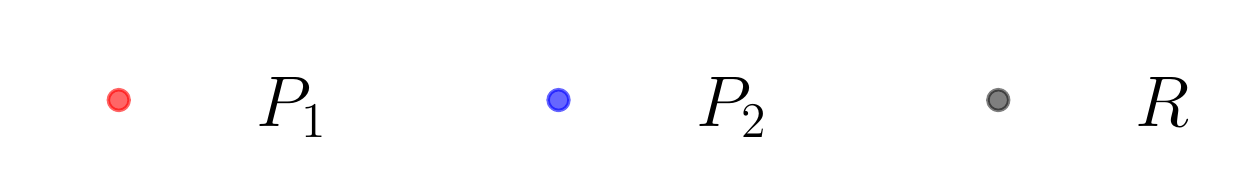}
  \subfloat[Mean shift $d=10$\label{fig:ex1_ms}]{
        \includegraphics[width=.25\textwidth]{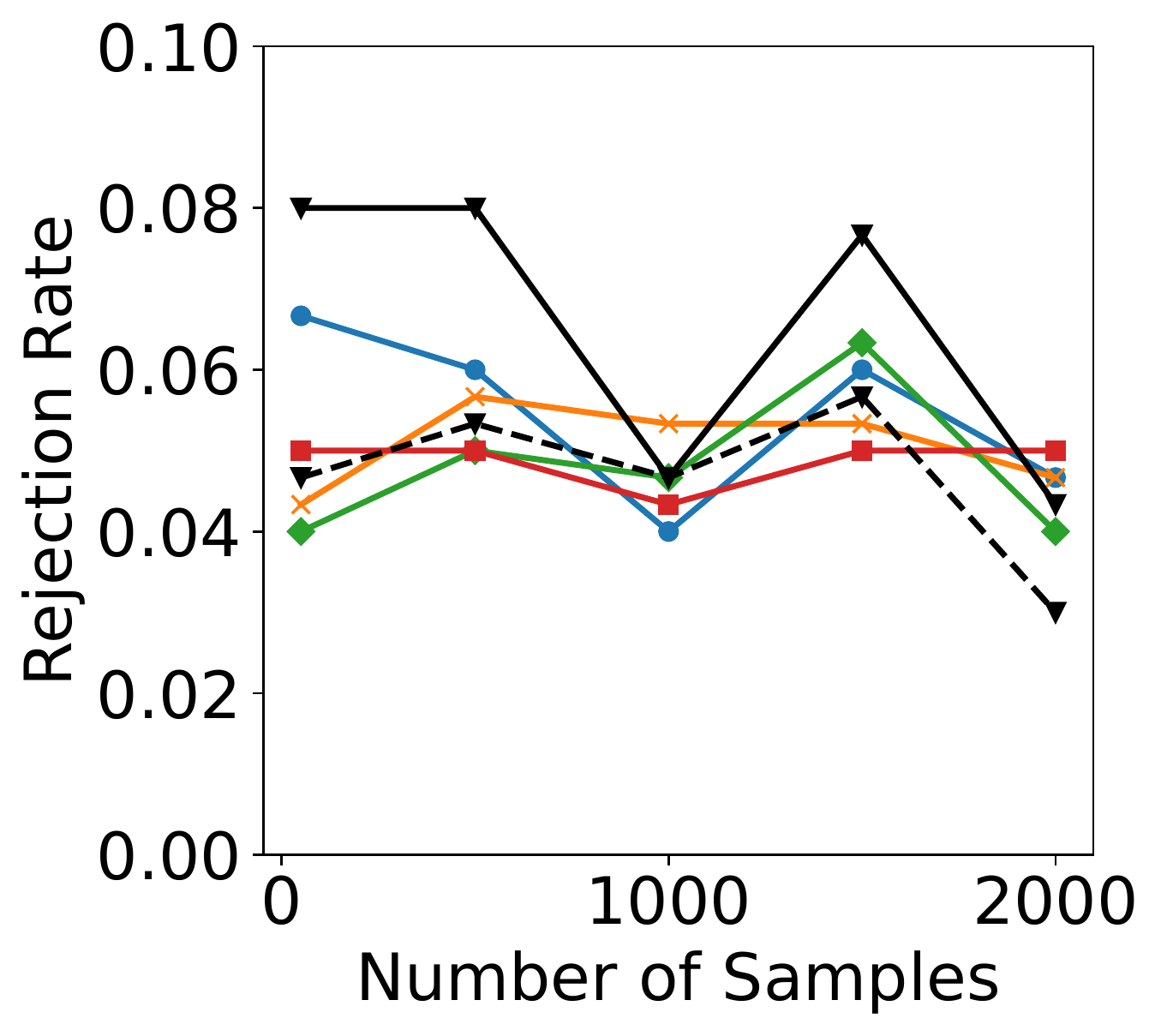}}
  \subfloat[Blobs $d=2$\label{fig:ex1_blobs}]{
      \centering
       \includegraphics[width=.25\textwidth]{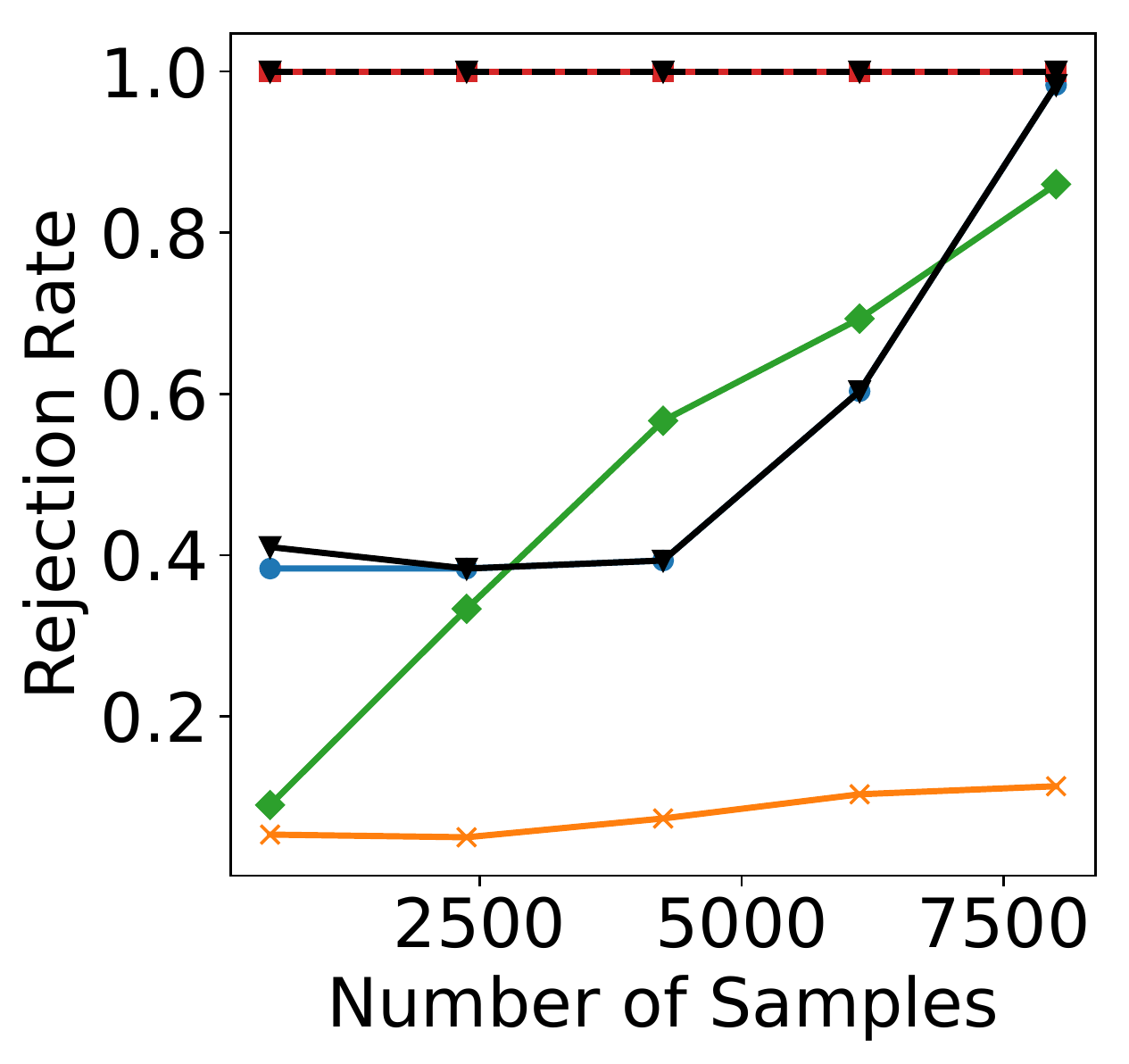}}
  \subfloat[RBM $d=20$\label{fig:ex1_rbm}]{
      \centering
       \includegraphics[width=.25\textwidth]{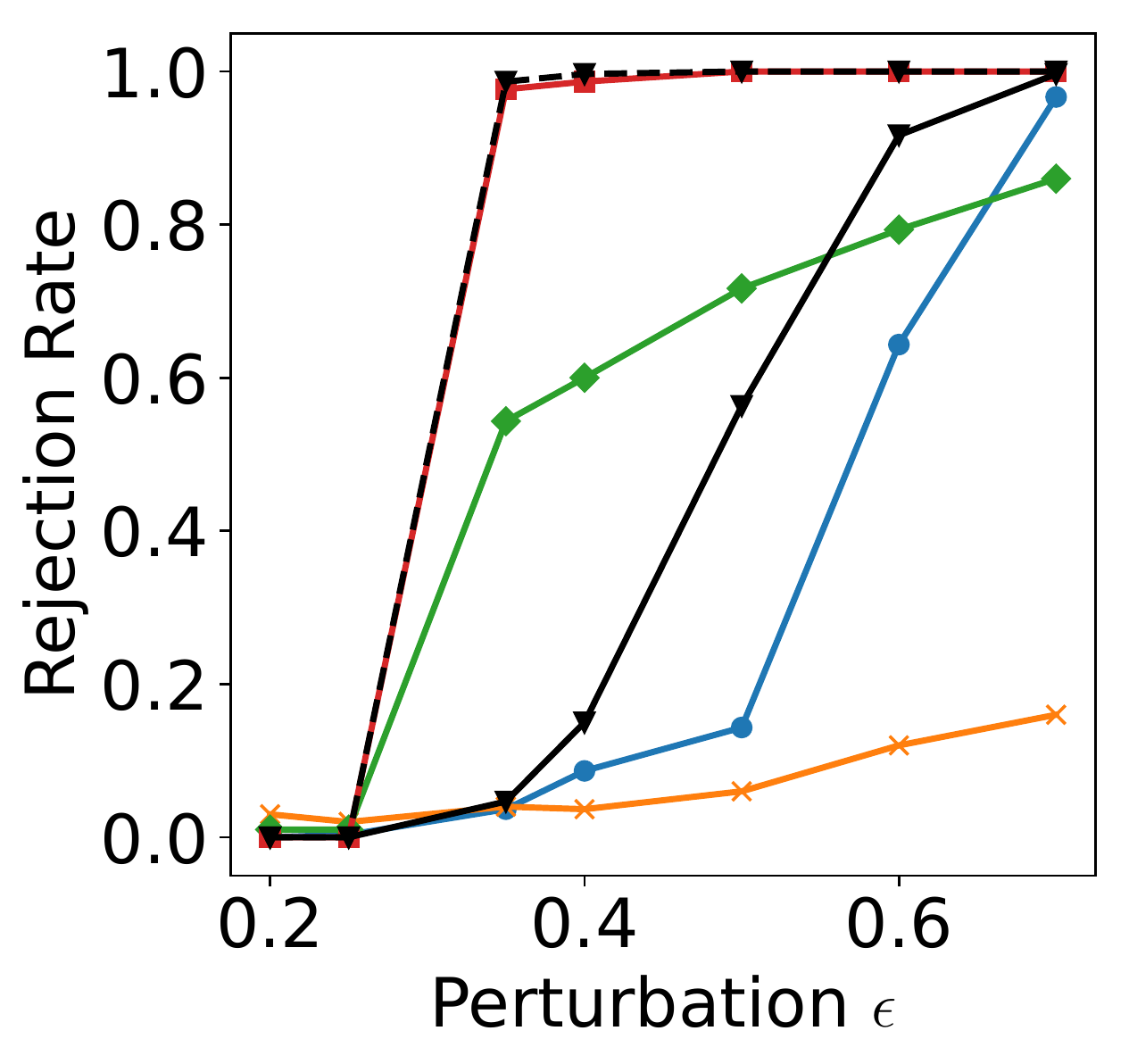}}
   \subfloat[Blobs problem.\label{fig:ex1_blobs_vis}]{
       \centering
       \includegraphics[width=.25\textwidth]{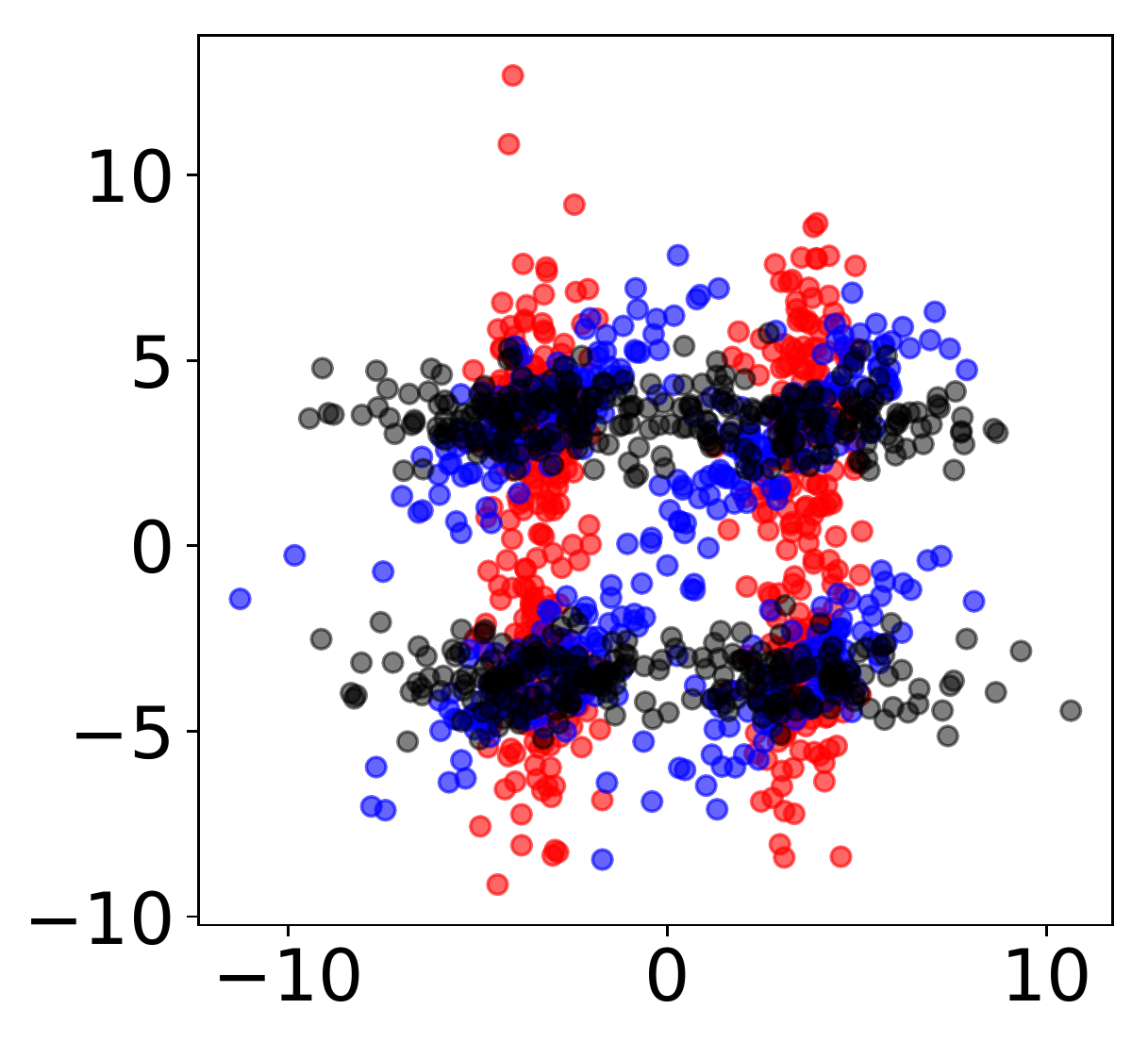}}
    \caption{Rejection rates (estimated from 300 trials) for the six tests with $\alpha = 0.05$ is shown. ``MMD-U'' refers
        to the usage of the complete U-statistic for \textrm{MMD} which is $\widehat{\mathrm{MMD}}^2_u$, ``MMD-Lin'' refers to the linear
        time estimator $\widehat{\mathrm{MMD}}^2_l$ and similarly for KSD Complete and KSD Linear (defined in Section \ref{sec:background}).}
    \label{fig:ex1}
    \vspace{-3mm}
\end{figure}
In this section, we demonstrate our proposed method for both toy problems and
real world datasets. Our first experiment is a baseline comparison of our proposed method \relpsi\ to \relmmd\ \cite{bounliphone2015test} and \relksd\ 
(see Appendix \ref{sec:RelKSD}). In this experiment, we consider a fixed hypothesis of model comparison for two candidate models (\relmul\ is not applicable here). This is the original setting that \relmmd\ was proposed for. In the second experiment, we consider a
set of mixture models for smiling and non-smiling images of CelebA \cite{liu2018large} 
where each model has its own unique generating proportions from the real data set or images
from trained GANs. For our final experiment, we examine density estimation models trained 
on the Chicago crime dataset considered by Jitkrittum et al.\
\cite{jitkrittum2017linear}. In this experiment, each model has a score
function which allows us to apply both \relpsi\ and \relmul\ with \textrm{KSD}.
In the last two experiments on real data, there is no ground truth for which candidate model is the best; so
estimating \textrm{TPR}, \text{FDR} and \textrm{FPR} is infeasible. Instead, the experiments
are designed to have a strong indication of the ground truth with support from another metric. More synthetic experiments are shown in Appendix \ref{sec:more_experiments} to verify our theoretical results. In order to account for sample variability, our experiments are averaged over at least $100$ trials with new samples (from a different seed) redrawn for each trial. Code for reproducing the results is online.\footnote{\url{https://github.com/jenninglim/model-comparison-test}}

\textbf{1. A comparison of} \relmmd \textbf{,
}\textrm{\relksd}\textbf{, }\textrm{\relpsiksd} \textbf{and}
\textrm{\relpsimmd} ($l=2$): The aim of this experiment is to investigate the behaviour
of the proposed tests with \textrm{RelMMD} and \textrm{RelKSD} as baseline comparisons and
empirically demonstrate that \textrm{RelPSI-MMD} and \textrm{RelPSI-KSD} possess 
desirable properties such as level-$\alpha$ and comparable test power. Since
\textrm{RelMMD} and \textrm{RelKSD} have no concept of selection, in order for
the results
to be comparable we fixed null hypothesis to be $H_0:D(P_1,R) \le D(P_2,R)$ which is possible for \textrm{RelPSI} by fixing $\bm\eta^\top =[-1,1]$. In this experiment, we consider
the following problems:
\begin{enumerate}
    \item \textit{Mean shift}: The two candidate models are isotropic Gaussians on
    $\mathbb{R}^{10}$ with varying
    mean: $P_1 = \mathcal{N}([0.5,0,\cdots,0],\bm{I})$ and $P_2 = \mathcal{N}([-0.5,0,\cdots,0],\bm{I})$.
        Our reference distribution is $R = \mathcal{N}(\bm{0},\bm{I})$. In this case, $H_0$ is true.
    \item \textit{Blobs}: This problem was studied by various authors 
        \cite{chwialkowski2015fast,gretton2012optimal,jitkrittum2018informative}.
        Each distribution is a mixture of 
        Gaussians with a similar structure on a global scale but different locally by rotation. Samples from this distribution is shown in Figure \ref{fig:ex1_blobs_vis}. In this case, the $H_1$ is true.
    \item \textit{Restricted Boltzmann Machine (RBM)}: This problem was studied by 
        \cite{liu2016kernelized, jitkrittum2017linear, jitkrittum2018informative}.
        Each distribution is given by a Gaussian Restricted Boltzmann Machine
        (RBM) with a density $p(\bm{y})=\sum_{\bm{x}} p'(\bm{y}, \bm{x})$ and $p'(\bm{y},\bm{x}) =\frac{1}{Z}\exp (\bm{y^\top B x}+\bm{b^\top y}+\bm{c^\top x}-\frac{1}{2}||\bm{y}||^2)$ where $\bm{x}$ are the latent variables
        and model parameters are $\bm{B, b, c}$. The model will share the same parameters
        $\bm{b}$ and $\bm{c}$ (which are drawn from a standard normal) with the reference distribution but the matrix 
        $\bm{B}$ (sampled uniformly from $\{-1, 1\}$) will be perturbed with $\bm{B}^{p_2}= \bm{B} + 0.3 \delta$ and 
        $\bm{B}^{p_1}= \bm{B} + \epsilon \delta$ where $\epsilon$ varies between $0$ and $1$. It measures the sensitivity of the test \cite{jitkrittum2017linear} since perturbing only one entry can create a
        difference that is hard to detect.
        Furthermore, We fix $n=1000$, $d_x = 5$,
        $d_y = 20$.
\end{enumerate}
Our proposal and baselines are all non-parametric kernel based test. For a fair comparison, all the tests use the same Gaussian kernel with its bandwidth chosen by the
median heuristic.
In Figure \ref{fig:ex1}, it shows the rejection rates for all tests. As expected, the tests based on \textrm{KSD} have higher power than \textrm{MMD} due to having access to the density function. Additionally, linear time estimators perform worse than their complete counterpart.

In Figure \ref{fig:ex1_ms}, when $H_0$ is true, then the false rejection rate (type-I error) is controlled  around level $\alpha$ for all tests. In Figure \ref{fig:ex1_blobs}, the poor performance of \textrm{MMD}-based tests in blobs experiments is caused by 
an unsuitable choice of bandwidth. The median heuristic cannot capture 
the small-scale differences \cite{gretton2012optimal, jitkrittum2018informative}. Even though \textrm{KSD}-based tests utilize the same heuristic, equipped with the density function a
mismatch in the distribution shape can be detected.
Interestingly, in all our experiments, the \textrm{RelPSI} variants perform comparatively to their cousins, \textrm{Rel-MMD} and \textrm{Rel-KSD} but as expected, the 
power is lowered due to the loss of information from our conditioning \cite{fithian2014optimal}. These two problems show the behaviour of the tests when the number of samples $n$ increases.

In Figure \ref{fig:ex1_rbm}, this shows the behaviour of the tests when the difference between the candidate models increases (one model gets closer to the reference distribution). When $\epsilon < 0.3$, the null case is true and the tests exhibit a low rejection rate. However, when $\epsilon > 0.3$ then the alternative is true. Tests utilizing \textrm{KSD} can detect this change quickly which indicated by the sharp increase in the rejection rate when $\epsilon=0.3$. However, MMD-based tests are unable to detect the differences at that point. As the amount of perturbation increases,  this changes and \textrm{MMD} tests begin to identify with significance that the alternative is true. Here we see that \textrm{RelPSI-MMD} has visibly lowered rejection rate indicating the cost of power for conditioning, whilst for \textrm{RelPSI-KSD} and \textrm{RelKSD} both have similar power.
\begin{table}[]
\centering
\caption{A comparison of our proposed method with FID. The
underlying distribution are samples forming a mixture of smiling (S) or
non-smiling (N) faces which can be either generated (G) or real (R). ``Rej.'' denotes the rate of rejection of the model indicating that it is significantly worse than the best model.
``Sel.'' is the rate at which the model is selected (the one with the minimum discrepancy score).
Average \textrm{FID} scores are also reported. These results are averaged over 100 trials. 
} 
\label{tab:ex3_table}
\begin{tabular}{@{}ccccccccc@{}}
\toprule
      & \multicolumn{2}{c}{Mix} & \multicolumn{2}{c}{\textrm{RelPSI-MMD}} & \multicolumn{2}{c}{\relmul{-MMD}} & \multicolumn{2}{c}{FID} \\ \midrule
Model & S          & N          & Rej.         & Sel.         & Rej.         & Sel.         & Aver.       & Sel.      \\ \midrule
1     & 0.50 (G)   & 0.50 (G)   & 0.99            & 0.0            & 1.0          & 0.0          & $27.86 \pm 0.49$       & 0         \\
2     & 0.60 (R)   & 0.40 (R)   & 0.39         & 0.02            & 0.18         & 0.08          & $16.01 \pm 0.19$         & 0.39         \\
3     & 0.40 (R)   & 0.60 (R)   & 0.28         & 0.03            & 0.19          & 0.10          & $16.29 \pm 0.20$         & 0.03         \\
4     & 0.51 (R)   & 0.49 (R)   & 0.02          & 0.52         & 0.03          & 0.37         & $16.03 \pm 0.18$         & 0.27       \\
5     & 0.52 (R)   & 0.48 (R)   & 0.06          & 0.43         & 0.0          & 0.45          & $16.01 \pm 0.17$         & 0.31       \\ \midrule
Truth & 0.5 (R)    & 0.5 (R)    & -            & -            & -            & -            & -           & -         \\ \bottomrule
\end{tabular}
\end{table}

\textbf{2. Image Comparison} $l=5$: In this experiment, we apply our proposed test \textrm{RelPSI-MMD} and \relmulmmd{} for comparing 
between five image generating candidate models. We consider the CelebA dataset 
\cite{liu2018large}
which for each sample is an image of a celebrity labelled with 40 annotated features.
As our reference distribution and candidate models, we use a mixture of
smiling and non-smiling faces of varying proportions (Shown in Table \ref{tab:ex3_table}) where
the model can generate images from a GAN or from the real dataset. For generated images, we use the GANs of \cite[Appendix B]{jitkrittum2018informative}.
In each trial, $n=2000$ samples are used. We partition the dataset such that the reference distribution draws distinct independent samples, and each model samples independently of the remainder of the pool. All algorithms receive the same model samples.
The kernel used is the Inverse Multiquadric (IMQ) on 2048 features extracted by the Inception-v3 network at
the pool layer \cite{szegedy2016rethinking}. Additionally, we use 50:50 split for  \relmulmmd{}.
Our baseline is the procedure of choosing the lowest Fr\'{e}chet Inception Distance (FID) \cite{heusel2017gans}. We note the
authors did not propose a statistical test with FID. Table 
\ref{tab:ex3_table} summaries the results from the experiment.

In Table \ref{tab:ex3_table}, we report the model-wise rejection rate (a high rejection indicts a poor candidate relatively speaking) and the model selection rate (which indicates the rate that the model has the smallest discrepancy from the given samples). The table illustrates several interesting points. First, even though
Model 1 shares the same portions as the true reference models, the quality of the generated images is a poor match to the reference images and thus is frequently rejected. A considerably higher FID score (than the rest) also supports this claim. Secondly, in this experiment, \textrm{MMD} is a good estimator of the best model for both \textrm{RelPSI} and \relmul{} (with splitting exhibiting higher variance) but the minimum FID score selects the incorrect model $73\%$ of the time. The additional testing indicate that Model 4 or Model 5 could be the best as they were rarely deemed worse than the best which is unsurprising given that their mixing proportions are closest to the true distribution. The low rejection for Model 4 is expected given that they differ by only $40$ samples. Model 2 and 3 have respectable model-wise rejections to indicate their position as worse than the best. Overall, both \textrm{RelPSI} and \relmul{} perform well and shows that the additional testing phase yields more information than the approach of picking the minimum of a score function (especially for FID).

\textbf{3. Density Comparison} $l=5$:
In our final experiment, we demonstrate \textrm{RelPSI-KSD} and \relmulksd{} on the Chicago data-set considered in Jitkrittum et al.\ \cite{jitkrittum2017linear} which consists of $11957$ data points. We split the data-set into disjoint sets such that $7000$ samples are used for training and the remainder for testing. For our candidate models, we trained a
Mixture of Gaussians (MoG) with expectation maximization with $C$ components where $C\in\{1,2,5\}$, Masked Auto-encoder for Density Estimation (MADE) \cite{germain2015made} and a Masked Auto-regressive Flow (MAF) \cite{papamakarios2017masked}. MAF with 1 autoregressive layer with a standard normal as the base distribution (or equivalently MADE) and MAF model has 5 autoregressive layers with a base distribution of a MoG (5). Each autoregressive layer is a feed-forward network with 512 hidden units. Both invertible models are trained with maximum likelihood with a small amount of $\ell_2$ penalty on the weights. In each trial, we sample $n=2000$ points independently of the test set. The resultant density shown in Figure \ref{fig:flow_dens} and the reference distribution in Figure \ref{fig:e5_ref}. We compare our result with the negative log-likelihood (NLL). Here we use the IMQ kernel.

The results are shown in Table \ref{tab:ex4_table}.
If performance is measured by a higher model-wise rejection rates,
for this experiment RelPSI-KSD performs better than \relmul{-KSD}. RelPSI-KSD suggests that MoG (1), MoG (2) and MADE are worse than the best but is unsure about MoG (5) and MAF. Whilst the only significant rejection of \relmul{-KSD} is MoG (1). These findings with RelPSI-KSD can be further endorsed by inspecting the density (see Figure \ref{fig:flow_dens}). It is clear that MoG (1), MoG (2) and MADE are too simple. But between MADE and MAF (5), it is unclear which is a better fit. Negative Log Likelihood (NLL) consistently suggest that MAF is the best which corroborates with our findings that MAF is one of the top models. The preference of MAF for NLL is due to log likelihood not penalizing the complexity of the model (MAF is the most complex with the highest number of parameters).
\begin{figure}[]
\centering
      \subfloat[Truth\label{fig:e5_ref}]{
        \includegraphics[width=.16\textwidth]{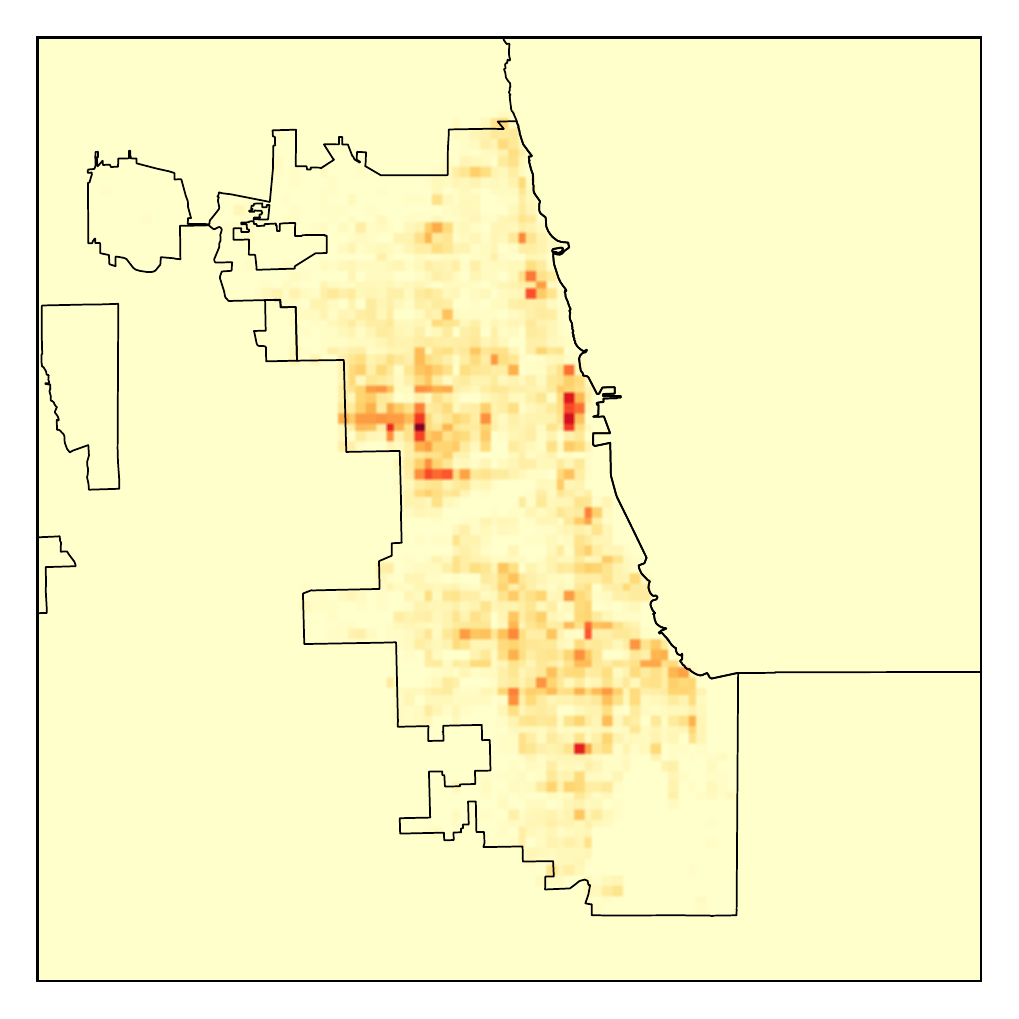}}
      \subfloat[MoG (1)\label{fig:e5_1}]{
        \includegraphics[width=.16\textwidth]{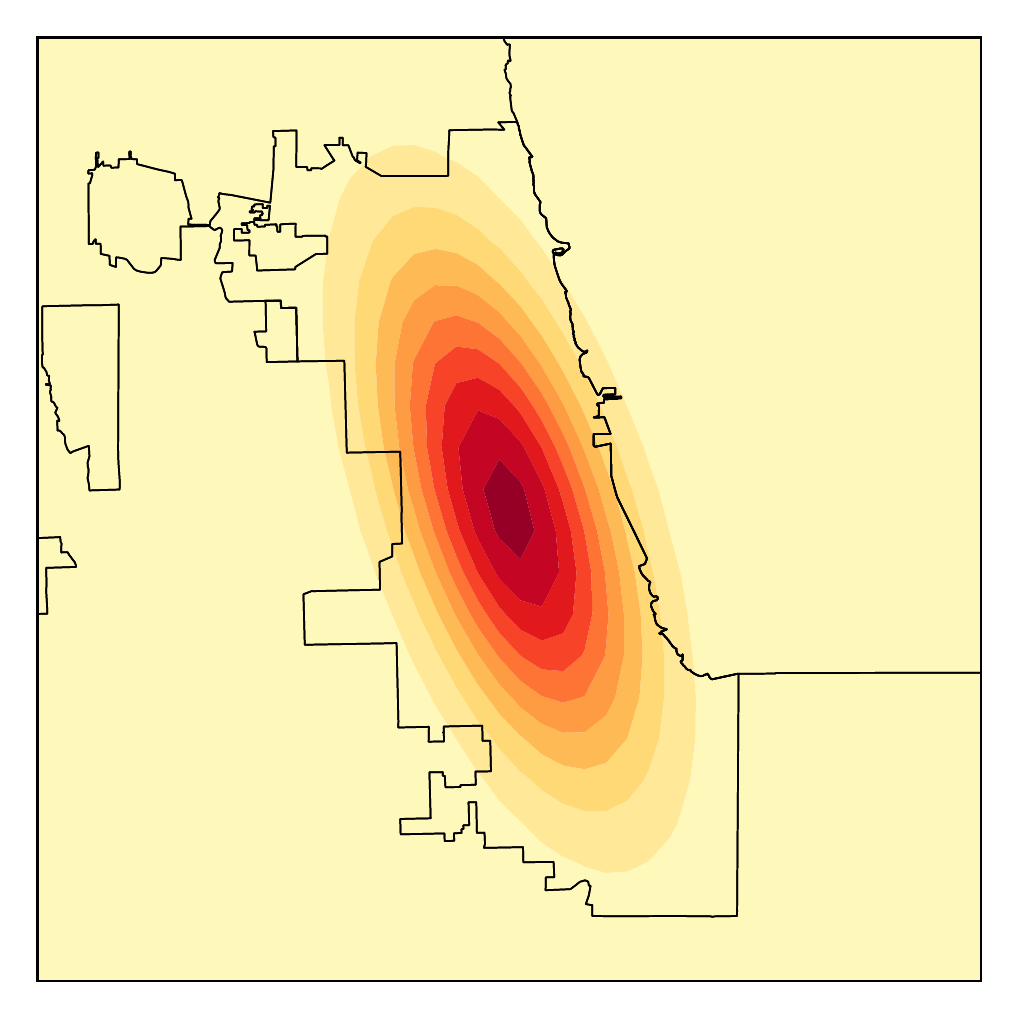}}
              \subfloat[MoG (2)\label{fig:e5_2}]{
        \includegraphics[width=.16\textwidth]{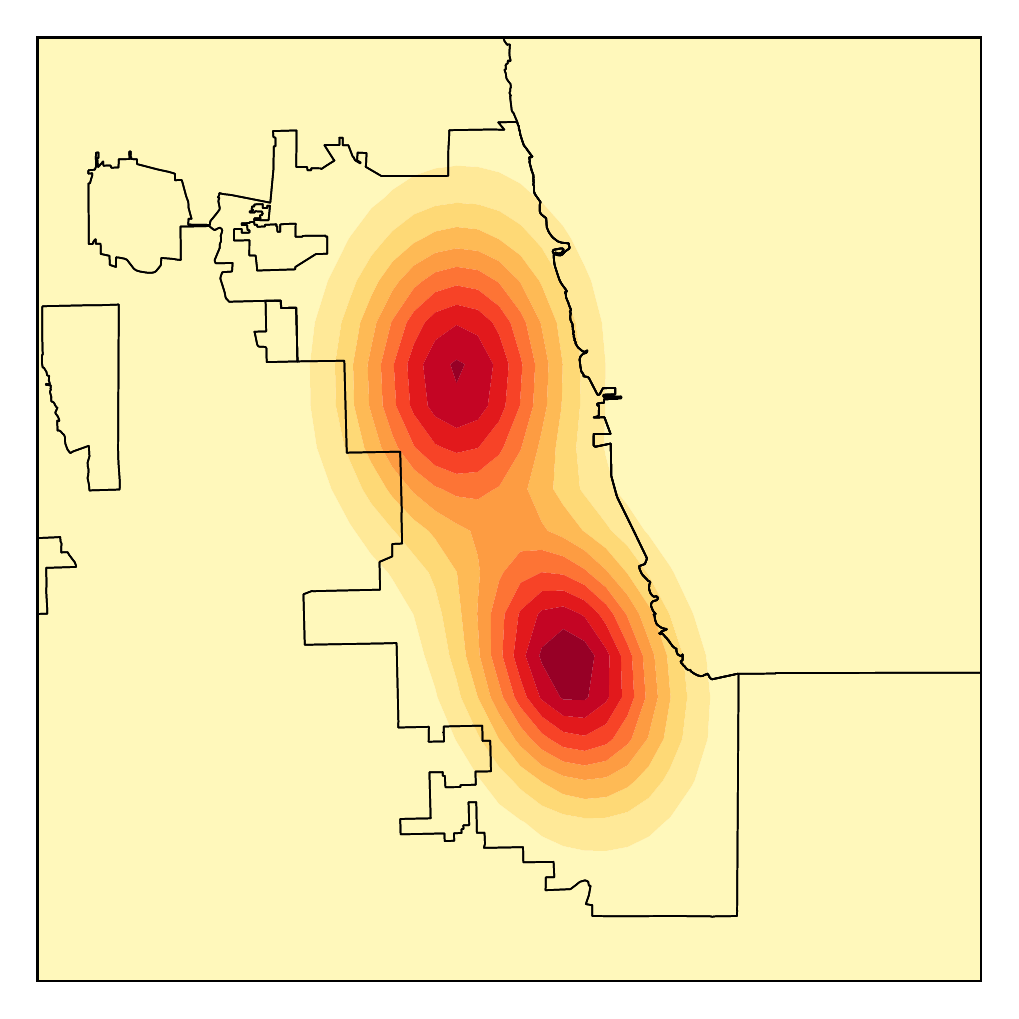}}
              \subfloat[MoG (5)\label{fig:e5_3}]{
        \includegraphics[width=.16\textwidth]{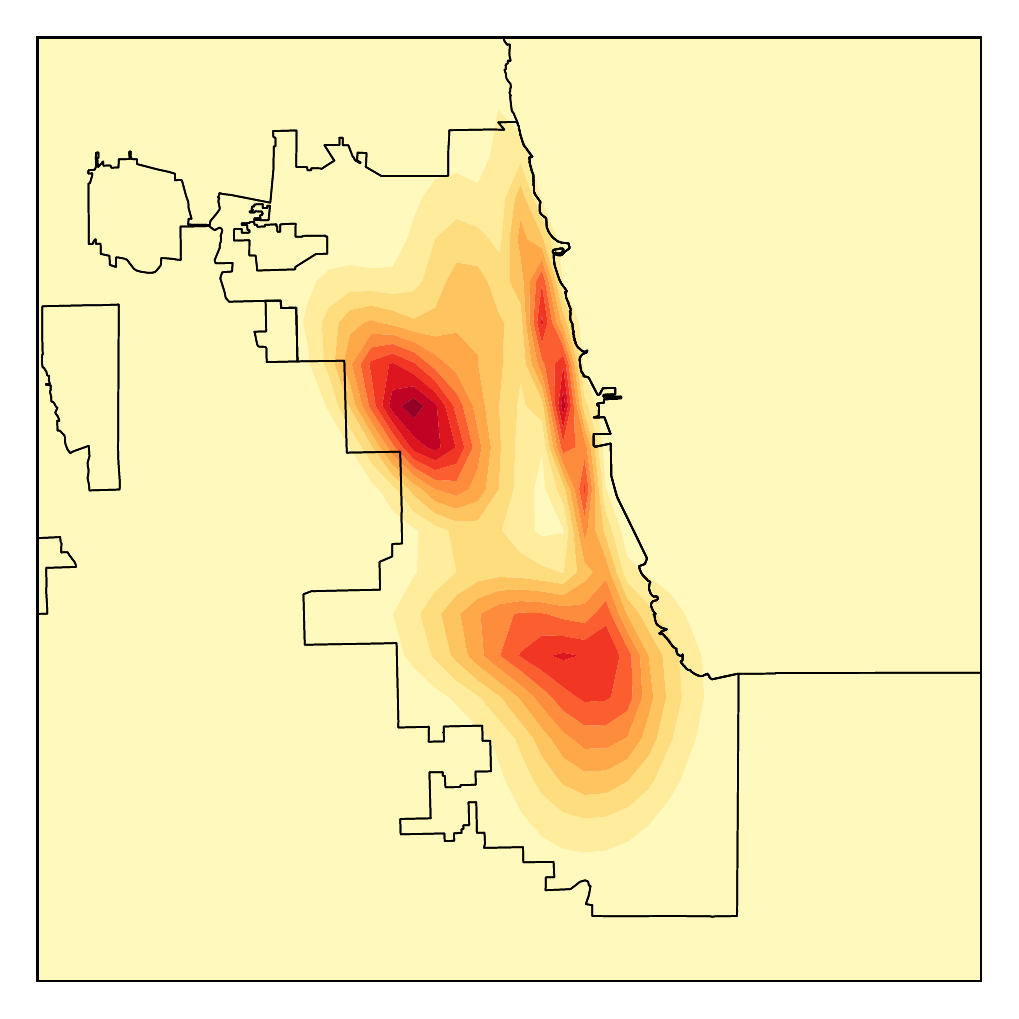}}
              \subfloat[MADE\label{fig:e5_4}]{
        \includegraphics[width=.16\textwidth]{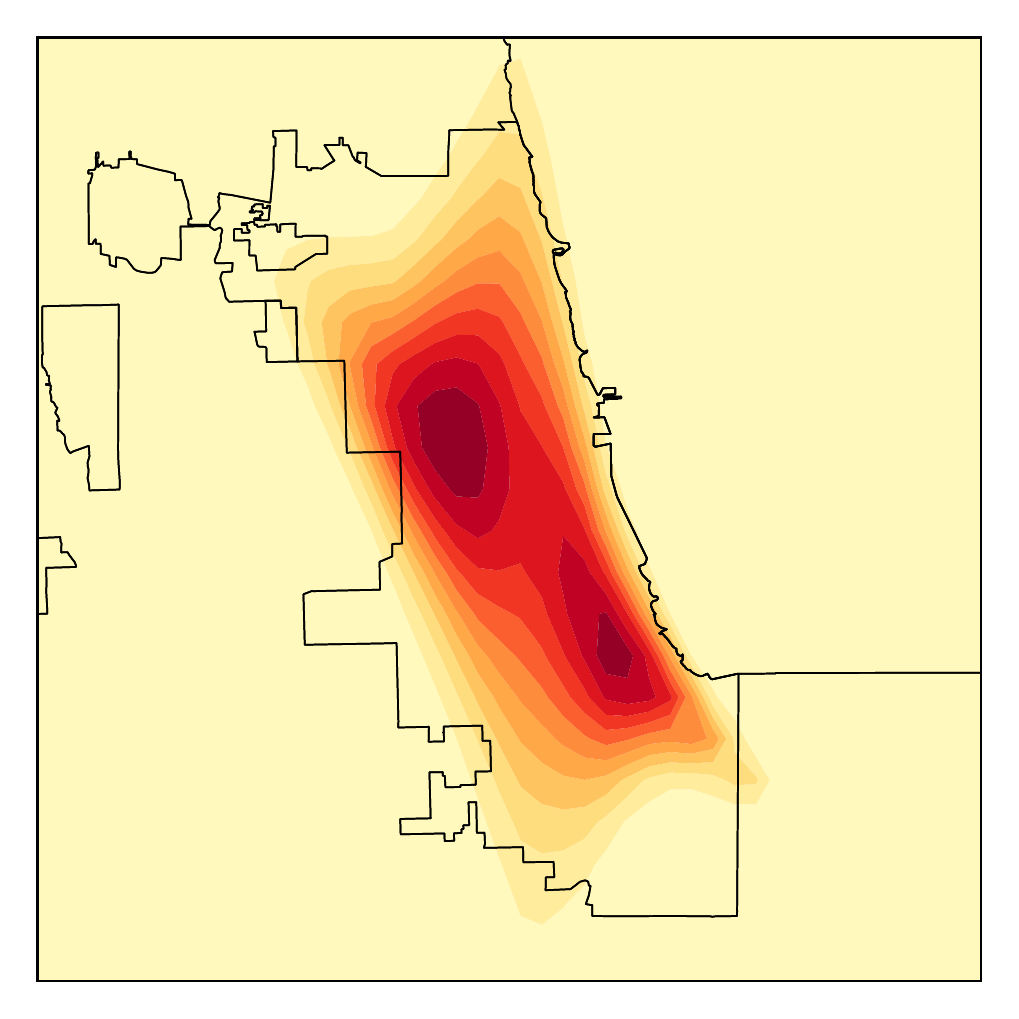}}
              \subfloat[MAF\label{fig:e5_5}]{
        \includegraphics[width=.16\textwidth]{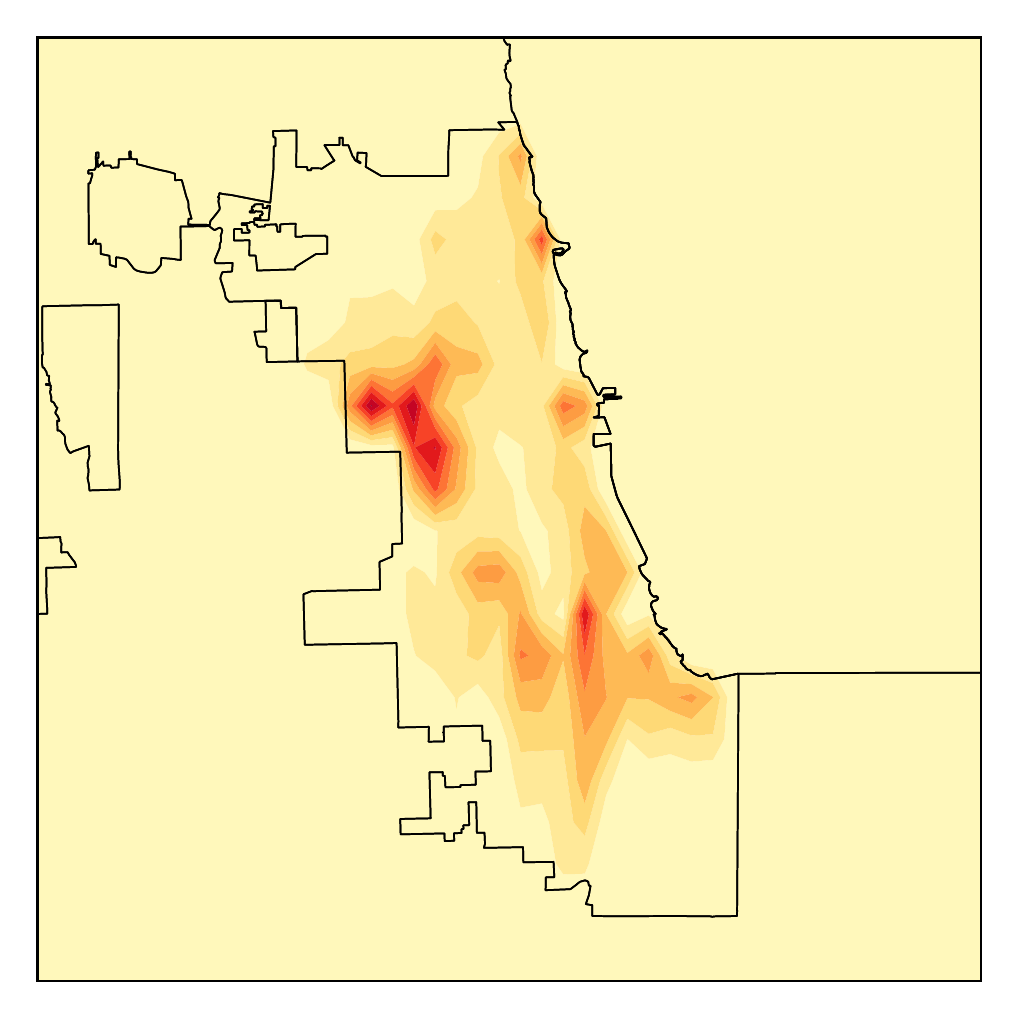}}
    \caption{The density plots of the trained models on the Chicago Crime dataset.
    }
    \label{fig:flow_dens}
\end{figure}
%
%
\begin{table}[ht]
\centering
\begin{tabular}{@{}lcccccc@{}}
\toprule
        & \multicolumn{2}{c}{\small RelPSI-KSD} & \multicolumn{2}{c}{\small RelMul-KSD} & \multicolumn{2}{c}{NLL} \\ \midrule
Model   & Rej.         & Sel.         & Rej.          & Sel.          & Aver.            & Sel. \\ \midrule
MoG (1) & 0.42         & 0.           & 0.22          & 0             & 2.64   & 0    \\
MoG (2) & 0.28         & 0.01         & 0.07          & 0.08          & 2.55   & 0    \\
MoG (5) & 0.02         & 0.62         & 0             & 0.38          & 2.38  & 0    \\ \midrule
MADE    & 0.26         & 0.01         & 0.04          & 0.03          & 2.53  & 0    \\
MAF (5) & 0          & 0.36         & 0             & 0.51          & 2.25  & 1.   \\ \bottomrule
\end{tabular}
\caption{Relative testing on unconditional density estimation models. The
    model-wise rejection rates, selection rates and Negative Log Likelihood (NLL) scores are reported.
    These results are averaged over 100 trials. 
}
\label{tab:ex4_table}
\vspace{-4mm}
\end{table}



\subsubsection*{Acknowledgments}
M.Y. was supported by the
JST PRESTO program JPMJPR165A and partly supported by MEXT
KAKENHI 16H06299 and the RIKEN engineering network funding.

\bibliography{ref}
\bibliographystyle{plain}

\clearpage

\onecolumn
\appendix

\begin{center}
{\LARGE{}{}{}{}{}{}\ourtitle{}} 
\par\end{center}

\begin{center}
\textcolor{black}{\Large{}{}{}{}{}{}Supplementary}{\Large{}{}{}{}{}
} 
\par\end{center}

\section{Definitions and FPR proof}
In this section, we define TPR and FPR, and prove that our proposals control FPR.

Recall the definitions of $\mathcal{I}_-$ and $\mathcal{I}_+$ (see Section \ref{sec:psi}).
$\mathcal{I}_-$ is the set of models that are \textit{not} worse than $P_J$ (the true best model).
$\mathcal{I}_+$ is the set of models that are worse than $P_J$.
We say that an algorithm decides that a model $P_i$ is positive if it decides that $P_i$ is worse than $P_J$.
We define true positive rate (TPR) and false positive rate (FPR) to be

\label{sec:prdefinition}
\begin{equation*}
\mathrm{FPR} =\frac{1}{|\mathcal{I}_{-}|}\mathbb{E}[|\{i\in\mathcal{I}_-:\text{
the algorithm decides that $P_i$ is positive}\}|],
\end{equation*}
\begin{equation*}
\mathrm{TPR} =\frac{1}{|\mathcal{I}_{+}|}\mathbb{E}[|\{i\in\mathcal{I}_+:\text{
the algorithm decides that $P_i$ is positive}\}|].
\end{equation*}

Both TPR and FPR can be estimated by averaging the TPR and FPR with multiple independent trials (as was done in Experiment \ref{sec:more_experiments}). We call this quantity the empirical TPR and FPR, denoted as $\widehat{\mathrm{TPR}}$ and $\widehat{\mathrm{FPR}}$ respectively.

The following lemma shows that our proposals controls FPR.

\fpr*
\begin{proof}
\label{proof:fpr}
From law of total expectation, we have
\begin{align*}
\mathrm{FPR} &=\frac{1}{\vert \mathcal{I}_{-}|}\mathbb{E}[|\{i\in\mathcal{I}_-:\text{ the algorithm decides that $i$ is positive}\}|] \\
&=\frac{1}{|\mathcal{I}_{-}|}\mathbb{E}[\mathbb{E}[|\{i\in\mathcal{I}_-:\text{ the algorithm decides that $i$ is positive}\}|\,|\, P_{\hat{J}} \text{ is selected}]]  \\
&=\frac{1}{|\mathcal{I}_{-}|}\mathbb{E}[\mathbb{E}[\sum_{i\in\mathcal{I}_-}\mathbb{I}(\text{The algorithm decides that $i$ is positive})\,|\,P_{\hat{J}} \text{ is selected}]]  \\
&= \frac{1}{|\mathcal{I}_{-}|}\mathbb{E}[\sum_{i\in\mathcal{I}_-}\mathbb{P}(\text{The algorithm decides that $i$ is positive}\,|\, P_{\hat{J}} \text{ is selected})] \\
&= \frac{1}{|\mathcal{I}_{-}|}\mathbb{E}[\sum_{i\in\mathcal{I}_-}\mathbb{P}(\sqrt{n}[\hat{D}(P_i,R)-\hat{D}(P_{\hat{J}},R)] > \hat{t}_\alpha\,|\,\hat{J} \text{ is selected})] \\
&\le \frac{1}{|\mathcal{I}_{-}|}\mathbb{E}[\sum_{i\in\mathcal{I}_-}\alpha] \\
&= \frac{1}{|\mathcal{I}_{-}|}\mathbb{E}\big [|I_{-}| \alpha \big ] \\
&= \alpha,
\end{align*}
where $\mathbb{I}$ is the indicator function.
\end{proof}

\section{Algorithms}
Algorithms for \relpsi\ (see Algorithm \ref{algorithm:relpsi}) and \relmul\
(see Algorithm \ref{algorithm:relmul}) proposed in Section
\ref{section:proposal} are provided in this section.
\begin{algorithm}[ht]
    \caption{\relpsi\ $H_{0,i}: D(P_{\hat{J}},R)\ge D(P_i,R)\ | \
    P_{\hat{J}}$ is selected.
    }
    \begin{algorithmic}[1]
        \Procedure{\relpsi{}}{$\mathcal{M}, R, \alpha$}
        \State{Estimate $\hat{\bm\Sigma}$ given in Theorem \ref{theorem:ksd} and Theorem \ref{theorem:mmd} for KSD and MMD respectively.}
        \State{$\bm{r} \gets (0, \ldots, 0) \in \{0,1\}^l$ }
        \State{$\bm{z} \gets [\sqrt{n}\hat{D}(P_1,R), \sqrt{n}\hat{D}(P_2,R) ,\ldots, \sqrt{n}\hat{D}(P_l,R)]^\top$}
        \State{$\hat{J} \gets \arg\min_{j \in \mathcal{I}}\hat{D}(P_j,R)$.}
        \State{Compute $\bm{A}$ and $\bm{b}$ (as defined in Section \ref{sec:selection}).}
        \For{$i\in \mathcal{I} : i \neq \hat{J}$}
            \State{$\bm\eta \gets [0,\ldots,\overbrace{-1}^{\hat{J}},\ldots,\overbrace{1}^{i},\ldots,0]^\top$}
            \State{$\hat{\sigma} \gets \sqrt{\bm\eta^\top\hat{\bm\Sigma}\bm\eta}$}
            \State{Compute $\mathcal{V}^+$ and $\mathcal{V}^-$ (described in Lemma \ref{theorem:poly}).}
            \State{$\hat{t}_\alpha \gets \hat{\sigma}\Phi^{-1}\bigg((1-\alpha)\Phi\left(\frac{{\mathcal{V^+}}}{\hat{\sigma}}\right)+\alpha\Phi\left(\frac{{\mathcal{V^-}}}{\hat{\sigma}}\right)\bigg)$}
            \State{ $\bm{r}_i\gets\bm\eta^\top\bm{z} > \hat{t}_\alpha$ }
        \EndFor\\
        \Return{$\bm{r}$}
        \EndProcedure
    \end{algorithmic}
    \label{algorithm:relpsi}
\end{algorithm}

\begin{algorithm}[ht]
    \caption{\relmul\ $H_{0,i}: D(P_{\hat{J}},R)\ge D(P_{i},R)\ | \ P_{\hat{J}}$ is selected.}
    \begin{algorithmic}[1]
        \Procedure{\relmul}{$\mathcal{M}, R, \alpha, \rho$}
        \State{Estimate $\hat{\bm\Sigma}$ as given in Theorem \ref{theorem:ksd} and Theorem \ref{theorem:mmd} for KSD and MMD respectively.}
        \State{$\mathcal{D}_0,\,\mathcal{D}_1 \gets$  \texttt{SplitData}($\mathcal{M}, R, \rho$)}
        \State{$n_1 \gets \rho n$}
        \State{(With $\mathcal{D}_0$) $\hat{J} \gets \arg\min_{j\in \mathcal{I}}\hat{D}(P_j,R)$.}
        \State{Compute $\bm{A}$ and $\bm{b}$. }
        \For{$i\in\mathcal{I}: i \neq \hat{J}$}\ (with $\mathcal{D}_1$)
                \State{Compute $\bm{z}_2 = [\sqrt{n_1}\hat{D}(P_1,R), \sqrt{n_1}\hat{D}(P_2,R) ,\ldots, \sqrt{n_1}\hat{D}(P_l,R)]^\top$}
                \State{$\bm\eta^\top = [0,\ldots,\overbrace{-1}^{\hat{J}}\ldots,\overbrace{1}^{i},\ldots,0]$}
                \State{$\hat{\sigma} \gets \sqrt{\bm\eta^\top\hat{\bm\Sigma}\bm\eta}$}
                \State{$\bm{p}_i \gets 1-\Phi(\frac{\bm\eta^\top\bm{z}_2}{\hat{\sigma}})$}
        \EndFor\\
        \Return{\texttt{FDRCorrection}($\bm{p}, \alpha$)}
        \EndProcedure
    \end{algorithmic}
    \label{algorithm:relmul}
\end{algorithm}
In algorithm \ref{algorithm:relmul}, \texttt{FDRCorrection}($\bm{p}, \alpha$) takes a list of $p$-values $\bm{p}$ and returns a list of rejections for each element of $\bm{p}$ such that the false discovery rate is controlled at $\alpha$. In our experiments, we use the Benjamini–Yekutieli procedure \cite{benjamini2001control}.
\texttt{SplitData}($\mathcal{M}, R, \rho$) is a function that splits the samples generated by $R$ and $\mathcal{M}$ (if it is represented by samples). It returns two datasets $\mathcal{D}_0$ and $\mathcal{D}_1$ such that $|\mathcal{D}_0| = (1-\rho)n$ and $|\mathcal{D}_1| = \rho n$.

\section{Asymptotic distributions}
In this section, we prove the asymptotic distribution of $\widehat{\ksd}^2_u$ and also provide the asymptotic distribution of $\widehat{\mmd}^2_u$ for completeness.

\begin{restatable}[Asymptotic Distribution of $\bighat{\ksd^2_u}$]{theorem}{ksdasmpy}
    \label{theorem:ksd}
    Let $P_i, P_j$ be distributions with density functions $p_i, p_j$
    respectively, and let $R$ be the data generating distribution. Assume that
    $P_i, P_j, R$ are distinct. We denote a sample by $Z_n = Z \sim R$. 
$
        \sqrt{n}
        \Bigg (
        \begin{pmatrix}
            \bighat{\ksd}^2_u(P_i,Z) \\
            \bighat{\ksd}^2_u(P_j,Z)
        \end{pmatrix}
        -
        \begin{pmatrix}
            \ksd^2(P_i,R) \\
            \ksd^2(P_j,R)
        \end{pmatrix}
        \Bigg )
        \xrightarrow{d}
        \mathcal{N}
        (
        \bm{0}
        ,
        \bm\Sigma
    ),
$
    where $\bm\Sigma =         \begin{pmatrix}
            \sigma^2_{P_iR}\quad \sigma_{P_iRP_jR} \\
            \sigma_{P_iRP_jR}\quad  \sigma^2_{P_jR} \\
        \end{pmatrix},\ \sigma_{P_iRP_jR}=\mathrm{Cov}_{x\sim R} [\E_{x'\sim R}
        [u_{p_i}(x,x')], \E_{x'\sim R} [u_{p_j}(x,x')]]$ and $ \sigma^2_{P_iR}=
        \mathrm{Var}_{x'\sim R} [\E_{x'\sim R} [u_{p_i}(x,x')]$.
\end{restatable}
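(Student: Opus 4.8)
The plan is to recognize $\bighat{\ksd}^2_u(P_i, Z)$ and $\bighat{\ksd}^2_u(P_j, Z)$ as a \emph{vector-valued} $U$-statistic of order $2$ with kernel $\bm{h}(x,x') = (u_{p_i}(x,x'), u_{p_j}(x,x'))^\top$ computed on the single i.i.d.\ sample $Z \sim R$, and then invoke the classical asymptotic theory for (non-degenerate) $U$-statistics. First I would note that $\E_{x,x'\sim R}[\bm{h}(x,x')] = (\ksd^2(P_i,R),\ksd^2(P_j,R))^\top$, so the recentered vector in the statement is exactly $\sqrt{n}(\bm{U}_n - \E\bm{U}_n)$ for this $U$-statistic. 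The key structural fact is that the $U$-statistic is \emph{non-degenerate} under the assumption that $P_i, P_j, R$ are all distinct: the first projection $\bm{h}_1(x) := \E_{x'\sim R}[\bm{h}(x,x')] = (\E_{x'}[u_{p_i}(x,x')], \E_{x'}[u_{p_j}(x,x')])^\top$ has a covariance matrix $\bm\Sigma = \Cov_{x\sim R}[\bm{h}_1(x)]$ that is (generically) nonzero; degeneracy of component $i$ would force $\ksd^2(P_i,R)=0$, i.e.\ $P_i = R$, contradicting distinctness. I would state the needed moment condition, namely $\E_{x\sim R}\|\bm h_1(x)\|^2 < \infty$ (equivalently $\sigma^2_{P_iR},\sigma^2_{P_jR} < \infty$), which follows from the standard KSD integrability assumptions on $p_i, p_j$ and the kernel $k$ already referenced in the Background section.

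Next I would apply the multivariate central limit theorem for $U$-statistics (Hoeffding's theorem; see e.g.\ Serfling or Lee, \emph{U-statistics}): for a vector-valued $U$-statistic of order $m=2$ with square-integrable kernel,
\begin{equation*}
\sqrt{n}\,(\bm{U}_n - \bm\theta) \xrightarrow{d} \mathcal{N}\!\big(\bm 0,\ m^2\,\bm\Sigma_1\big),
\end{equation*}
where $\bm\Sigma_1 = \Cov(\bm h_1(X_1))$ is the covariance of the first Hoeffding projection and $\bm\theta = \E\bm h(X_1,X_2)$. With $m=2$ the asymptotic covariance is $4\bm\Sigma_1$. However, comparing with the target statement, the claimed limiting covariance is $\bm\Sigma = \Cov_{x\sim R}[\bm h_1(x)]$ \emph{without} the factor $4$, so I would reconcile the normalization: the KSD $U$-statistic is usually written $\bighat{\ksd}^2_u = \frac{1}{n(n-1)}\sum_{i\ne j} u_p(x_i,x_j)$, which matches the Hoeffding convention, so strictly the factor should be $4$; I would either absorb it into the definition of $\sigma^2_{P_iR}$ (as the paper's notation $\sigma^2_{P_iR}=\Var_{x'\sim R}[\E_{x'\sim R}[u_{p_i}(x,x')]]$ suggests they already did, possibly with a convention that folds the constant in) or carry the $4$ explicitly; in the write-up I would match whichever convention Theorem~\ref{theorem:ksd} as stated intends, and note that the off-diagonal entry is then $\Cov_{x\sim R}[\E_{x'}u_{p_i}(x,x'),\E_{x'}u_{p_j}(x,x')]$ by the same projection argument applied to the cross term. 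The derivation of the covariance entries themselves is the routine Hoeffding-decomposition computation: write $\bm U_n - \bm\theta = \frac{2}{n}\sum_{i}(\bm h_1(X_i)-\bm\theta) + (\text{degenerate remainder of order }O_p(1/n))$, the remainder being asymptotically negligible after multiplying by $\sqrt n$, and apply the ordinary multivariate CLT to the leading linear term.

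The main obstacle, and the only genuinely KSD-specific part, is verifying the integrability/moment conditions needed for Hoeffding's theorem — i.e.\ that $\E_{x,x'\sim R}[u_{p_i}(x,x')^2] < \infty$ and hence $\E_{x}[\bm h_1(x)]^2 < \infty$. This requires controlling $u_{p_i}(x,x')$, which involves the score function $\bm s_{p_i}(x) = \nabla\log p_i(x)$ and derivatives of $k$ up to second order; I would invoke the standard assumptions from \cite{liu2016kernelized, chwialkowski2016kernel} (e.g.\ $\E_{x\sim R}[\|\bm s_{p_i}(x)\|^2 k(x,x)] < \infty$, boundedness of $\nabla_{x,x'}k$, etc.) under which the marginal KSD CLT already holds, and observe these same conditions give joint square-integrability of the vector kernel $\bm h$. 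Once integrability is in hand, everything else is a direct citation of the vector $U$-statistic CLT plus the elementary identification of the Hoeffding projections; I would also remark that non-degeneracy (needed so the Gaussian limit is the relevant one rather than a degenerate $\chi^2$-type limit) is exactly where the ``$P_i, P_j, R$ distinct'' hypothesis is used, since a zero diagonal entry $\sigma^2_{P_iR}=0$ is equivalent to $\ksd^2(P_i,R)=0$ i.e.\ $P_i=R$.
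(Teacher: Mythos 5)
Your proposal takes essentially the same route as the paper: the paper's proof also treats the pair of estimators as a second-order $U$-statistic and directly quotes the covariance matrix $\frac{4(n-2)}{n(n-1)}\bm\zeta + \mathcal{O}_p(n^{-2})$ with $\zeta_{ii} = \Var_{x\sim R}(\E_{y\sim R}[u_{p_i}(x,y)])$ and the analogous cross-covariance, i.e.\ the Hoeffding projection argument you spell out in more detail. Your observation about the factor of $4$ is well taken --- the paper's own proof carries the $4$ in $\frac{4(n-2)}{n(n-1)}\bm\zeta$ while the theorem statement defines $\bm\Sigma$ without it, so the normalization discrepancy you flag is present in the paper itself, and your treatment of non-degeneracy and the moment conditions is more careful than the paper's.
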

\begin{proof}
    \label{proof:asymp_ksd_u}
    Let $\mathcal{\mathcal{\mathcal{\mathcal{{X}}}}}=\{x_{i}\}_{i=1}^{n}$
be $n$ i.i.d. random variables drawn from $R$ and we have a model
with its corresponding gradient of its log density $s_{P_i}(x)=\nabla_{x}\log p_i(x)$.
The complete U-statistic estimate of KSD between $P_i$ and $R$ is
\[
\bighat{\ksd}_{u}^{2}(P_i,R)=\mathbb{E}_{x,x'\sim R}[u_{p_i}(x,x')]\approx\frac{{1}}{n_{2}}\sum_{i\neq j}^{n}u_{p_i}(x_{i},x_{j})
\]
where $u_{p_i}(x,y)=s_{p_i}(x)^{\top}s_{p_i}(y)k(x,y)+s_{p_i}(y)^{\top}\nabla_{x}k(x,y)+s_{p_i}(x)^{\top}\nabla_{y}k(x,y)+\Tr[\nabla_{x,y}k(x,y)]$
and $n_{2}=n(n-1)$.

Similarly, for another model $P_j$ and its gradient of its log density $s_{P_j}(x)=\nabla_{x}\log p_j(x)$.
Its estimator is
\[
\bighat{\ksd}_{u}^{2}(P_j,R)=\mathbb{{E}}_{x,x'\sim R}[u_{p_j}x,x')]\approx\frac{{1}}{n_{2}}\sum_{i\neq j}^{n}u_{p_j}(x_{i},x_{j})
\]
where $u_{p_j}(x,y)=s_{p_j}(x)^{\top}s_{p_j}(y)k(x,y)+s_{p_j}(y)^{\top}\nabla_{x}k(x,y)+s_{p_j}(x)^{\top}\nabla_{y}k(x,y)+\Tr[\nabla_{x,y}k(x,y)]$.

The covariance matrix of a U-statistic with a kernel of order
2 is

\[
\bm\Sigma = \frac{4(n-2)}{n(n-1)}\bm\zeta+\mathcal{O}_p(n^{-2})
\]

where, for the variance term and covariance term, we have $\zeta_{ii}=\mathrm{Var}_{x\sim R}(\mathbb{{E}}_{y\sim R}[u_{p_i}(x,y)])$
and $\zeta_{ij}=\Cov_{x\sim R}(\mathbb{{E}}_{y\sim R}[u_{p_i}(x,y)],\mathbb{{E}}_{y\sim R}[u_{p_j}(x,y)])$
respectively.
\end{proof}
The asymptotic distribution is provided below and is shown to be the case by Bounliphone et al.\ \cite{bounliphone2015test}.
\begin{theorem}[Asymptotic Distribution of $\bighat{\mmd}^2_u$ \cite{bounliphone2015test}]
    \label{theorem:mmd}
    Assume that $P_i$, $P_j$ and $R$ are distinct. We denote samples $X \sim P_i$, $Y \sim P_j$, $Z \sim R$.
    \begin{equation*}
        \sqrt{n}
        \bigg (
        \begin{pmatrix}
            \bighat{\mmd}^2_u(P_i,Z) \\
            \bighat{\mmd}^2_u(P_j,Z)
        \end{pmatrix}
        -
        \begin{pmatrix}
            \mmd^2(P_i,R) \\
            \mmd^2(P_j,R)
        \end{pmatrix}
        \bigg )
        \xrightarrow{d}
        \mathcal{N}
        (
        \bm{0}
        ,
        \bm\Sigma )
    \end{equation*}

    where $\bm\Sigma =        \begin{pmatrix}
            \sigma^2_{P_iR}\quad \sigma_{P_iRP_jR} \\
            \sigma_{P_iRP_jR}\quad  \sigma^2_{P_jR} \\
        \end{pmatrix}$ and $\sigma_{P_iRP_jR}=\Cov[\E_{x'\sim P_i\times R} [h(X,x')], \E_{x'\sim P_j\times R} [g(X,x')]]$ and $ \sigma^2_{P_j R}= \Var [\E_{x'\sim P_j\times R} [h(X,x')]$.
\end{theorem}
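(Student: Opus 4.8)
The plan is to reduce the bivariate limit to an ordinary multivariate central limit theorem for i.i.d.\ vectors through Hoeffding's projection, exactly along the lines of the proof of Theorem~\ref{theorem:ksd}. Using the paired-index form of the MMD estimator from \cite{gretton2012kernel}, write $\bighat{\mmd}^2_u(P_i,Z) = \binom{n}{2}^{-1}\sum_{a<b} h_i\big(w^{(i)}_a,w^{(i)}_b\big)$ with $w^{(i)}_a := (x_a,z_a)$ and $h_i\big((x,z),(x',z')\big) = k(x,x')+k(z,z')-k(x,z')-k(x',z)$, and similarly $\bighat{\mmd}^2_u(P_j,Z)$ with $w^{(j)}_a := (y_a,z_a)$. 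Since $x_a\sim P_i$, $y_a\sim P_j$, $z_a\sim R$ are mutually independent, the triples $(x_a,y_a,z_a)$ are i.i.d.\ over $a$, so both statistics are genuine one-sample U-statistics of order two, the two sharing only the common $R$-component $z_a$ --- the sole source of correlation between them.

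First I would compute the first-order (H\'{a}jek) projection of each kernel: with $f_i := \mu_{P_i} - \mu_R$ one gets $\E_{x'\sim P_i,\, z'\sim R}\big[h_i((x,z),(x',z'))\big] = f_i(x) - f_i(z) + c_i$ for a constant $c_i$, with mean $\mmd^2(P_i,R)$. Hoeffding's decomposition then gives $\bighat{\mmd}^2_u(P_i,Z) - \mmd^2(P_i,R) = \tfrac{2}{n}\sum_{a=1}^n\big(h_{1,i}(w^{(i)}_a) - \mmd^2(P_i,R)\big) + R^{(i)}_n$, where $h_{1,i}(x,z):= f_i(x)-f_i(z)$ and $\sqrt n\,R^{(i)}_n = o_p(1)$ because the degree-two component has variance $O(n^{-2})$ --- this is precisely the $\tfrac{4(n-2)}{n(n-1)}\bm\zeta + \mathcal O(n^{-2})$ covariance expansion already used for the KSD kernel. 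Stacking the two projections, $\sqrt n$ times the centered vector equals $\tfrac{2}{\sqrt n}\sum_{a=1}^n V_a + o_p(1)$ with $V_a := \big(h_{1,i}(w^{(i)}_a) - \mmd^2(P_i,R),\ h_{1,j}(w^{(j)}_a) - \mmd^2(P_j,R)\big)^\top$ i.i.d.\ and mean zero. The multivariate CLT together with Slutsky's lemma yields convergence to $\mathcal N(\bm 0,\bm\Sigma)$ whose $(i,i)$ and $(i,j)$ entries are the variance and covariance of the first-order projections, $\Var\big[\E_{x'\sim P_i\times R}[h(X,x')]\big]$ and $\Cov\big[\E_{x'\sim P_i\times R}[h(X,x')],\ \E_{x'\sim P_j\times R}[g(X,x')]\big]$ with $X=(x,z)$, i.e.\ the matrix displayed in the statement (carrying the usual factor $4$ of an order-two U-statistic, which we fold into $\sigma$).

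The only genuinely delicate point --- and the one invoking that $P_i,P_j,R$ are all distinct --- is non-degeneracy, $\zeta_{1,i}:=\Var\big(h_{1,i}(w^{(i)}_1)\big) = \Var_{P_i}(f_i) + \Var_R(f_i) > 0$. Were this zero, $f_i=\mu_{P_i}-\mu_R$ would be constant on the common support, forcing $\mmd^2(P_i,R) = \E_{P_i}[f_i] - \E_R[f_i] = 0$, which contradicts $P_i\neq R$ for a characteristic kernel; hence the U-statistic sits in the non-degenerate regime with the $\sqrt n$ rate (otherwise the limit would be a weighted sum of $\chi^2$ variables at rate $n$). No separate treatment of degenerate directions of $\bm\Sigma$ is needed, since the multivariate CLT accommodates a singular limiting covariance. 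The KSD counterpart (Theorem~\ref{theorem:ksd}) follows by the identical route --- in fact more simply, as $\bighat{\ksd}^2_u$ is already a one-sample U-statistic in the $R$-sample alone with first-order projection $x\mapsto\E_{x'\sim R}[u_{p_i}(x,x')]$, whose non-degeneracy uses the analogous separating property of the Stein kernel $u_{p_i}$ when $P_i\neq R$.
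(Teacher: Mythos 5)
Your proof is correct and follows the same standard route the paper relies on: the paper itself does not prove Theorem~\ref{theorem:mmd} but defers to Bounliphone et al.~\cite{bounliphone2015test}, and both that proof and the paper's sketch for the KSD analogue (Theorem~\ref{theorem:ksd}) rest on exactly the Hoeffding/H\'ajek projection of an order-two U-statistic plus the multivariate CLT, with the covariance expansion $\frac{4(n-2)}{n(n-1)}\bm\zeta+\mathcal{O}_p(n^{-2})$ that you also invoke. Your additional explicit computation of the projection $f_i(x)-f_i(z)$ with $f_i=\mu_{P_i}-\mu_R$ and the non-degeneracy argument via characteristicness are correct and simply fill in details the paper leaves to the citation.
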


\section{Relative Kernelized Stein Discrepancy (\textrm{RelKSD})}
\label{sec:RelKSD}
In this section, we describe the testing procedure for relative tests with KSD (a simple extension of \textrm{RelMMD} \cite{bounliphone2015test}).
Currently, there is no test of relative fit with Kernelized Stein Discrepancy, and so we propose such a test using the complete estimator $\bighat{\ksd}^2_u$ which we call \textrm{RelKSD}. The test mirrors the proposal of Bounliphone et al.\ \cite{bounliphone2015test} and, given the asymptotic distribution of $\bighat{\ksd}^2_u$, it is a simple extension since its
cross-covariance is known (see Theorem \ref{theorem:ksd}).

Given two candidate models $P_1$ and $P_2$ with a reference distribution $R$ with its samples denoted as $Z \sim R$, we define our test statistic as  $\sqrt{n}[\bighat{\ksd}^2_u(P_1,Z) -\bighat{\ksd}^2_u(P_2,Z)]$. For the test of relative similarity,
we assume that the candidate models ($P_1$ and $P_2$) and unknown generating distribution $R$ are all distinct. Then, under the null hypothesis $H_0: \ksd^2(P_1,R) - \ksd^2(P_2,R) \le 0$, we can derive the asymptotic null distribution as follows. By the continuous mapping theorem
and Theorem \ref{theorem:ksd}, we have
%
\begin{equation*}
    \sqrt{n}[\bighat{\ksd}^2_u(P_1,Z) -\bighat{\ksd}^2_u(P_2,Z)]
    - \sqrt{n}[\ksd^2(P_1,R)-\ksd^2(P_2,R)] \xrightarrow{d} \mathcal{N}\bigg(
    0, \sigma^2
       \bigg )         
\end{equation*}
where $\sigma^2 =\begin{pmatrix}
        1 \\
        -1
    \end{pmatrix}^\top
         \bm{\Sigma}
        \begin{pmatrix}
        1 \\
        -1
    \end{pmatrix} = \sigma^2_{P_1 R} - 2\sigma_{P_1 R P_2 R} + \sigma^2_{P_2
    R},$ 
    and $\bm\Sigma$ is
    defined in Theorem \ref{theorem:ksd} (which we assume is
    positive definite). We will also use the most conservative
threshold by letting the rejection threshold $t_\alpha$ be the
$(1-\alpha)$-quantile of the asymptotic distribution of
$\sqrt{n}[\bighat{\ksd}^2_u(P_1,Z)-\bighat{\ksd}^2_u(P_2,Z)]$ with mean zero
(see Bounliphone et al.\cite{bounliphone2015test}). If our statistic is above
the $t_\alpha$, we reject the null.

\section{Calibration of the test}
\label{sec:callibration}
In this section, we will show that the p-values obtained are well calibrated, when two distributions are equal, measured by either \textrm{MMD} or \textrm{KSD}. The distribution of p-values should be uniform.
Figure \ref{fig:appen_ecdf} shows the empirical CDF of p-values and should lie on the line if it is calibrated. Additionally, we show the empirical distribution of p values for a three of different mean shift problems where observed distribution is $R=\mathcal{N}(0,1)$ and our candidate models are $P_1=\mathcal{N}(\mu_1,1)$ and $P_2=\mathcal{N}(\mu_2,1)$

\begin{figure}[h]
    \centering
    \includegraphics[width=\linewidth]{./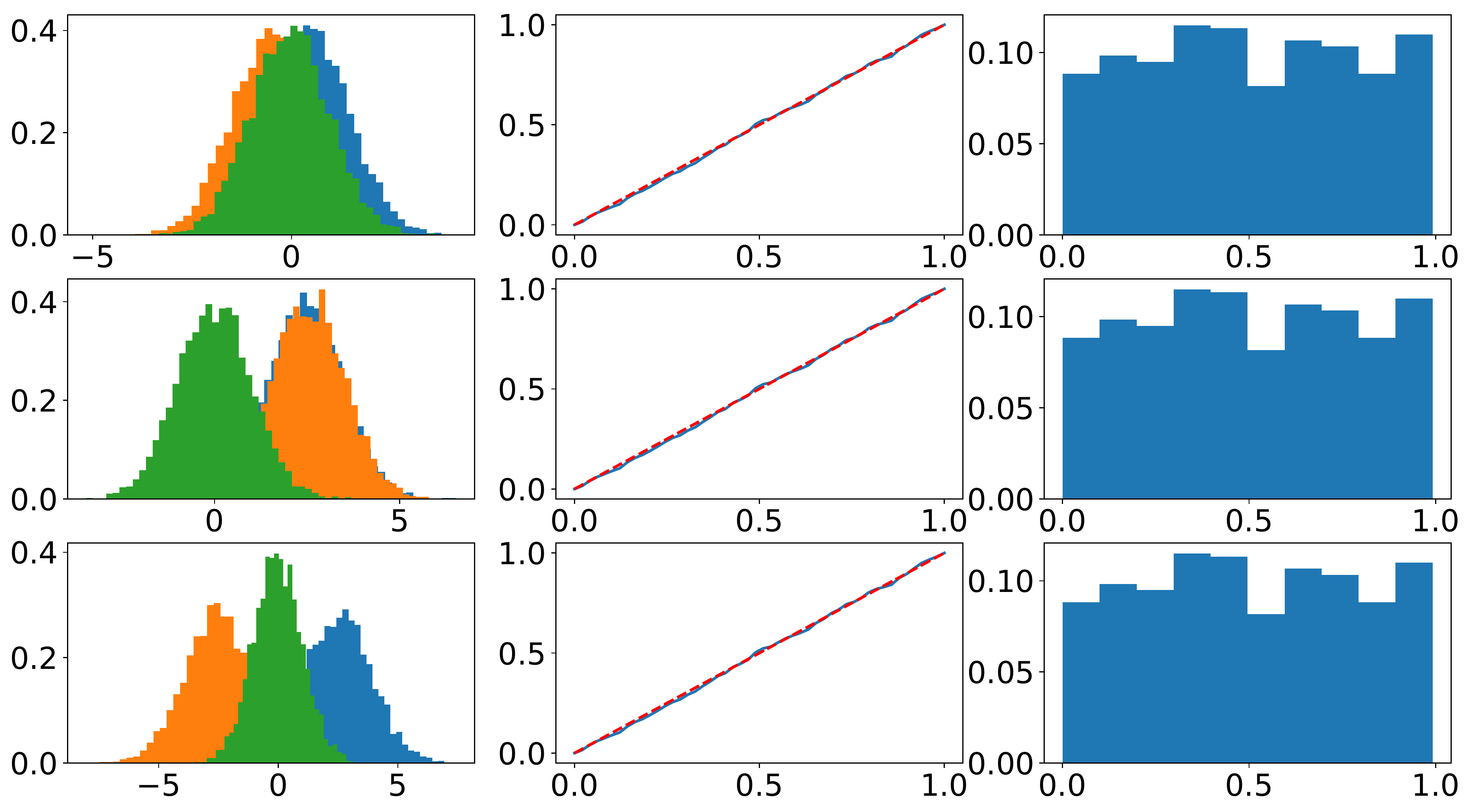}
    \caption{Mean shift experiment described in experiments with
    (a) $\mu_1 = 0.5$ and $\mu_2 = -0.5$,
    (b) $\mu_1 = 2.5$  and $\mu_2 = 2.5$,
    (c)  $\mu_1 = 2.5$ and $\mu_2 = -2.5$.}
    \label{fig:appen_ecdf}
\end{figure}



\section{Performance analysis for two models}

In this section, we analyse the performance of our two proposed methods:
\relpsi{} and \relmul{} for $l=2$ candidate models. 
We begin by computing the
probability that we select the best model correctly (and selecting
incorrectly). Then provide a closed form formula for computing the rejection
threshold, and from this we were able to characterize the probability of
rejection and proof our theoretical result.

\subsection{Cumulative distribution function of a truncated normal}

The cumulative distribution function (CDF) of a truncated normal is given by
\[
\Psi(x\,|\:\mu,\sigma,\mathcal{V}^-,\mathcal{V}^+)=\frac{{\Phi(\frac{{x-\mu}}{\sigma})-\Phi(\frac{{\mathcal{V}^--\mu}}{\sigma})}}{\Phi(\frac{{\mathcal{V}^+-\mu}}{\sigma})-\Phi(\frac{{\mathcal{V}^--\mu}}{\sigma})},
\]
where $\Phi$ is the CDF of the standard normal distribution \cite[Section 3.3]{burkardt2014truncated}.

\subsection{Characterizing the selection event}

Under the null and alternative hypotheses, for both $\widehat{\mathrm{MMD}}_{u}^{2}(P,R)$
and $\widehat{\mathrm{{KSD}}}_{u}^{2}(P,R)$, the test statistic is asymptotically
normal i.e., for a sufficiently large $n$, we have
\begin{equation*}
\sqrt{{n}}\big[\hat{D}(P_{2},R)-\hat{D}(P_{1},R) -\mu \big]\sim\mathcal{{N}}(0,\sigma^2),
\end{equation*}
where $\mu:=D(P_{2},R)-D(P_{1},R)$ is the population difference and $D(\cdot,\cdot)$
can be either $\mathrm{MMD}^{2}$ or $\mathrm{KSD}^{2}$.
The probability of selecting the model $P_1$, i.e., $P_{\hat{J}}=P_1$, is equivalent to the probability of observing $\hat{D}(P_{1},R) < \hat{D}(P_{2},R)$. The following lemma derives this quantity.
\begin{lemma}
\label{lemma:selection_event}
    Given two models $P_1$ and $P_2$, and the test statistic
    $\sqrt{{n}}[\hat{D}(P_{2},R)-\hat{D}(P_{1},R)]$ such that 
$\sqrt{{n}}\big[\hat{D}(P_{2},R)-\hat{D}(P_{1},R) -\mu
\big] \stackrel{d}{\to} \mathcal{{N}}(0,\sigma^2)$, where $\mu:=D(P_{2},R)-D(P_{1},R)$,
 then the probability that we select $P_1$ as the reference is 
    \begin{equation*}
        \mathbb{P}(P_{\hat{J}}=P_{1}) =
        \mathbb{P}(\hat{D}(P_{1},R)<\hat{D}(P_{2},R))
        \approx \Phi\left(\frac{\sqrt{n}\mu}{\sigma}\right).
    \end{equation*}
    It follows that     $\mathbb{P}(P_{\hat{J}} = P_{2})
    \approx \Phi(-\frac{\sqrt{{n}}{\mu}}{\sigma})$.
\end{lemma}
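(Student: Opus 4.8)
The plan is to rewrite the selection event as a one-sided inequality on the (rescaled) test statistic, invoke the assumed asymptotic normality, and standardize. First I would note that for $l=2$ candidates, $\hat{J}\in\arg\min_{j}\hat{D}(P_j,R)$ means that $\{P_{\hat J}=P_1\}$ is exactly $\{\hat{D}(P_1,R)<\hat{D}(P_2,R)\}=\{\hat{D}(P_2,R)-\hat{D}(P_1,R)>0\}$, modulo the tie event $\{\hat{D}(P_1,R)=\hat{D}(P_2,R)\}$; the latter has asymptotically vanishing probability because the limiting law of the difference is a non-degenerate Gaussian (we assume $\sigma^2>0$, which is guaranteed here since $P_1,P_2,R$ are distinct), so the tie-breaking convention does not affect the stated approximation.

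Next I would center and rescale. Setting $T_n:=\sqrt{n}\big[\hat{D}(P_2,R)-\hat{D}(P_1,R)-\mu\big]$, the hypothesis gives $T_n\xrightarrow{d}\mathcal{N}(0,\sigma^2)$, and the selection event becomes $\{\sqrt{n}[\hat{D}(P_2,R)-\hat{D}(P_1,R)]>0\}=\{T_n>-\sqrt{n}\mu\}$. Dividing by $\sigma$ and using the normal approximation $T_n/\sigma\approx\mathcal{N}(0,1)$ for large $n$,
\begin{equation*}
\mathbb{P}(P_{\hat J}=P_1)=\mathbb{P}\!\left(\frac{T_n}{\sigma}>-\frac{\sqrt{n}\mu}{\sigma}\right)\approx 1-\Phi\!\left(-\frac{\sqrt{n}\mu}{\sigma}\right)=\Phi\!\left(\frac{\sqrt{n}\mu}{\sigma}\right),
\end{equation*}
where the last step is the symmetry identity $1-\Phi(-x)=\Phi(x)$. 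The companion claim then follows from $\mathbb{P}(P_{\hat J}=P_2)=1-\mathbb{P}(P_{\hat J}=P_1)$ (again up to the vanishing tie probability) together with $1-\Phi(x)=\Phi(-x)$.

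The argument is routine; the only thing requiring care is the meaning of ``$\approx$''. The displayed equality is asymptotic: the exact finite-sample distribution of $T_n$ is unknown, so the statement should be read as ``for $n$ large, $\mathbb{P}(P_{\hat J}=P_1)$ is well approximated by $\Phi(\sqrt{n}\mu/\sigma)$'', which is how it is used in the rest of this section's performance analysis. If a quantitative version were needed, one could instead invoke a Berry--Esseen-type bound for the U-statistic difference, but that refinement is not used downstream. I would also flag that the approximating quantity itself depends on $n$ and drifts to $0$ or $1$ whenever $\mu\neq 0$, so the approximation is most informative when $\sqrt{n}\mu/\sigma$ is moderate --- precisely the regime relevant for comparing the true positive rates of \relpsi{} and \relmul{} in Theorem~\ref{theorem:tpr}.
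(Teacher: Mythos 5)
Your proposal is correct and follows essentially the same route as the paper's proof: rewrite $\{P_{\hat J}=P_1\}$ as $\{\sqrt{n}[\hat{D}(P_2,R)-\hat{D}(P_1,R)]>0\}$, apply the assumed asymptotic normality after centering by $\sqrt{n}\mu$, and use $1-\Phi(-x)=\Phi(x)$, with the complement giving the second claim. Your added remarks on the asymptotically negligible tie event and the precise reading of ``$\approx$'' are sensible clarifications but do not change the argument.
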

\begin{proof} 
    For some sufficiently large $n$, we have
    \begin{align*}
\mathbb{P}(P_{\hat{J}}=P_{1}) & =\mathbb{P}(\sqrt{{n}}\hat{D}(P_{1},R)<\sqrt{{n}}\hat{D}(P_{2},R))\\
 & =\mathbb{P}(\sqrt{{n}}[\hat{D}(P_{2},R)-\hat{D}(P_{1},R)]>0)\\
 & \approx1-\Phi(-\frac{\sqrt{n}{\mu}}{\sigma})=\Phi\left(\frac{\sqrt{n}\mu}{\sigma}\right),
\end{align*}
and
$\mathbb{P}(P_{\hat{J}}=P_{2})  = 1- \mathbb{P}(P_{\hat{J}}=P_{1}) 
\approx \Phi(-\frac{\sqrt{{n}}{\mu}}{\sigma}).$
\end{proof}
It can be seen that as $n$ gets larger the selection procedure is more likely to select the correct model.

\subsection{Truncation points of \relpsi{}}
To study the performance of \relpsi{}, it is necessary to
characterize the truncation points in the polyhedral lemma (Theorem \ref{theorem:poly}). In the case
of two candidate models, the truncation points are simple as shown in Lemma \ref{lem:truncation}.
\begin{lemma}
\label{lem:truncation}
Consider two candidate models $P_1$ and $P_2$ and the selection algorithm described in Section
\ref{sec:selection}, with the test statistic
$\sqrt{n}[\hat{D}(P_2,R)-\hat{D}(P_1,R)]$. The
upper truncation point $\mathcal{V}^+$ and lower truncation point
$\mathcal{V}^-$ (see Theorem \ref{theorem:poly}) when the selection procedure
observes $\hat{D}(P_1,R) < \hat{D}(P_2,R)$, i.e., $P_{\hat{J}} = P_1$, are
$$
\mathcal{V}^- =0,\ \mathcal{V}^+ =\infty.
$$
When the selection procedure observes $\hat{D}(P_2,R) < \hat{D}(P_1,R)$, i.e., $P_{\hat{J}} = P_2$, then the truncation points are 
$$
\mathcal{V}^- =-\infty, \ \mathcal{V}^+ =0.
$$ 
\end{lemma}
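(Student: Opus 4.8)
The plan is to read off the claim as a direct specialization of the Polyhedral Lemma (Theorem~\ref{theorem:poly}) to $l=2$, exploiting the fact that in this case the matrix $\bm{A}$ describing the selection event is collinear with the test direction $\bm\eta$. Write $\bm{z}=\sqrt{n}[\hat{D}(P_1,R),\hat{D}(P_2,R)]^\top$ and let $\bm\eta=[-1,1]^\top$, so that $\bm\eta^\top\bm{z}=\sqrt{n}[\hat{D}(P_2,R)-\hat{D}(P_1,R)]$ is the test statistic in the lemma. Following Section~\ref{sec:selection}, the selection event $\{P_{\hat{J}}=P_1\}$ is exactly $\{\hat{D}(P_1,R)\le\hat{D}(P_2,R)\}=\{\bm{A}\bm{z}\le\bm{0}\}$ with the one-row matrix $\bm{A}=[1,-1]$ and $\bm{b}=\bm{0}$; note $\bm{A}=-\bm\eta^\top$. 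Symmetrically, $\{P_{\hat{J}}=P_2\}=\{\bm{A}\bm{z}\le\bm{0}\}$ with $\bm{A}=[-1,1]=\bm\eta^\top$ and $\bm{b}=\bm{0}$.

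Next I would evaluate $\bm\alpha=\bm{A}\bm\Sigma\bm\eta/(\bm\eta^\top\bm\Sigma\bm\eta)$ from Theorem~\ref{theorem:poly}. Since $\bm{A}=\pm\bm\eta^\top$ and $\bm\eta^\top\bm\Sigma\bm\eta\neq 0$ (as $\bm\Sigma$ is assumed positive definite), $\bm\alpha=\pm 1$ is a scalar. In the case $P_{\hat{J}}=P_1$ we have $\bm\alpha=-1<0$, hence $\{j:\bm\alpha_j>0\}$ is empty and $\{j:\bm\alpha_j<0\}=\{1\}$; since $(\bm{A}\bm{z})_1=-\bm\eta^\top\bm{z}$ and $\bm{b}_1=0$, the formula of Theorem~\ref{theorem:poly} gives $\mathcal{V}^-=\frac{0-(-\bm\eta^\top\bm{z})}{-1}+\bm\eta^\top\bm{z}=0$, while $\mathcal{V}^+$, being a minimum over the empty index set, equals $+\infty$. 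In the case $P_{\hat{J}}=P_2$ we have $\bm\alpha=1>0$, so $\{j:\bm\alpha_j<0\}$ is empty and $\{j:\bm\alpha_j>0\}=\{1\}$; now $(\bm{A}\bm{z})_1=\bm\eta^\top\bm{z}$, so $\mathcal{V}^+=\frac{0-\bm\eta^\top\bm{z}}{1}+\bm\eta^\top\bm{z}=0$ and $\mathcal{V}^-=-\infty$ (a maximum over the empty set). This gives precisely the two pairs of truncation points claimed.

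I do not expect a genuine obstacle: the computation is short, and the only care needed is the sign bookkeeping---it is $\bm{A}=-\bm\eta^\top$ (not $+\bm\eta^\top$) when $P_1$ is selected, which is what sends the truncation to $[0,\infty)$ rather than $(-\infty,0]$---together with the usual convention that $\max$ (resp.\ $\min$) over an empty index set is $-\infty$ (resp.\ $+\infty$). Conceptually, the reason the truncation points are so simple is that for $l=2$ the selection direction and the test direction coincide up to sign: conditioning on $P_{\hat{J}}=P_1$ is literally conditioning on $\bm\eta^\top\bm{z}\ge 0$, so the conditional law of the statistic is a normal truncated to $[0,\infty)$, and dually for $P_{\hat{J}}=P_2$; one may also simply state this as the proof without going through $\bm\alpha$.
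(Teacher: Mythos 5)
Your proposal is correct and follows essentially the same route as the paper: identify the selection event as a one-row affine constraint $\bm{A}\bm{z}\le\bm{0}$ with $\bm{A}=\mp\bm\eta^\top$ and read off $\mathcal{V}^\pm$ from Theorem~\ref{theorem:poly}; the paper simply states ``it follows from the definitions'' where you explicitly compute $\bm\alpha=\pm1$ and evaluate the max/min (with the empty-set convention). If anything, your bookkeeping in the second case---keeping $\bm\eta=[-1,1]^\top$ fixed so that $\bm{A}=\bm\eta^\top$ and the truncation becomes $(-\infty,0]$---is the cleaner reading, since the paper's own proof momentarily flips the statistic to $\sqrt{n}[\hat{D}(P_1,R)-\hat{D}(P_2,R)]$ in that case while still reporting the truncation points relative to the statistic fixed in the lemma statement.
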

\begin{proof}
    If the selection procedure observes that $\hat{D}(P_1,R) < \hat{D}(P_2,R)$
    then $\hat{J}=1$ and our test statistic is
    $\sqrt{n}[\hat{D}(P_2,R)-\hat{D}(P_{\hat{J}},R)]= \sqrt{n}[\hat{D}(P_2,R)-\hat{D}(P_1,R)]= \bm\eta^\top\bm{z}$ 
    where $\bm\eta=\begin{pmatrix}
            -1 & 1
            \end{pmatrix}^\top$ and $\bm{z} = \sqrt{n}\begin{pmatrix}
            \hat{D}(P_1,R) \\
            \hat{D}(P_2,R)
            \end{pmatrix}$.
    Then the affine selection event can be written as $\bm{Az} \le \bm{b}$ where $\bm{A} =
        \begin{pmatrix}
            1 & -1
        \end{pmatrix}$
    and $\bm{b} = 0$.
    It follows from the definition of $\mathcal{V}^+$ and $\mathcal{V}^-$ (see Theorem \ref{theorem:poly}) that we have
    $\mathcal{V}^-=0$ and $\mathcal{V}^+=\infty$.
    
    A similar result holds for the case where the selection event observes
    $\hat{D}(P_2,R) < \hat{D}(P_1,R)$ (i.e., $\hat{J}=2$). The test statistic is 
    $\sqrt{n}[\hat{D}(P_1,R)-\hat{D}(P_2,R)]$. 
    The selection event can be
    described with 
    $\bm{A} =
        \begin{pmatrix}
            -1 & 1
        \end{pmatrix}$
    and $\bm{b} = 0$.
    Following from their definitions, we have $\mathcal{V}^-=-\infty$ and $\mathcal{V}^+=0$.
\end{proof}
%
\subsection{Test threshold}
Given a significance level $\alpha\in(0,1)$, the test threshold is defined to the $(1-\alpha)$-quantile
of the truncated normal for RelPSI, and normal for RelMulti.
The test threshold of the \relpsi{} is 
\begin{align*}
t^{\text{RelPSI}}(\alpha) & =\Psi^{-1}(1-\alpha\,|\,\mu=0,\sigma,\mathcal{V}^-, \mathcal{V}^+)\\
 & =\mu+\sigma\Phi^{-1}\bigg((1-\alpha)\Phi\left(\frac{{\mathcal{V}^+-\mu}}{\sigma}\right)+\alpha\Phi\left(\frac{{\mathcal{V}^--\mu}}{\sigma}\right)\bigg)\\
 & =\sigma\Phi^{-1}\bigg((1-\alpha)\Phi\left(\frac{{\mathcal{V}^+}}{\sigma}\right)+\alpha\Phi\left(\frac{{\mathcal{V}^-}}{\sigma}\right)\bigg),
\end{align*}
where $\Psi^{-1}(\cdot\,|\,\mu,\sigma,\mathcal{V}^-,\mathcal{V}^+)$ is the inverse of the
CDF of the truncated normal with mean $\mu$, standard deviation $\sigma$,
and lower and upper truncation points denoted $\mathcal{V}^-, \mathcal{V}^+$, and $\Phi^{-1}$ is the inverse of the CDF of the standard normal distribution. Note that under the
null hypothesis, $\mu \le 0$ (recall $\mu:= D(P_2,R) - D(P_1,R)$), we set $\mu = 0$ which results in a more conservative 
test for rejecting the null hypothesis.
Furthermore, we generally do not know $\sigma$; instead we use its plug-in estimator $\hat{{\sigma}}$.

Given two candidate models $P_{1}$ and $P_{2}$, the truncation
points $(\mathcal{V}^-, \mathcal{V}^+)$ are either (see Lemma \ref{lem:truncation}):
\begin{itemize}
    \item Case 1: $\mathcal{V}^-=0,\ \mathcal{V}^+=\infty$, or
    \item Case 2: $\mathcal{V}^-=-\infty,\ \mathcal{V}^+=0$.
\end{itemize}
The two cases result in different level-$\alpha$ rejection thresholds since the value of the rejection threshold is dependent on the truncation points.

For Case 1, the threshold is
\[
t^{\text{RelPSI}}_{1}(\alpha)=\hat{{\sigma}}\Phi^{-1}\left(1-\frac{{\alpha}}{2}\right).
\]
For Case 2, the threshold is
\[
t^{\text{RelPSI}}_{2}(\alpha)=\hat{{\sigma}}\Phi^{-1}\left(\frac{{1}}{2}-\frac{{\alpha}}{2}\right).
\]
Note that since $\Phi^{-1}(\cdot)$ is monotonically increasing, we have $t^{\text{RelPSI}}_{2}(\alpha)<0<t^{\text{RelPSI}}_{1}(\alpha)$.

For \relmul{}, the threshold is given by the $(1-\alpha)$-quantile
of the asymptotic null distribution which is a normal distribution (with the mean $\mu$ adjusted to 0):
\begin{align*}
t^{\text{RelMulti}}(\alpha) & = \hat{{\sigma}}\Phi^{-1}(1-\alpha).
\end{align*}
\subsection{Rejection probability}
Consider the test statistic $\sqrt{n}\hat{\mu} :=
\sqrt{{n}}\big[\hat{D}(P_{2},R)-\hat{D}(P_{1},R)\big]$. 

\paragraph{\relpsi{}}
Depending on whether $\hat{J}=1$ or $\hat{J}=2$, 
the rejection probability for \relpsi{} is
given by
\begin{align*}
    & \mathbb{P}(\sqrt{{n}} \hat{\mu} >t^{\text{RelPSI}}_1(\alpha)\,|\,P_{\hat{J}}
= P_1)  \text{ or} \\
& \mathbb{P}(\sqrt{{n}} \hat{\mu}>t^{\text{RelPSI}}_2(\alpha)\,|\,P_{\hat{J}}
= P_2).
\end{align*}

Assume $n$ is sufficiently large. The rejection probability of \relpsi{} when
$P_{\hat{J}} = P_1$ is 
\begin{align*}
\mathbb{P}(\sqrt{{n}}\hat{\mu}>t^{\text{RelPSI}}_1(\alpha)\,|\,P_{\hat{J}}=P_{1}) & \approx1-\frac{\Phi(\frac{\hat{{\sigma}}\Phi^{-1}(1-\frac{{\alpha}}{2})-{\sqrt{{n}}\mu}}{\sigma})-\Phi(\frac{{-\sqrt{{n}}\mu}}{\sigma})}{1-\Phi(\frac{{-\sqrt{{n}}\mu}}{\sigma})}\\
 & \stackrel{(*)}{\approx} 1-\frac{\Phi(\Phi^{-1}(1-\frac{{\alpha}}{2})-\frac{{\sqrt{{n}}\mu}}{\sigma})-\Phi(\frac{{-\sqrt{{n}}\mu}}{\sigma})}{1-\Phi(\frac{{-\sqrt{{n}}\mu}}{\sigma})}. \stepcounter{equation}\tag{\theequation}\label{eq:relpsi_rej_correct}
\end{align*}
When $P_{\hat{J}} = P_2$, it is
\begin{align*}
\mathbb{P}(\sqrt{{n}}\hat{\mu}>t^{\text{RelPSI}}_2(\alpha)\,|\,P_{\hat{J}}=P_{2}) & \approx1-\frac{\Phi(\frac{\hat{{\sigma}}\Phi^{-1}(\frac{{1}}{2}-\frac{{\alpha}}{2})-\sqrt{{n}}{\mu}}{\sigma})}{\Phi(\frac{-{\sqrt{{n}}\mu}}{\sigma})}\\
 & \stackrel{(*)}{\approx}1-\frac{\Phi(\Phi^{-1}(\frac{{1}}{2}-\frac{{\alpha}}{2})-\frac{\sqrt{{n}}{\mu}}{\sigma})}{\Phi(\frac{-{\sqrt{{n}}\mu}}{\sigma})}. \stepcounter{equation}\tag{\theequation}\label{eq:relpsi_rej_incorrect}
\end{align*}
\paragraph{\relmul}
 For RelMulti, it is
 $\mathbb{P}(\sqrt{{n}} \hat{\mu} >t^{\text{RelMulti}}(\alpha))$.
The rejection probability of RelMulti is
\begin{align*}
\mathbb{P}(\sqrt{{n}}\hat{\mu}>t^{\text{RelMulti}}(\alpha)) & \approx1-\Phi(\frac{{\hat{{\sigma}}\Phi^{-1}(1-\alpha)-\sqrt{{n}}\mu}}{\sigma})\\
& \stackrel{(*)}{\approx}1 - \Phi\left(\Phi^{-1}(1-\alpha)-\frac{{\sqrt{{n}}\mu}}{\sigma} \right),
 \stepcounter{equation}\tag{\theequation}
 \label{eq:relmulti_rej}
\end{align*}
where we use the fact that $\sqrt{n}(\hat{\mu} - \mu)  \stackrel{d}{\to}
\mathcal{N}(0,\sigma^2)$. 
We note that at $(*)$ we use the fact that as $n \to \infty$, $\hat{\sigma}$
converges to $\sigma$ in probability.
%
\subsection{True positive rates of \relpsi{} and \relmul{}}
For the remainder of the section, we assume without loss of generality that
$D(P_{1},R)<D(P_{2},R)$, i.e., $P_1$ is the better model, so we have $\mu = D(P_2, R) - D(P_1, R)>0$.

\paragraph{\relpsi}
The TPR (for \relpsi{}) is given by
\begin{align*}
\text{{TPR}}_{\text{RelPSI}} & =\mathbb{E}\Bigg[\frac{\text{Number of True Positives assigned Positive}}{\underbrace{{\text{Number of True Positives}}}_{=1}}\Bigg]\\
& \stackrel{(a)}{=} \mathbb{P}(\text{decide that } P_2 \text{ is worse}) \\
& = \mathbb{P}(\text{decide that } P_2 \text{ is worse} \mid P_1 \text{ is
selected} )  \mathbb{P}(P_1 \text{ is selected}) \\ 
& \phantom{=} +\mathbb{P}(\text{decide that } P_2
\text{ is worse} \mid P_2 \text{ is selected} )  \mathbb{P}(P_2 \text{ is
selected})\\
& \stackrel{(b)}{=} \mathbb{P}(\text{decide that } P_2 \text{ is worse} \mid P_1 \text{ is
selected} )  \mathbb{P}(P_1 \text{ is selected}) \\ 
& =\mathbb{P}(\text{Reject }H_{0}:D(P_{1},R)\ge D(P_{2},R)\,|\,P_{1}\ \text{is selected})\mathbb{P}(P_{1}\ \text{is selected}) \\
& =\mathbb{P}(\sqrt{n}\hat{\mu}>t^{\text{RelPSI}}_1({\alpha})\,|\,P_{\hat{J}}=P_{1})P(P_{\hat{J}}=P_{1}),
 \end{align*}
where we note that at $(a)$, deciding that $P_2$ is worse than $P_1$ is the
same as assigning positive to $P_2$. The equality at $(b)$ holds due to the
design of our procedure that only tests the selected reference against other
candidate models to decide whether they are worse than the reference model.  
By design, we will not test the selected reference model against itself.
So, $\mathbb{P}(\text{decide that } P_2 \text{ is worse} \mid P_2 \text{ is
selected} ) = 0$.
%
%
Using Equation \ref{eq:relpsi_rej_correct} and Lemma \ref{lemma:selection_event}, we have 
\begin{align*}
\text{{TPR}}_{\text{RelPSI}}
& \approx\Bigg[1-\frac{\Phi(\Phi^{-1}(1-\frac{{\alpha}}{2})-\frac{{\sqrt{{n}}\mu}}{\sigma})-\Phi(\frac{{-\sqrt{{n}}\mu}}{\sigma})}{1-\Phi(\frac{{-\sqrt{{n}}\mu}}{\sigma})}\Bigg]\Bigg[1-\Phi(-\frac{\sqrt{{n}}{\mu}}{\sigma})\Bigg]\\
& =1-\Phi(-\frac{\sqrt{{n}}{\mu}}{\sigma})-\Phi(\Phi^{-1}(1-\frac{{\alpha}}{2})-\frac{{\sqrt{{n}}\mu}}{\sigma})+\Phi(\frac{{-\sqrt{{n}}\mu}}{\sigma})\\
& =1-\Phi(\Phi^{-1}(1-\frac{{\alpha}}{2})-\frac{{\sqrt{{n}}\mu}}{\sigma}) \\
& =\Phi \left(\frac{{\sqrt{{n}}\mu}}{\sigma}-\Phi^{-1}(1-\frac{\alpha}{2}) \right).\stepcounter{equation}\tag{\theequation}\label{eq:tpr_psi}
\end{align*}

\paragraph{\relmul}
For \relmul{}, we perform data splitting to create independent sets of our data
for testing and selection. Suppose
we have $n$ samples and a proportion of samples to be used for testing $\rho\in(0,1)$, we have $m_{1}=\rho n$ samples used 
for testing and $m_{0}={n(1-\rho)}$ samples for selection. Then TPR for \relmul{} can be
derived as
\begin{align*}
\text{TPR}_{\text{RelMulti}}
&= \mathbb{P}(\text{decide that } P_2 \text{ is worse} \mid P_1 \text{ is
selected} )  \mathbb{P}(P_1 \text{ is selected}) \\ 
&= \mathbb{P}(\text{decide that } P_2 \text{ is worse} )  \mathbb{P}(P_1 \text{ is selected}) \\ 
&=\mathbb{P}(\sqrt{m_{1}}\hat{\mu}>t^{\text{RelMulti}}(\alpha))\mathbb{P}(\sqrt{m_{0}} \hat{\mu} >0).
\end{align*}

Using Equation \ref{eq:relmulti_rej} (with $n\rho$ samples) and Lemma
\ref{lemma:selection_event} (with $n(1-\rho)$ samples), we have
\begin{align*}
\text{TPR}_{\text{RelMulti}}
& \approx \Bigg[1-\Phi(\Phi^{-1}(1-\alpha)-\frac{\sqrt{n\rho}\mu}{\sigma})\Bigg]\Phi(\frac{\sqrt{n(1-\rho)}\mu}{\sigma})\stepcounter{equation}\tag{\theequation}
\label{eq:tpr_split}.
\end{align*}
%
We note that both $\mathrm{TPR}_\relpsi \to 1$ and $\mathrm{TPR}_\relmul \to 1$, as $n \to \infty$.
We are ready to prove Theorem \ref{theorem:tpr}. We first recall the theorem from the main text:
\tpr*
\begin{proof}
    \label{proof:tpr} 
    Assume without loss of generality that $D(P_1, R) < D(P_2,R)$, i.e., $P_1$ is the best model,
    and $\mu:= D(P_2,R) - D(P_1,R) > 0$ is the population difference of two
    discrepancy measures (which can be either \textrm{MMD} or \textrm{KSD}) and
    $\sigma$ is the standard deviation of our test statistic. 
    Since $n > N$, we have
    \begin{align*}
        \frac{{\sqrt{{n}}\mu}}{\sigma}(1-\sqrt{{\rho}}) & \ge\Phi^{-1}(1-\frac{{\alpha}}{2})\\
        \stackrel{(a)}{\implies} \frac{{\sqrt{{n}}\mu}}{\sigma}-\frac{{\sqrt{{n\rho}}\mu}}{\sigma} & \ge\Phi^{-1}(1-\frac{\alpha}{2})-\overbrace{{\Phi^{-1}(1-\alpha)}}^{\ge0}\\
        \equiv \frac{{\sqrt{{n}}\mu}}{\sigma}-\Phi^{-1}(1-\frac{\alpha}{2}) & \ge\frac{{\sqrt{{n\rho}}\mu}}{\sigma}-\Phi^{-1}(1-\alpha)\\
        \stackrel{(b)}{\implies} \Phi \left(\frac{{\sqrt{{n}}\mu}}{\sigma}-\Phi^{-1}(1-\frac{\alpha}{2}) \right) & \ge \Phi \left(\frac{{\sqrt{n{\rho}}\mu}}{\sigma}-\Phi^{-1}(1-\alpha) \right),
    \end{align*}
    where at $(a)$, we have $\Phi^{-1}(1-\alpha) \ge 0$ because $\alpha \in [0,
    1/2]$. At $(b)$, we use the fact that $a \mapsto \Phi(a)$ is increasing. We note that the left hand side is the same as Equation \eqref{eq:tpr_psi} and it follows that
    \begin{align*}
        \mathrm{TPR}_\relpsi
         & \gtrapprox \Phi \left(\frac{{\sqrt{n{\rho}}\mu}}{\sigma}-\Phi^{-1}(1-\alpha)\right) 
         \underbrace{{\Phi(\frac{{\sqrt{n(1-\rho)}\mu}}{\sigma})}}_{\in (0,1) }\\
         & = \Bigg[1-\Phi(\Phi^{-1}(1-\alpha)-\frac{{\sqrt{{n}\rho}\mu}}{\sigma})\Bigg]\Bigg[\Phi(\frac{{\sqrt{{n(1-\rho)}}\mu}}{\sigma})\Bigg]\\
         & \stackrel{(c)}{\gtrapprox} \mathrm{TPR}_\relmul, 
    \end{align*}
    where at $(c)$ we use Equation \eqref{eq:tpr_split}.
\end{proof}
\section{Test consistency}
In this section, we describe and prove the consistency result of our proposal
\relpsi{} and \relmul{} for both MMD and KSD.

\label{proof:consistency}
\mmdcons*
\begin{proof}
    Let $\hat{t}_\alpha$ and $t_\alpha$ be $(1-\alpha)$ quantiles of distributions
    $\mathcal{TN}(\bm{0}, \bm{\eta}^\top\hat{\bm\Sigma}\bm\eta, \mathcal{V}^-, \mathcal{V}^+)$
    and $\mathcal{TN}(\bm{0}, \bm{\eta}^\top \bm\Sigma\bm\eta, \mathcal{V}^-, \mathcal{V}^+ )$
    respectively. Given that $\hat{\bm\Sigma} \overset{p}{\rightarrow} \bm\Sigma$, 
    $\mathcal{TN}(\bm{0}, \bm{\eta^\top\hat\Sigma\eta}, \mathcal{V}^-, \mathcal{V}^+)$
    converges to $\mathcal{TN}(\bm{0}, \bm{\eta^\top\Sigma\eta}, \mathcal{V}^-, \mathcal{V}^+)$
    in probability, hence, $\hat{t}_\alpha$ converges to $t_\alpha$. Note that $\hat{t}_\alpha$ is random and is determined by which model is selected to be $P_{\hat{J}}$ (which changes the truncation points $\mathcal{V}^-$ and $\mathcal{V}^+$).
    
    Under $H_0:\bm{\eta}^\top\bm{\mu} \le 0\,|\,P_{\hat{J}} \text{ is selected}$, for some sufficiently large $n$ the rejection rate is 
    \begin{align*}
        \lim_{n\rightarrow \infty}\Prob_{H_0}(\bm{\eta}^\top\bm{z}>\hat{t}_\alpha)
&=\lim_{n\rightarrow \infty}\Prob_{H_0}(\bm{\eta}^\top\bm{z}>t^{\text{RelPSI}}_1({\alpha})\, |\, P_{\hat{J}} = P_1) \Prob(P_{\hat{J}}=P_1) \\
        &+ \lim_{n\rightarrow \infty}\Prob_{H_0}(\bm{\eta}^\top\bm{z}>t^{\text{RelPSI}}_2({\alpha}) | P_{\hat{J}} = P_2) \Prob(P_{\hat{J}}=P_2).
    \end{align*}
    Using Lemma \ref{lemma:selection_event} with Equation \ref{eq:relpsi_rej_correct} and Equation \ref{eq:relpsi_rej_incorrect}, we have
    \begin{align*}
        \lim_{n\rightarrow \infty}\Prob_{H_0}(\bm{\eta}^\top\bm{z}>\hat{t}_\alpha)
        &= 1-\lim_{n\rightarrow \infty}\Phi(\Phi^{-1}(1-\frac{{\alpha}}{2})-\frac{{\sqrt{{n}}\bm{\eta}^\top\bm{\mu} }}{\sigma})\\
        &+ \lim_{n\rightarrow \infty}\Phi(-\frac{\sqrt{{n}}\bm{\eta}^\top\bm{\mu} }{\sigma})-\lim_{n\rightarrow \infty}\Phi(\Phi^{-1}(\frac{{1}}{2}-\frac{{\alpha}}{2})-\frac{\sqrt{{n}}{\bm{\eta}^\top\bm{\mu}}}{\sigma}).\\
        &\le 1-\Phi(\Phi^{-1}(1-\frac{{\alpha}}{2}))\\
        &+ \frac{1}{2}-\Phi(\Phi^{-1}(\frac{{1}}{2}-\frac{{\alpha}}{2}))\\
        &\le \alpha.
    \end{align*}

    Under $H_1:\bm{\eta}^\top\bm{\mu} > 0\,|\,P_{\hat{J}} \text{ is selected}$, similarly to $H_0$ we have
    \begin{align*}
        \lim_{n\rightarrow \infty} \Prob_{H_1}(\bm{\eta}^\top\bm{z} > \hat{t}_\alpha)
        &= \lim_{n\rightarrow \infty} \Prob_{H_1}(\bm{\eta}^\top\bm{z}>\hat{t}_\alpha | P_{\hat{J}} = P_1) \Prob(P_{\hat{J}}=P_1) \\
        &+ \lim_{n\rightarrow \infty} \Prob_{H_1}(\bm{\eta}^\top\bm{z}>\hat{t}_\alpha | P_{\hat{J}} = P_2) \Prob(P_{\hat{J}}=P_2) \\
        &= 1-\lim_{n\rightarrow \infty}\Phi(\Phi^{-1}(1-\frac{{\alpha}}{2})-\frac{{\sqrt{{n}}\bm{\eta}^\top\bm{\mu}}}{\sigma})\\
        &+ \lim_{n\rightarrow \infty} \Phi(-\frac{\sqrt{{n}}\bm{\eta}^\top\bm{\mu}}{\sigma})-
        \lim_{n\rightarrow \infty}\Phi(\Phi^{-1}(\frac{{1}}{2}-\frac{{\alpha}}{2})-\frac{\sqrt{{n}}{\bm{\eta}^\top\bm{\mu}}}{\sigma}),
    \end{align*}
    where $\bm{\eta}^\top\bm{\mu}$ is the population difference of the two discrepancy measures,
    and 
    $\sigma$ the standard deviation.
    
    Since the alternative hypothesis is true, i.e., $\bm{\eta}^\top\bm{\mu} > 0$, we have
        $\lim_{n\rightarrow \infty} \Prob_{H_1}(\bm{\eta}^\top\bm{z} > \hat{t}_\alpha) = 1.$
\end{proof}
\begin{theorem}[Consistency of \textrm{RelPSI-KSD}]
\label{theorem:consksd}
    Given two models $P_1$, $P_2$ and reference distribution $R$ (which are all distinct). Let $\hat{\bm\Sigma}$ be the covariance matrix defined in Theorem \ref{theorem:ksd}
and $\bm\eta$ be defined such that $\bm{\eta}^\top\bm{z} = \sqrt{n}[\bighat{\ksd}^2_u(P_1,R) - \bighat{\ksd}^2_u(P_2,R)]$.
    Suppose that the threshold $\hat{t}_\alpha$ is the $(1-\alpha)$-quantile of the distribution
of $\mathcal{TN}(\bm{0}, \bm{\eta}^\top\hat{\bm\Sigma}\bm\eta, \mathcal{V}^-, \mathcal{V}^+)$ where
    $\mathcal{V}^+$ and $\mathcal{V}^-$ is defined in Theorem \ref{theorem:poly}.
Under
    $H_{0}: \bm{\eta}^\top\bm{\mu} \le 0 \,|\,P_{\hat{J}} \text{ is selected}$, the asymptotic type-I error is bounded above by $\alpha$.
    Under $H_{1}:\bm{\eta}^\top\bm{\mu} > 0\,|\,P_{\hat{J}} \text{ is selected}$, we have
    $\Prob(\bm{\eta}^\top\bm{z}> \hat{t}_\alpha) \rightarrow 1$ as $n \rightarrow \infty$.
\end{theorem}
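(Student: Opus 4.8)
The plan is to mirror the proof of Theorem~\ref{theorem:consistency_mmd} almost verbatim, with $\bighat{\mmd}^2_u$ replaced by $\bighat{\ksd}^2_u$ everywhere. The only place where the two arguments genuinely differ is the distributional input: the MMD proof relies on the joint asymptotic normality of the pair of discrepancy estimates together with a closed form for the limiting covariance, and for KSD this is exactly what Theorem~\ref{theorem:ksd} supplies. Since $P_1,P_2,R$ are assumed distinct, that theorem gives $\sqrt{n}\big[(\bighat{\ksd}^2_u(P_1,Z),\bighat{\ksd}^2_u(P_2,Z))-(\ksd^2(P_1,R),\ksd^2(P_2,R))\big]\xrightarrow{d}\mathcal{N}(\bm{0},\bm\Sigma)$ with $\bm\Sigma$ as displayed there, whence the scalar statistic $\bm\eta^\top\bm{z}=\sqrt{n}[\bighat{\ksd}^2_u(P_1,Z)-\bighat{\ksd}^2_u(P_2,Z)]$ satisfies $\sqrt{n}(\hat\mu-\mu)\xrightarrow{d}\mathcal{N}(0,\sigma^2)$ with $\mu=\bm\eta^\top\bm\mu$ and $\sigma^2=\bm\eta^\top\bm\Sigma\bm\eta$. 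This is precisely the hypothesis under which the selection probability of Lemma~\ref{lemma:selection_event} and the rejection-probability expressions \eqref{eq:relpsi_rej_correct} and \eqref{eq:relpsi_rej_incorrect} were derived, so all of them transfer directly to the KSD case.

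First I would establish that $\hat{t}_\alpha \overset{p}{\to} t_\alpha$. By Theorem~\ref{theorem:ksd} the plug-in $\hat{\bm\Sigma}$ is consistent, so $\hat\sigma^2=\bm\eta^\top\hat{\bm\Sigma}\bm\eta \overset{p}{\to}\sigma^2$; since the $(1-\alpha)$-quantile of a truncated normal is continuous in the variance parameter for fixed truncation points, and since by Lemma~\ref{lem:truncation} the truncation points $(\mathcal{V}^-,\mathcal{V}^+)$ in the $l=2$ case are one of the two deterministic pairs $(0,\infty)$ or $(-\infty,0)$ according to which model is selected, the convergence $\hat{t}_\alpha \overset{p}{\to} t_\alpha$ holds on each selection event. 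Concretely, on $\{P_{\hat{J}}=P_1\}$ the threshold is $t^{\text{RelPSI}}_1(\alpha)=\hat\sigma\Phi^{-1}(1-\tfrac{\alpha}{2})$ and on $\{P_{\hat{J}}=P_2\}$ it is $t^{\text{RelPSI}}_2(\alpha)=\hat\sigma\Phi^{-1}(\tfrac12-\tfrac{\alpha}{2})$, exactly as in the MMD proof.

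Next, for the type-I error I would condition on the selection event via the law of total probability, writing $\Prob_{H_0}(\bm\eta^\top\bm{z}>\hat{t}_\alpha)$ as $\Prob_{H_0}(\bm\eta^\top\bm{z}>t^{\text{RelPSI}}_1(\alpha)\mid P_{\hat{J}}=P_1)\,\Prob(P_{\hat{J}}=P_1)+\Prob_{H_0}(\bm\eta^\top\bm{z}>t^{\text{RelPSI}}_2(\alpha)\mid P_{\hat{J}}=P_2)\,\Prob(P_{\hat{J}}=P_2)$, substitute \eqref{eq:relpsi_rej_correct}, \eqref{eq:relpsi_rej_incorrect} and Lemma~\ref{lemma:selection_event}, and let $n\to\infty$. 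Using $\bm\eta^\top\bm\mu\le 0$ under $H_0$ (so that $-\sqrt{n}\,\bm\eta^\top\bm\mu/\sigma\ge 0$ and the relevant $\Phi(\cdot)$ terms are monotone in the needed direction), the same manipulation as in the proof of Theorem~\ref{theorem:consistency_mmd} bounds the limit by $1-\Phi(\Phi^{-1}(1-\tfrac{\alpha}{2}))+\tfrac12-\Phi(\Phi^{-1}(\tfrac12-\tfrac{\alpha}{2}))=\alpha$, with equality attained at the boundary $\bm\eta^\top\bm\mu=0$; hence the asymptotic selective type-I error is at most $\alpha$. For $H_1$, where $\bm\eta^\top\bm\mu>0$, the term $\sqrt{n}\,\bm\eta^\top\bm\mu/\sigma\to\infty$ forces $1-\Phi(\Phi^{-1}(1-\tfrac{\alpha}{2})-\sqrt{n}\,\bm\eta^\top\bm\mu/\sigma)\to 1$ while $\Phi(-\sqrt{n}\,\bm\eta^\top\bm\mu/\sigma)\to 0$ and $\Phi(\Phi^{-1}(\tfrac12-\tfrac{\alpha}{2})-\sqrt{n}\,\bm\eta^\top\bm\mu/\sigma)\to 0$, so $\Prob_{H_1}(\bm\eta^\top\bm{z}>\hat{t}_\alpha)\to 1$.

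There is essentially no hard step once Theorem~\ref{theorem:ksd} is available: the only point needing a little care is that $\hat{t}_\alpha$ is random through the realised reference index $\hat{J}$, which is handled cleanly by conditioning on the (at most two) selection events and invoking the deterministic truncation points of Lemma~\ref{lem:truncation} on each. The theorem then follows, and a complete write-out is a transcription of the proof of Theorem~\ref{theorem:consistency_mmd}, which is why it is omitted here.
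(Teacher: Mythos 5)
Your proposal is correct and matches the paper's intent exactly: the paper omits the proof of Theorem~\ref{theorem:consksd}, stating only that it "follows closely the proof of Theorem~\ref{theorem:consistency_mmd}", and your transcription — substituting Theorem~\ref{theorem:ksd} for the joint asymptotic normality and consistent covariance of the KSD estimates, then reusing Lemma~\ref{lemma:selection_event}, Lemma~\ref{lem:truncation}, and the rejection-probability expressions — is precisely that argument. The limit bound $1-\Phi(\Phi^{-1}(1-\tfrac{\alpha}{2}))+\tfrac12-\Phi(\Phi^{-1}(\tfrac12-\tfrac{\alpha}{2}))=\alpha$ and the divergence $\sqrt{n}\,\bm\eta^\top\bm\mu/\sigma\to\infty$ under $H_1$ are handled correctly.
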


\section{Additional experiments}
\label{sec:more_experiments}
In this section, we show results of two experiments. The first
investigates the behaviour of \textrm{RelPSI} and \relmul{} for multiple candidate models; and the second focuses on empirically verifying the implication of Theorem \ref{theorem:tpr}.
\subsection{Multiple candidate models experiment}
\label{sec:appen_mm}
In the following experiments, we demonstrate our proposal for synthetic problems when there are more than two 
candidate models and report the empirical true positive rate $\widehat{\mathrm{TPR}}$, empirical false discovery rate $\widehat{\mathrm{FDR}}$, and empirical false positive rate $\widehat{\mathrm{FPR}}$.
We consider the following problems:
\begin{enumerate}
    \item{\textit{Mean shift $l = 10$:} There are many candidate models that are equally good.
    We set nine models to be just as good, compared to the reference $R = \mathcal{N}(\bm{0},\bm{I})$, with one model that is worse than all of them. To be specific, the set of equally good candidates are defined as $\mathcal{I}_-=\{\mathcal{N}(\bm\mu_i, \bm{I}):
    \bm\mu_i \in \{[0.5,0,\ldots,0],[-0.5,0,\ldots,0],[0, 0.5,\ldots,0],[0,-0.5,\ldots,0], \ldots\} \}$
    and for the worst model, we have $Q = \mathcal{N}([1,0,\ldots,0], \bm{I})$. Our candidate model list is defined as 
    $\mathcal{M} = I_- \cup \{Q\}$. Each model is defined on $\mathbb{R}^{10}$.}
    \item{\textit{Restricted Boltzmann Machine (RBM) $l=7$:} This experiment is similar to Experiment 1.
    Each candidate model is a Gaussian Restricted Boltzmann Machine with different perturbations of the unknown RBM parameters (which generates our unknown distribution $R$). We show how the behaviour of our proposed test vary with the degree
        of perturbation $\epsilon$ of a single model while the rest of the candidate models remain the same. The perturbation changes the model from the best to worse than the best. Specifically, we have $\epsilon \in \{ 0.18, 0.19, 0.20, 0.22 \}$ and
        the rest of the six models have fixed perturbation of
        $\{0.2, 0.3, 0.35, 0.4, 0.45, 0.5\}$.
        This problem demonstrates the sensitivity of each test.}
\end{enumerate}
The results from the mean shift experiment are shown in Figure \ref{fig:appen_ex_ms} 
and results from RBM experiment are shown in Figure \ref{fig:appen_ex_rbm}.
Both experiments show that $\widehat{\mathrm{FPR}}$ and $\widehat{\mathrm{FDR}}$ is controlled for \relpsi\ and \relmul\ respectively.
As before, KSD-based tests exhibits the highest $\widehat{\mathrm{TPR}}$ in the RBM experiment. In the mean shift example, \relpsi{} has lower $\widehat{\mathrm{TPR}}$ compared with \relmul{} and is an example where condition
on the selection event results in a lower $\widehat{\mathrm{TPR}}$ (lower than data splitting). In both experiments, for \relmul{} $50\%$ of the data is used for selection and $50\%$ for testing.
\begin{figure}[ht]
    \centering
    \includegraphics[width=\textwidth]{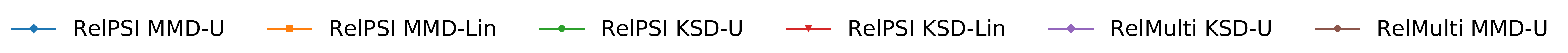}
    \subfloat[Empirical FDR]{
    \includegraphics[width=.32\textwidth]{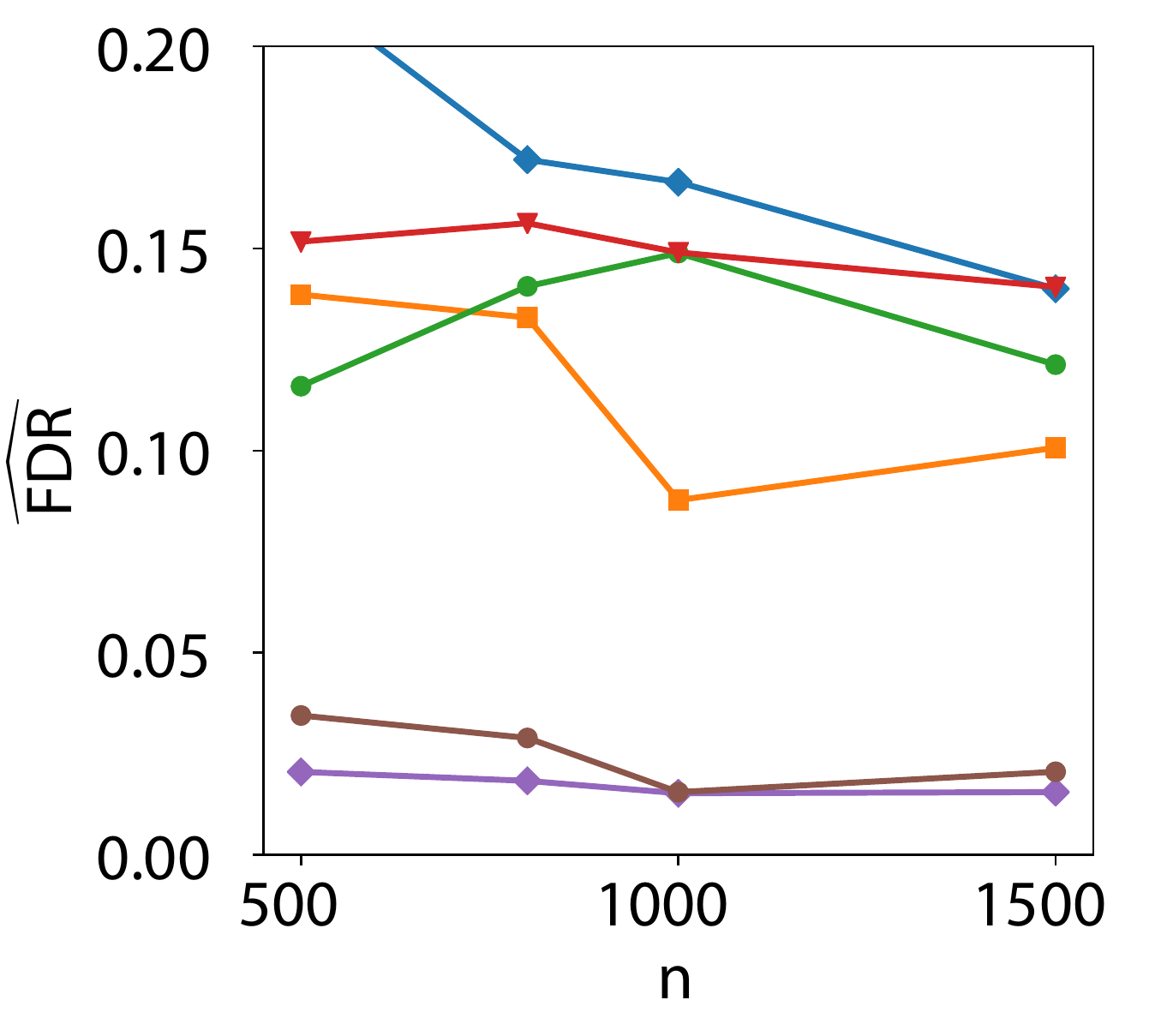}}
        \subfloat[Empirical TPR]{
        \includegraphics[width=.33\textwidth]{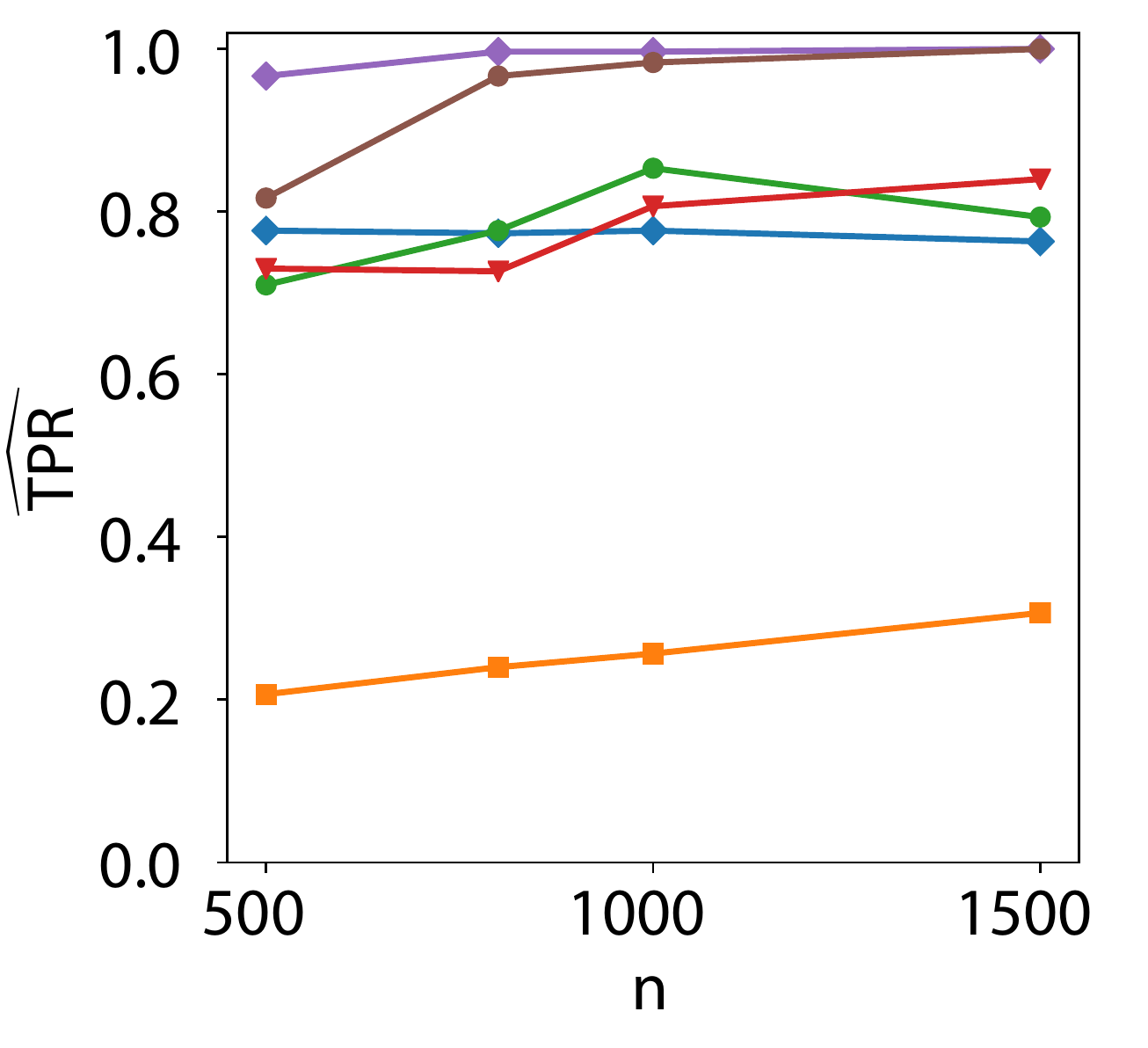}}
        \subfloat[Empirical FPR]{
        \includegraphics[width=.33\textwidth]{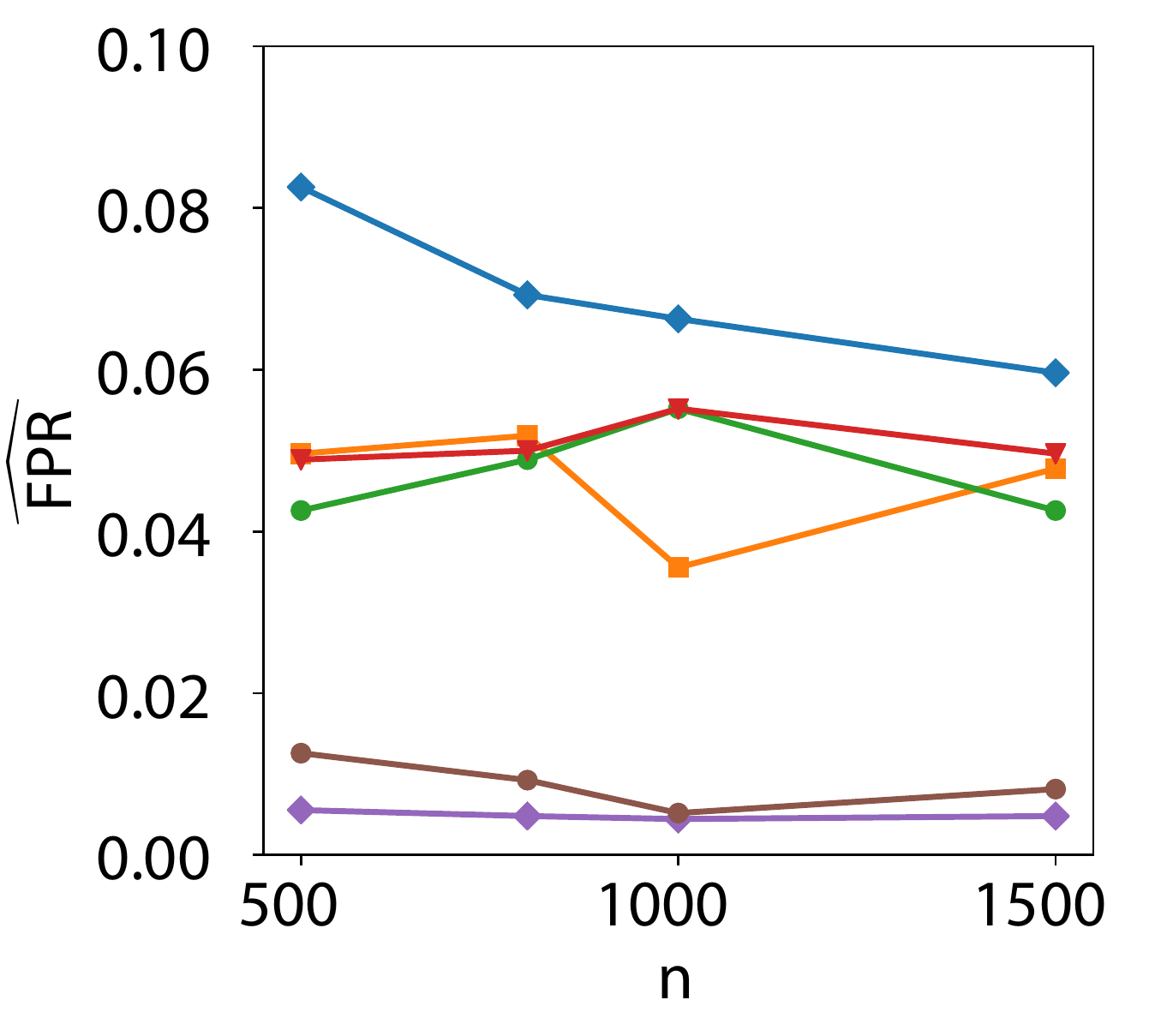}}
    \caption{Mean Shift Experiment: Rejection rates (estimated from 300 trials) for the six tests with $\alpha = 0.05$ is shown.}
    \label{fig:appen_ex_ms}
\end{figure}
\begin{figure}[ht]
    \centering
    \includegraphics[width=\textwidth]{graphs/ex3/blobs_hlegend.pdf}
    \subfloat[Empirical FDR]{
        \includegraphics[width=.32\textwidth]{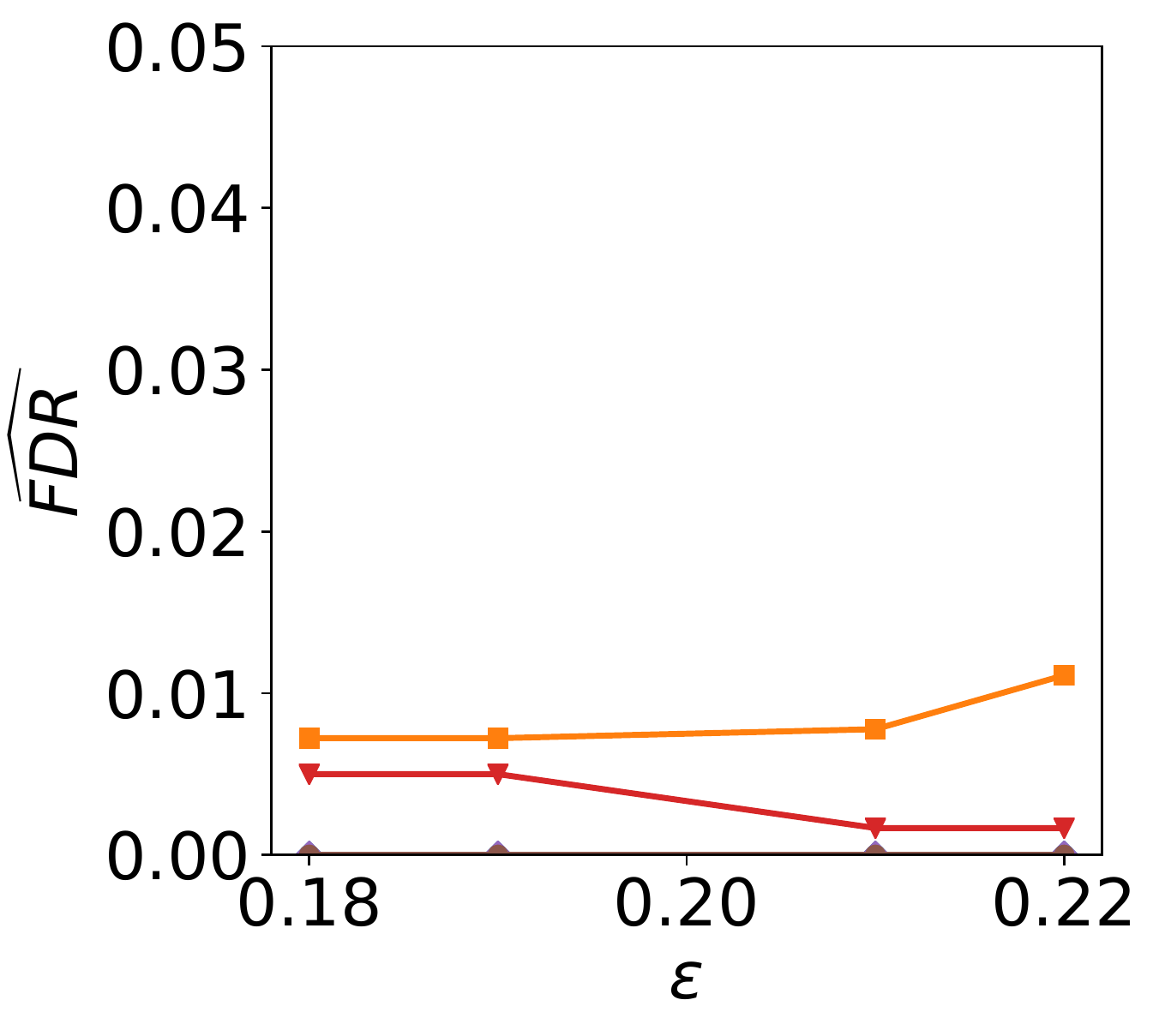}}
        \subfloat[Empirical TPR]{
        \includegraphics[width=.33\textwidth]{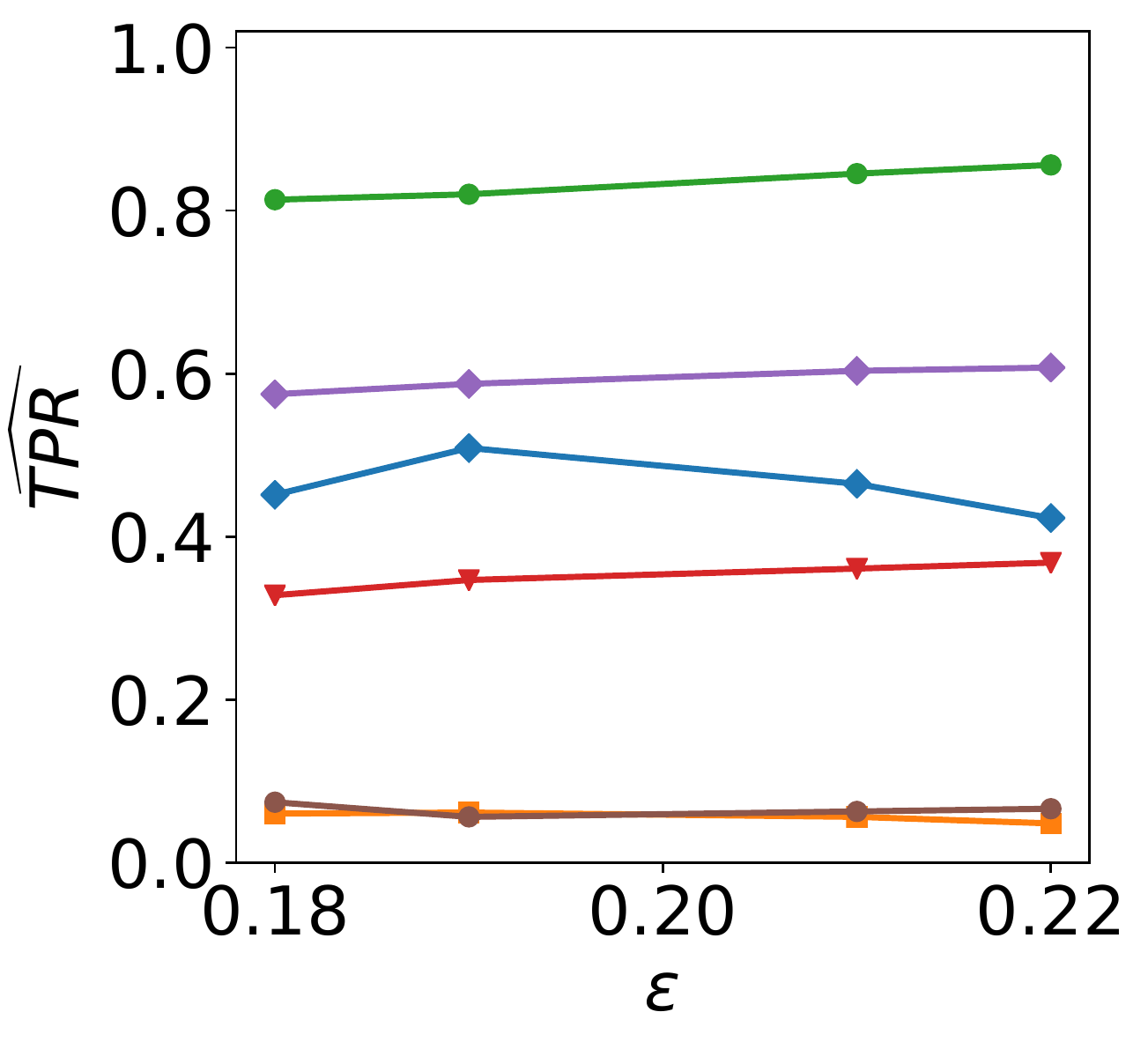}}
        \subfloat[Empirical FPR]{
        \includegraphics[width=.33\textwidth]{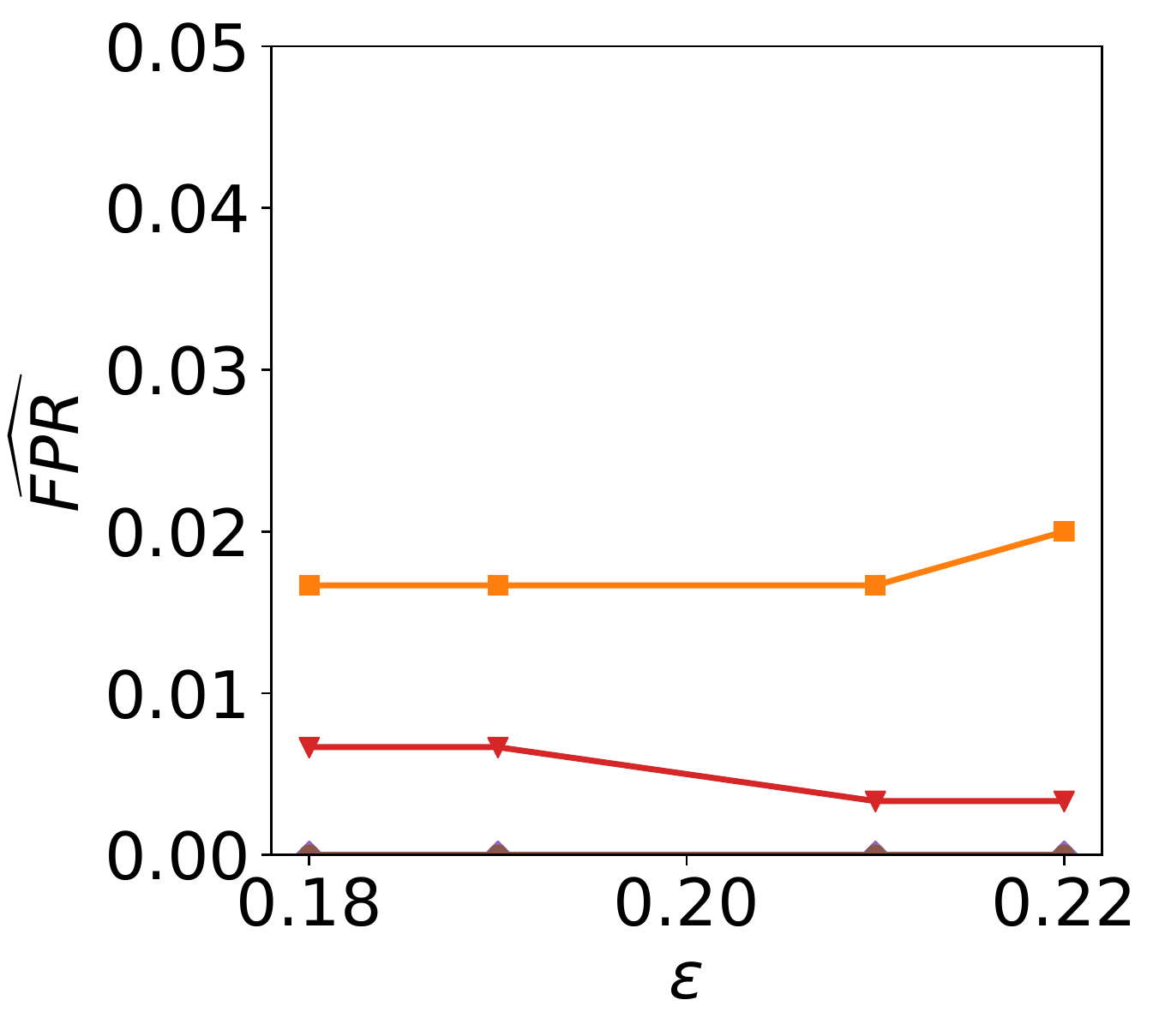}}
    \caption{RBM Experiment. Rejection rates (estimated from 300 trials) for the six tests with $\alpha = 0.05$ is shown.}
    \label{fig:appen_ex_rbm}
\end{figure}
\begin{figure}[ht]
   \centering
   \includegraphics[width=0.75\textwidth]{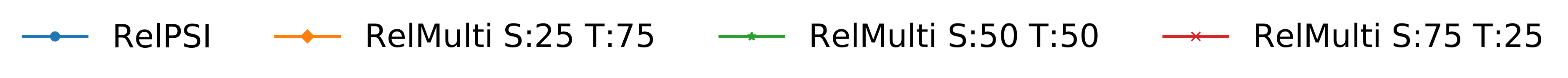}
  \subfloat[$\widehat{\mathrm{TPR}}$ of $\widehat{\mathrm{MMD}}^2_u$\label{fig:ex2_mmd}]{
        \includegraphics[width=.25\textwidth]{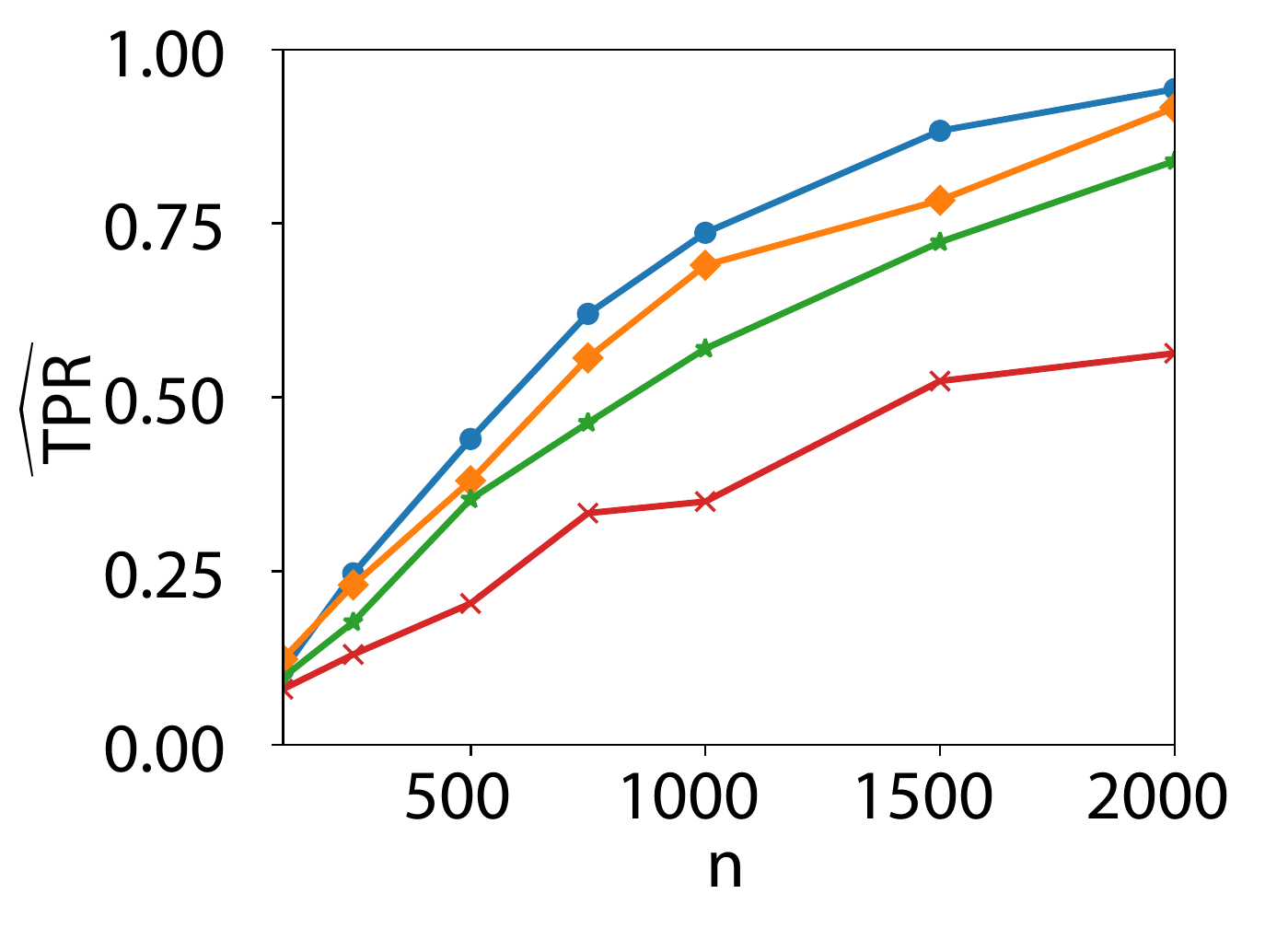}}
  \subfloat[Mixture Problem\label{fig:ex2_mix_vs}]{
       \includegraphics[width=.25\textwidth]{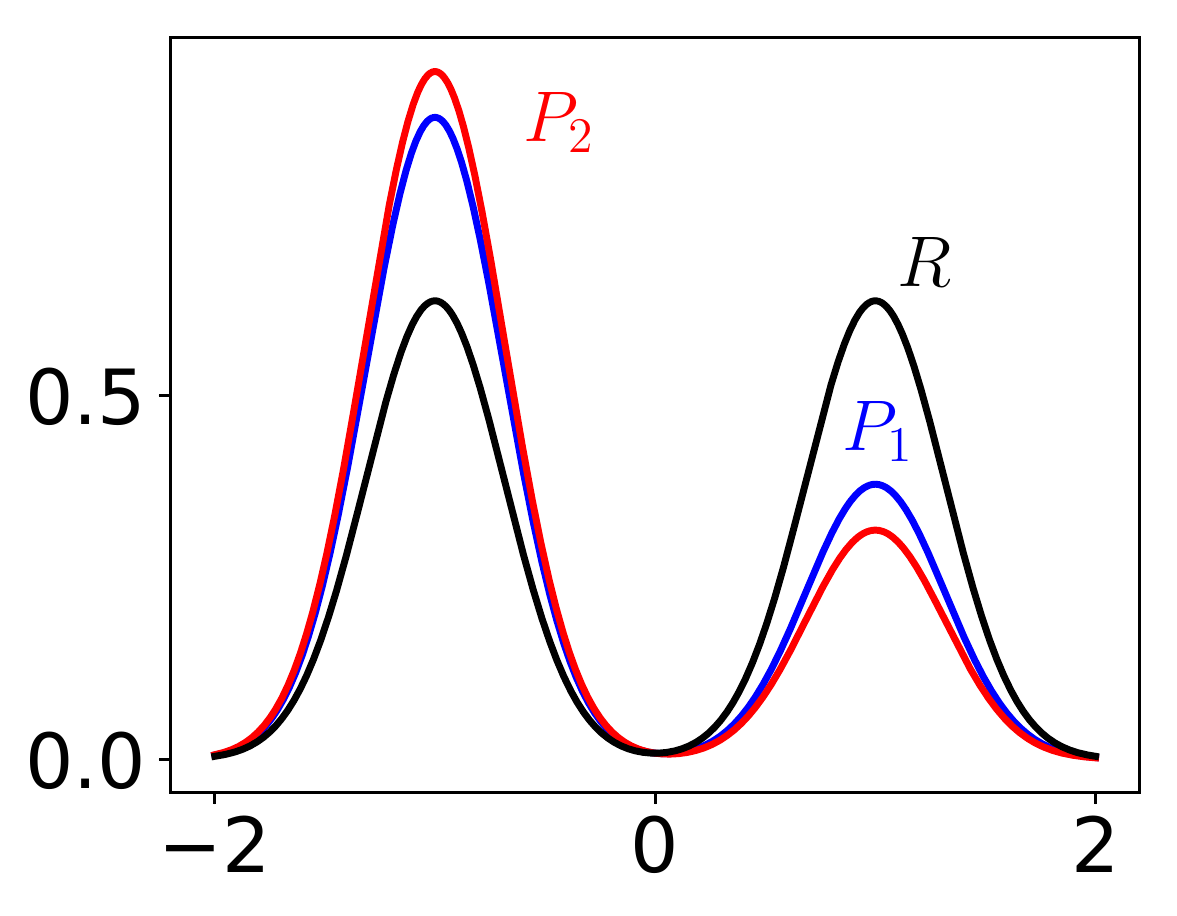}}
  \subfloat[$\widehat{\mathrm{TPR}}$ of $\widehat{\mathrm{KSD}}^2_u$\label{fig:ex2_ksd}]{
       \includegraphics[width=.25\textwidth]{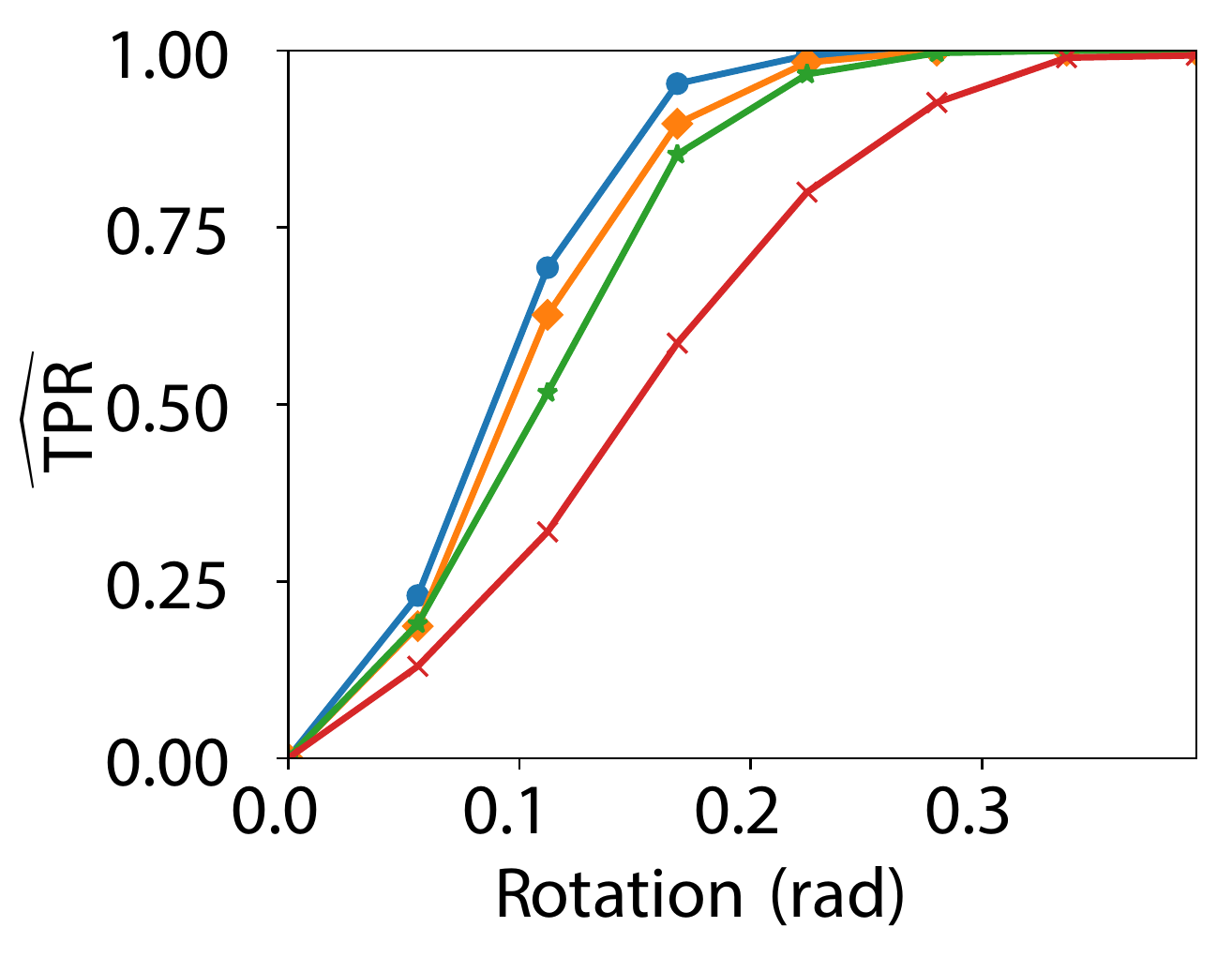}}
   \subfloat[Rotation Problem\label{fig:ex2_rot_vs}]{
       \includegraphics[width=.25\textwidth]{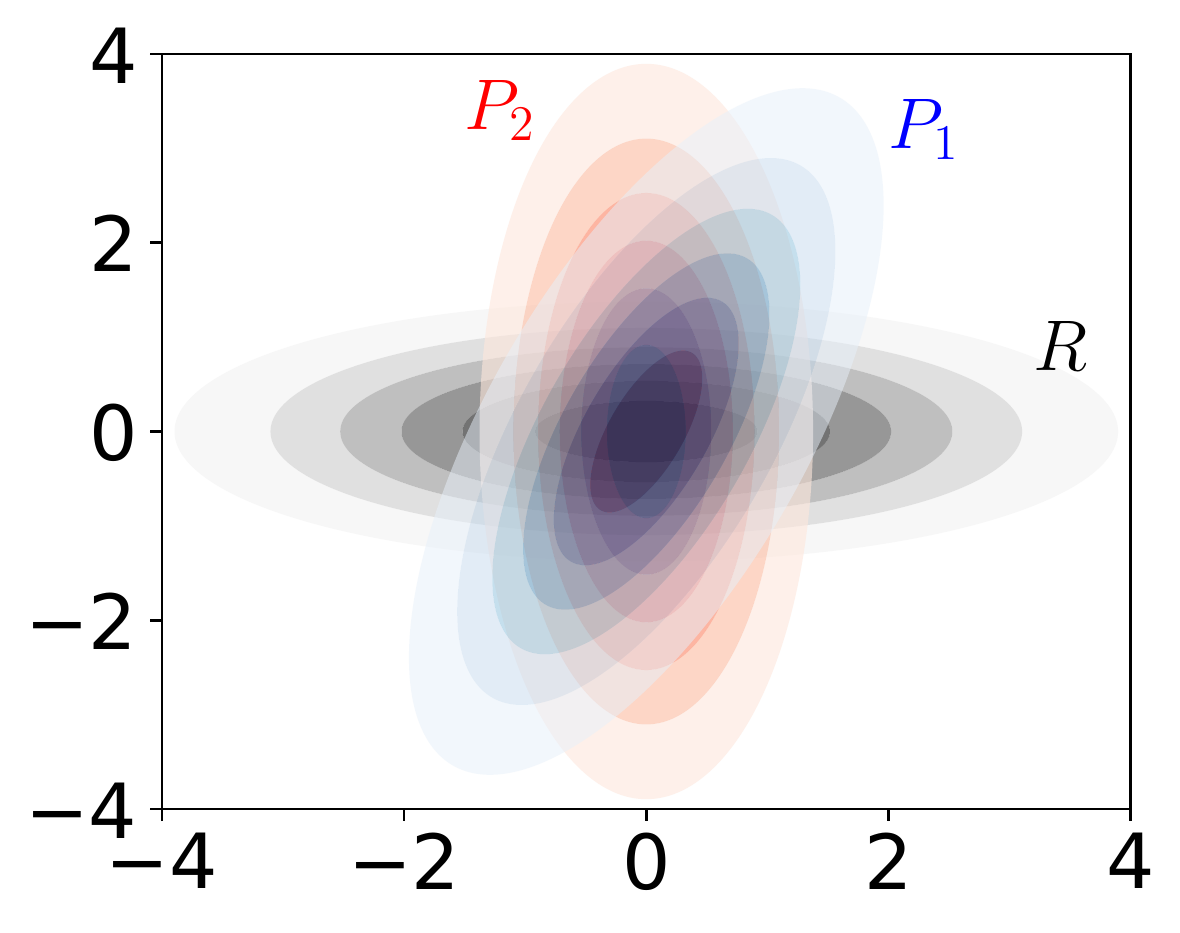}}
    \caption{$l=2$: The empirical true positive rate $\widehat{\mathrm{TPR}}$ of $\widehat{\mathrm{MMD}}^2_u$ (a) for the mixture problem of (b). We show empirical $\widehat{\mathrm{TPR}}$ of $\widehat{\mathrm{KSD}}^2_u$ (c) for rotation problem of (d). S:$a\%$ T:$b\%$ indicates that $a\%$ of the original dataset is used for selection and $b\%$ of the dataset used for testing.}
    \label{fig:ex2_tpr}
\end{figure}

\subsection{TPR experiment}
\label{sec:appen_tpr}
For this experiment, our goal is to empirically
evaluate and validate Theorem \ref{theorem:tpr} where $l=2$. For some sufficiently large
$n$, it states that the \textrm{TPR} of \textrm{RelPSI} will be an upper bound for the \textrm{TPR}
of \relmul{} (for both \textrm{MMD} and \textrm{KSD}). We consider the following two
synthetic problems:
\begin{enumerate}
    \item{\textit{Mixture of Gaussian}:
The candidate models and unknown distribution are $1$-d mixture of Gaussians where $M(\rho) = \rho\mathcal{N}(1,1) +
(1-\rho)\mathcal{N}(-1,1)$ with mixing portion $\rho \in (0,1)$. We set the reference to be
$R=M(0.5)$, and two candidate models to $P_1=M(0.7)$ and $P_2=M(0.75)$
(see Figure \ref{fig:ex2_mix_vs}). In this case, $P_1$ is closer to the reference distribution but only by a small amount. In this problem, we apply MMD and report the behaviour of the test as $n$ increases.}

    \item{\textit{Rotating Gaussian:} The two candidate models and our reference distributions are $2$-d Gaussian distributions that differ by rotation (see Figure \ref{fig:ex2_rot_vs}). We fix the sample size to $n=500$. Instead, we
rotate the Gaussian distribution $P_1$ away from $P_2$ such that $P_1$ continues to get closer to 
the reference $R$ with each rotation. They are initially the same distribution but $P_1$ becomes a closer relative fit (with each rotation). In this problem, we apply KSD and report the empirical TPR as the Gaussian rotates and becomes an easier problem.}
\end{enumerate}

For each problem we consider three possible splits of the
data: $25\%$, $50\%$, $25\%$ of the original samples for selection (and the rest for testing). Both problems use a Gaussian kernel with bandwidth set to $1$. The overall results are shown in Figure \ref{fig:ex2_tpr}.

 In Figure \ref{fig:ex2_mmd}, we plot the $\widehat{\mathrm{TPR}}$ for \textrm{RelPSI-MMD} and \relmulmmd{} for the Mixture of Gaussian problem. \textrm{RelPSI} performs the best with the highest empirical $\widehat{\mathrm{TPR}}$ confirming with Theorem \ref{theorem:tpr}. The next highest is \relmul{} that performs a S:$25\%$ T:$75\%$ selection test split. The worst performer is the \relmul{} with S:$75\%$ T:$25\%$ selection test split which can be explained by noting that most of the data has been used in selection, there is an insufficient amount of remaining data points to reject the hypothesis. The same behaviour can be observed in Figure \ref{fig:ex2_ksd} for $\widehat{\ksd}^2_u$. Overall, this experiment corroborates with our theoretical results that TPR of RelPSI will be higher in population.

\end{document}